\setlist[itemize]{itemsep=0.0cm}
\newtheorem{thm}{Theorem}[section]
\newtheorem{cor}{Corollary}[section]
\newtheorem{lem}{Lemma}[section]
\newtheorem{cond}{Condition}[section]
\newtheorem{rem}{Remark}[section]
\newtheorem{myexample}{Example}[section]
\numberwithin{equation}{section}
\def\1{\mathbf{1}}
\def\I{\mathbb{1}}
\def\asto{\overset{a.s.}{\to}}
\def\e{e}
\def\fe{\phi}
\def\k{\kappa}
\def\re{\Re}
\def\rn{\Re^n}
\def\tr{\top}
 \def\va{v_{\text{\tiny TD}}}
\def\C{C}
\def\E{\mathbb{E}}
\def\F{\mathcal{F}}
\def\L{\mathcal{L}}
\def\M{\mathcal{M}}
\def\Pr{\mathbf{P}}
\def\S{\mathcal{S}}
\def\P{P}
\def\Gm{\Gamma}
\newcommand\appendix@section[1]{%
  \refstepcounter{section}%
  \orig@section*{Appendix \@Alph\c@section: #1}%
  \addcontentsline{toc}{section}{Appendix \@Alph\c@section: #1}%
}
\let\orig@section\section
\g@addto@macro\appendix{\let\section\appendix@section}
\begin{document} 

\markboth{Generalized Bellman Equations and TD Learning}{}

\title{On Generalized Bellman Equations and \\ Temporal-Difference Learning\thanks{An abridged version of this article appeared at \emph{The $30$th Canadian Conference on Artificial Intelligence (CAI)}, May 2017, and a reformatted copy is to appear in \emph{Journal on Machine Learning Research}, 2018.}}

\author[1]{Huizhen Yu}
\author[2]{A. Rupam Mahmood}
\author[1]{Richard S. Sutton}
\affil[1]{RLAI Lab, Department of Computing Science\\ 
University of Alberta, Canada}
\affil[2]{Kindred Inc.\\ 
243 College St\\
Toronto, ON M5T 1R5, Canada}

\date{}
\maketitle

\begin{abstract}
We consider off-policy temporal-difference (TD) learning in discounted Markov decision processes, where the goal is to evaluate a policy in a model-free way by using observations of a state process generated without executing the policy. To curb the high variance issue in off-policy TD learning, we propose a new scheme of setting the $\lambda$-parameters of TD, based on generalized Bellman equations. Our scheme is to set $\lambda$ according to the eligibility trace iterates calculated in TD, thereby easily keeping these traces in a desired bounded range. Compared with prior work, this scheme is more direct and flexible, and allows much larger $\lambda$ values for off-policy TD learning with bounded traces. 
As to its soundness, using Markov chain theory, we prove the ergodicity of the joint state-trace process under nonrestrictive conditions, and we show that associated with our scheme is a generalized Bellman equation (for the policy to be evaluated) that depends on both the evolution of $\lambda$ and the unique invariant probability measure of the state-trace process. These results not only lead immediately to a characterization of the convergence behavior of least-squares based implementation of our scheme, but also prepare the ground for further analysis of gradient-based implementations.
\end{abstract}

\bigskip
\noindent{\bf Keywords:}
Markov decision process; approximate policy evaluation; generalized Bellman equation; reinforcement learning; temporal-difference method; Markov chain; randomized stopping time

\clearpage
\tableofcontents
\clearpage

\section{Introduction}

We consider discounted Markov decision processes (MDPs) and off-policy temporal-difference (TD) learning methods for approximate policy evaluation with linear function approximation. The goal is to evaluate a policy in a model-free way by using observations of a state process generated without executing the policy. Off-policy learning is an important part of the reinforcement learning methodology \citep{SUB} and has been studied in the areas of operations research and machine learning. (For an incomplete list of references, see e.g., \citealp{gi-sampling,offpolicytd-pss,offpolicytd-psd,rj-tdimportance,gtd08,gtd09,maei11,Yu-siam-lstd,dnp14,bruno14,pmrl,wis14,pmtd3,SuMW14,sbeed18}.)
Available TD algorithms, however, tend to have very high variances due to the use of importance sampling, an issue that limits their applicability in practice. The purpose of this paper is to introduce a new TD learning scheme that can help address this problem.

Our work is motivated by the recently proposed Retrace algorithm \citep{offpolicytd-mshb} and ABQ algorithm \citep{abq}, and by the Tree-Backup algorithm \citep{offpolicytd-pss} that existed earlier. These algorithms, as explained by \citet{abq}, all try to use the $\lambda$-parameters of TD to curb the high variance issue in off-policy learning. In particular, they all choose the values of $\lambda$ according to the current state or state-action pair in such a way that guarantees the boundedness of the eligibility traces in TD learning, which can help reduce significantly the variance of the TD iterates. A limitation of these algorithms, however, is that they tend to be over-conservative and restrict $\lambda$ to small values, whereas small $\lambda$ can result in large approximation bias in TD solutions. 

In this paper, we propose a new scheme of setting the $\lambda$-parameters of TD, based on generalized Bellman equations. Our scheme is to set $\lambda$ according to the eligibility trace iterates calculated in TD, thereby easily keeping those traces in a desired bounded range. Compared with the schemes used in the previous work just mentioned, this is a direct way to bound the traces in TD, and it is also more flexible and allows much larger $\lambda$ values for off-policy learning.

Regarding generalized Bellman equations, in our context, they will correspond to a family of dynamic programming equations for the policy to be evaluated. These equations all have the true value function as their unique solution, and their associated operators have contraction properties, like the standard Bellman operator. We will refer to the associated operators as generalized Bellman operators or Bellman operators for short. Some authors have considered, at least conceptually, the use of an even broader class of equations for policy evaluation. For example, \citet{gen-td} have considered treating the policy evaluation problem as a parameter estimation problem in the statistical framework of estimating equations, and in their framework, any equation that has the true value function as the unique solution can be used to estimate the value function. The family of generalized Bellman equations we consider has a more specific structure. They generalize multistep Bellman equations, and they are associated with randomized stopping times and arise from the strong Markov property (see Section~\ref{sec-3.1} for details).  

Generalized Bellman equations and operators are powerful tools. In classic MDP theory they have been used in some intricate optimality analyses (e.g., \citealp{ScS87}). Their computational use, however, seems to emerge primarily in the field of reinforcement learning. 
Through the $\lambda$-parameters and eligibility traces, TD learning is naturally connected with, not a single Bellman operator, but a family of Bellman operators, with different choices of $\lambda$ or different rules of calculating the eligibility trace iterates corresponding to different Bellman operators. Early efforts that use this aspect to broaden the scope of TD algorithms and to analyze such algorithms include Sutton's work \citeyearpar{Sut95} on learning at multiple timescales and Tsitsiklis' work on generalized TD algorithms in the tabular case (see the book by~\citealp[Chap.\ 5.3]{BET}). 
In the context of off-policy learning, there are more recent approaches that try to utilize this connection of TD with generalized Bellman operators to make TD learning more efficient \citep{offpolicytd-pss,yb-bellmaneq,offpolicytd-mshb,abq}. This is also our aim, in proposing the new scheme of setting the $\lambda$-parameters. 

Our analyses of the new TD learning scheme will focus on its theoretical side. 
Using Markov chain theory, we prove the ergodicity of the joint state and trace process under nonrestrictive conditions (see Theorem~\ref{thm-1}), and we show that associated with our scheme is a generalized Bellman equation (for the policy to be evaluated) that depends on both the evolution of $\lambda$ and the unique invariant probability measure of the state-trace process (see Theorem~\ref{thm-2} and Corollary~\ref{cor-tdcmp}). These results not only lead immediately to a characterization of the convergence behavior of least-squares based implementation of our scheme (see Corollary~\ref{cor-1} and Remark~\ref{rem-1}), but also prepare the ground for further analysis of gradient-based implementations. (The latter analysis has been carried out recently by~\cite{gtd-conv17}; see Remark~\ref{rmk-gtd}.)

In addition to the theoretical study, we also present the results from a preliminary numerical study that compares several ways of setting $\lambda$ for the least-squares based off-policy algorithm. The results demonstrate the advantages of the proposed new scheme with its greater flexibility.

We remark that although we focus exclusively on policy evaluation in this paper, approximate policy evaluation methods are highly pertinent to finding near-optimal policies in MDPs. They can be applied in approximate policy iteration, in policy-gradient algorithms for gradient estimation or in direct policy search (see e.g., \citealp{konda-thesis,ce03}). In addition to solving MDPs, they can also be used in artificial intelligence and robotics applications as a means to generate experience-based world models (see e.g., \citealp{Sut09}). It is, however, beyond the scope of this paper to discuss these applications of our results.

The rest of the paper is organized as follows. 
In Section~\ref{sec-2}, after a brief background introduction, we present our scheme of TD learning with bounded traces, and we establish the ergodicity of the joint state-trace process.
In Section~\ref{sec-3}, we first discuss generalized Bellman operators associated with randomized stopping times, and we then derive the generalized Bellman equation associated with our scheme. 
In Section~\ref{sec-4}, we present the experimental results on the least-squares based implementation of our scheme.
Appendices~\ref{appsec-1}-\ref{appsec-oblproj} include a proof for generalized Bellman operators and materials about approximation properties of TD solutions that are too long to include in the main text.

\section{Off-Policy TD Learning with Bounded Traces} \label{sec-2}

We describe the off-policy policy evaluation problem and the algorithmic form of TD learning in Section~\ref{sec-2.1}. We then present our scheme of history-dependent $\lambda$ in Section~\ref{sec-2.2}, and analyze the properties of the resulting eligibility trace iterates and the convergence of the corresponding least-squares based algorithm in Section~\ref{sec-2.3}.

\subsection{Preliminaries} \label{sec-2.1}

The off-policy learning problem we consider in this paper concerns two Markov chains on a finite state space $\S = \{ 1, \ldots, N\}$. The first chain has transition matrix $\P$, and the second $\P^o$.
Whatever physical mechanisms that induce the two chains shall be denoted by $\pi$ and $\pi^o$, and referred to as the target policy and behavior policy, respectively. The second Markov chain we can observe; however, it is the system performance of the first Markov chain that we want to evaluate. 

Specifically, we consider a one-stage reward function $r_\pi : \S \to \Re$ and an associated discounted total reward criterion with state-dependent discount factors $\gamma(s) \in [0,1], s \in \S$. 
Let $\Gm$ denote the $N \times N$ diagonal matrix with diagonal entries $\gamma(s)$. We assume that $\P$ and $\P^o$ satisfy the following conditions:
{\samepage
\begin{cond}[Conditions on the target and behavior policies] \label{cond-pol} \hfill\vspace*{-3pt}
\begin{itemize}
\item[\rm (i)] $P$ is such that the inverse $(I - \P \Gm)^{-1}$ exists, and 
\item[\rm (ii)] $\P^o$ is such that for all $s,s' \in \S$, $\P^o_{ss'} = 0 \Rightarrow \P_{ss'}=0$,  and moreover, $\P^o$ is irreducible.
\end{itemize}
\end{cond}
}

The performance of $\pi$ is defined as the expected discounted total rewards for each initial state $s \in \S$:
\begin{equation} \label{def-vpi}
 \textstyle{v_{\pi}(s) : = \E^\pi_s \left[ \, r_\pi(S_0) +  \sum_{t=1}^{\infty} \gamma(S_{1})\, \gamma(S_{2}) \, \cdots \, \gamma(S_{t})  \cdot r_\pi(S_t) \right],}
\end{equation} 
where the notation $\E^\pi_s$ means that the expectation is taken with respect to (w.r.t.) the Markov chain $\{S_t\}$ starting from $S_0 =s$ and induced by $\pi$ (i.e., with transition matrix $P$). The function $v_\pi$ is well-defined under Condition~\ref{cond-pol}(i). It is called the \emph{value function} of $\pi$, and by standard MDP theory (see e.g., \citealp{puterman94}), we can write it in matrix/vector notation as
$$v_\pi = r_{\pi} + \P \Gm \, v_\pi, \qquad \text{i.e.}, \quad v_\pi = (I - \P \Gm)^{-1} r_\pi.$$
The first equation above is known as the Bellman equation (or dynamic programming equation) for a stationary policy (cf.\ Footnote~\ref{ft-1}).

We compute an approximation of $v_\pi$ of the form $v(s) = \fe(s)^\top \theta$, $s \in \S$, where $\theta \in \rn$ is a parameter vector  and $\fe(s)$ is an $n$-dimensional feature representation for each state $s$ (here $\fe(s),\theta$ are column vectors and the symbol $\text{}^\top$ stands for transpose). Data available for this computation are:\vspace*{-0.07cm}
\begin{itemize}
\item[(i)] a realization of the Markov chain $\{S_t\}$ with transition matrix $\P^o$ generated by $\pi^o$, and 
\item[(ii)] rewards $R_t = r(S_t, S_{t+1})$ associated with state transitions, where the function $r$ relates to $r_\pi(s)$ as $r_{\pi}(s) = \E^\pi_s [ r(s, S_1) ]$ for all $s \in \S$.
\footnote{One can add to $R_t$ a zero-mean finite-variance noise term. This makes little difference to our analyses, so we have left it out for notational simplicity.}
\end{itemize}
To find a suitable parameter $\theta$ for the approximation $\fe(s)^\top \theta$, we use the off-policy TD learning scheme. 
Define $\rho(s, s') = \P_{ss'}/\P^o_{ss'}$ (the importance sampling ratio),
\footnote{\label{ft-1}%
Our problem formulation entails both value function and state-action value function estimation for a stationary policy in the standard MDP context. In these applications, it is the state-action space of the MDP that corresponds to the state space $\S$ here. 
In particular, for value function estimation, $S_t$ here corresponds to the pair of previous action and current state in the MDP, whereas for state-value function estimation, $S_t$ here corresponds to the current state-action pair in the MDP. The ratio $\rho(s,s') = \P_{ss'}/\P^o_{ss'}$ then comes out as the ratio of action probabilities under $\pi$ and $\pi^o$, the same as what appears in most of the off-policy learning literature. For the details of these correspondences, see~\cite[Examples 2.1, 2.2]{Yu-siam-lstd}. 
The third application is in a simulation context where $\P^o$ corresponds to a simulated system and both $\P^o, \P$ are known so that the ratio $\rho(s,s')$ is available. 
Such simulations are useful, for example, in studying system performance under perturbations, and in speeding up the computation when assessing the impacts of events that are rare under the dynamics $\P$.}
and write 
$$\rho_t = \rho(S_t, S_{t+1}), \qquad \gamma_t=\gamma(S_t).$$
Given an initial $\e_0 \in \rn$, for each $t \geq 1$, the eligibility trace vector $\e_t \in \rn$ and the scalar temporal-difference term $\delta_t(v)$ for any approximate value function $v: \S \to \Re$ are calculated according to
\begin{align}
   \e_t & =  \lambda_t \, \gamma_t \,  \rho_{t-1} \, \e_{t-1} +  \fe(S_t), \label{eq-td1}  \\
   \delta_t(v) & = \rho_t \, \big( R_{t} + \gamma_{t+1} v(S_{t+1}) - v(S_t) \big). \label{eq-td2}
\end{align}
Here $\lambda_t \in [0,1], t \geq 1$, are important parameters in TD learning, the choice of which we shall elaborate on shortly.

There exist a number of TD algorithms that use $\e_t$ and $\delta_t$ to generate a sequence of parameters $\theta_t$ for approximate value functions. One such algorithm is LSTD~\citep{lstd,Yu-siam-lstd}, which obtains $\theta_t$ by solving the linear equation for $\theta \in \rn$,
\begin{equation} \label{eq-lstd}
     \textstyle{ \tfrac{1}{t} \sum_{k=0}^{t-1}  \, \e_k \, \delta_k (v) = 0, \quad v = \Phi \theta}
\end{equation}     
(if it admits a solution), where $\Phi$ is a matrix with row vectors $\phi(s)^\top, s \in \S$. LSTD updates the equation (\ref{eq-lstd}) iteratively by incorporating one by one the observation of $(S_t, S_{t+1}, R_t)$ at each state transition.
We will discuss primarily this algorithm in the paper, as its behavior can be characterized directly using our subsequent analyses of the joint state-trace process. 

As mentioned earlier, our analyses will also provide bases for analyzing other gradient-based TD algorithms (e.g.,~\citealp{gtd08,gtd09,maei11,pmrl}) by using stochastic approximation theory \citep{KuY03,Bor08,KaB15}. Because of the complexity of this subject, however, we will not delve into it in the present paper, and we refer the reader to the recent work~\citep{gtd-conv17} for details.

\subsection{Our Scheme of History-dependent $\lambda$} \label{sec-2.2}

We now come to the choices of $\lambda_t$ in the trace iterates (\ref{eq-td1}). For TD with function approximation, one often lets $\lambda_t$ be a constant or a function of $S_t$ \citep{Sut88,tr-disc,SUB}.
If neither the behavior policy nor the $\lambda_t$'s are further constrained, $\{\e_t\}$ can have unbounded variances and is also unbounded in many natural situations (see e.g.,~\citealp[Section 3.1]{Yu-siam-lstd}), and this makes off-policy TD learning challenging.
\footnote{However, asymptotic convergence can still be ensured for several algorithms~\citep{Yu-siam-lstd,yu-etdarx,etd-wkconv}, thanks partly to a powerful law of large numbers for stationary processes.}
If we let the behavior policy to be close enough to the target policy so that $\P^o \approx \P$, then variance can be reduced, but it is not a satisfactory solution, for the applicability of off-policy learning would be seriously limited. 

Without restricting the behavior policy, as mentioned earlier, the two recent papers \citep{offpolicytd-mshb,abq}, as well as the closely related early work by \citet{offpolicytd-pss}, exploit state-dependent $\lambda$'s to control variance. Their choices of $\lambda_t$ are such that $\lambda_t \gamma_t \rho_{t-1} < 1$ for all $t$, so that the trace iterates $\e_t$ are made bounded, which can help reduce the variance of the iterates.

Motivated by this prior work, our proposal is to set $\lambda_t$ according to $\e_{t-1}$ directly, so that we can keep $\e_t$ in a desired range straightforwardly and at the same time, allow a much larger range of values for the $\lambda$-parameters. As a simple example, if we use $\lambda_t$ to scale the vector $\gamma_t \rho_{t-1} \e_{t-1}$ to be within a ball with some given radius, then we keep $\e_t$ always bounded.

In the rest of this paper, we shall focus on analyzing the iteration (\ref{eq-td1}) with a particular choice of $\lambda_t$ of the kind just mentioned. We want to be more general than the preceding simple example. However, since the dependence on the trace $\e_{t-1}$ would make $\lambda_t$ dependent on the entire past history $(S_0, \ldots, S_{t-1})$, we also want to retain certain Markovian properties that are very useful for convergence analysis. This leads us to consider \emph{$\lambda_t$ being a certain function of the previous trace and past states}. More specifically, we will let $\lambda_t$ be a function of the previous trace $\e_{t-1}$ and a certain memory state that is a summary of the states observed so far. The formulation is as follows.

\subsubsection{Formulation and Examples}

We denote the memory state at time $t$ by $y_t$. For simplicity, we assume that $y_t$ can only take values from a finite set $\M$, and its evolution is Markovian: $y_t = g(y_{t-1}, S_t)$ for some given function $g$. The joint process $\{(S_t, y_t)\}$ is then a simple finite-state Markov chain. Each $y_t$ is a function of the history $(S_0, \ldots, S_t)$ and $y_0$. 
We further require, besides the irreducibility of $\{S_t\}$ (cf.\ Condition~\ref{cond-pol}(ii)), that
\begin{cond}[Evolution of memory states] \label{cond-mem} 
Under the behavior policy $\pi^o$, the Markov chain $\{(S_t, y_t)\}$ on $\S \times \M$ has a single recurrent class.
\end{cond} 
This recurrence condition is nonrestrictive: If the Markov chain has multiple recurrent classes, each recurrent class can be treated separately by using the same arguments we present in this paper. However, we remark that the finiteness assumption on $\M$ is a simplification. We choose to work with finite $\M$ mainly for the reason that with the traces lying in a continuous space, to study the joint state and trace process, we need to resort to properties of Markov chains on infinite spaces. With an infinite $\M$, we would need to introduce more technical conditions that are not essential to our analysis and can obscure our main arguments.

We thus let $y_t$ and $\lambda_t$ evolve as
\begin{equation}
  y_t = g(y_{t-1}, S_t), \qquad \lambda_t=\lambda(y_t, \e_{t-1})  \label{eq-td3}
\end{equation}  
where $\lambda : \M \times \rn \to [0,1]$. We require the function $\lambda$ to satisfy two conditions. 
\begin{cond}[Conditions for $\lambda(\cdot)$] \label{cond-lambda} 
For some norm $\|\cdot\|$ on $\rn$, the following hold for each memory state $y  \in \M$:\vspace*{-0.1cm}
\begin{itemize}
\item[\rm (i)] For any $\e , \e' \in \rn$, $\|\lambda(y, \e) \, \e - \lambda(y, \e') \, \e' \| \leq \| \e - \e'\|$.
\item[\rm (ii)] For some constant $\C_y$, $\| \gamma(s') \rho(s, s') \cdot \lambda(y, \e)  \, \e\| \leq \C_y$ for all $\e \in \rn$ and all possible state transitions $(s, s')$ that can lead to the memory state $y$.
\end{itemize}
\end{cond}
In the above, the second condition is to restrict $\{\e_t\}$ in a desired range (as it makes $\|\e_t\| \leq \max_{y \in \M} \C_y + \max_{s \in \S} \| \phi(s)\|$). The first condition is about the continuity of the function $\lambda(y, \e) \, \e$ in the trace variable $\e$ for each memory state $y$, and it plays a key role in the subsequent analysis, where we will use this condition to ensure that the traces $\e_t$ and the states $(S_t, y_t)$ jointly form a Markov chain with appealing properties. We shall defer a further discussion on the technical roles of these conditions to the end of Section~\ref{sec-2.3} (cf.\ Remark~\ref{rem-conditions-thm1}).

Let us give a few simple examples of choosing $\lambda$ that satisfy Condition~\ref{cond-lambda}. We will later use these examples in our experimental study (Section~\ref{sec-4}).

\begin{myexample} \rm \label{ex-scaling}
We consider again the simple scaling example mentioned earlier and describe it using the terminologies just introduced. In this example, we let $y_t = (S_{t-1}, S_t)$. For each $y = (s,s')$, we define the function $\lambda(y, \cdot)$ so that when multiplied with $\lambda(y, \e)$, the vector $\gamma(s') \rho(s,s') \, \e$ is scaled down whenever its length exceeds a given threshold $\C_{ss'}$:
\begin{equation} \label{eq-ex1}
\lambda\big(y, \e \big) =  \begin{cases}
  1 & \text{if} \  \ \gamma(s') \rho(s, s') \| \e\|_2 \leq \C_{ss'}; \\
  \tfrac{\C_{ss'}}{\gamma(s') \rho(s,s') \| \e\|_2} &  \text{otherwise}. \end{cases}
\end{equation}
Condition~\ref{cond-lambda}(i) is satisfied because for $y=(s,s')$ with $\gamma(s') \rho(s, s') =0$, $\lambda\big(y, \e\big) \e = \e$, whereas for $y=(s,s')$ with $\gamma(s') \rho(s, s') \not=0$, $\lambda\big(y, \e\big) \e$ is simply the Euclidean projection of $\e$ onto the ball (centered at the origin) with radius $\C_{ss'}/(\gamma(s') \rho(s, s'))$ and is therefore Lipschitz continuous in $\e$ with modulus $1$ w.r.t.\ $\|\cdot\|_2$.
Corresponding to (\ref{eq-ex1}), the update rule (\ref{eq-td1}) of $\e_t$ becomes
\begin{equation} \label{eq-ex1b}
 \e_t = \begin{cases}
  \gamma_t \,  \rho_{t-1} \, \e_{t-1} +  \fe(S_t) \ & \text{if} \ \ \gamma_t \rho_{t-1} \| \e_{t-1}\|_2 \leq \C_{S_{t-1}S_t}; \\
  \C_{S_{t-1}S_t} \cdot \tfrac{\e_{t-1}}{\|\e_{t-1}\|_2} +  \fe(S_t)
    &  \text{otherwise}.
    \end{cases}
 \end{equation}   

Note that this scheme of setting $\lambda$ encourages the use of large $\lambda_t$: $\lambda_t=1$ will be chosen whenever possible. A variation of the scheme is to multiply the right-hand side (r.h.s.) of (\ref{eq-ex1}) by another factor $\beta_{ss'} \in [0,1]$, so that $\lambda_t$ can be at most $\beta_{S_{t-1}S_t}$.
In particular, one such variation is to simply multiply the r.h.s.\ of (\ref{eq-ex1}) by a constant $\beta \in (0,1)$ so that $\lambda_t \leq \beta < 1$ for all $t$. 
\qed \end{myexample}

\begin{myexample} \rm \label{ex-retrace}
The Retrace algorithm \citep{offpolicytd-mshb} modifies the trace updates in off-policy TD learning by truncating the importance sampling ratios by $1$. 
In particular, for the off-policy TD($\lambda$) algorithm with a constant $\lambda=\beta \in (0,1]$, Retrace modifies the trace updates to be
\begin{equation}  \label{eq-ex2a}
   \e_t  =  \beta \, \gamma_t \cdot  \min\{1, \rho_{t-1}\}  \cdot \e_{t-1} +  \fe(S_t). 
\end{equation}   
As pointed out by \cite{abq}, to retain the original interpretation of $\lambda$ as a bootstrapping parameter in TD learning, 
we can rewrite the above update rule of Retrace equivalently as
\begin{equation}  \label{eq-ex2}
       \e_t  =  \lambda_t \, \gamma_t \,  \rho_{t-1}  \, \e_{t-1} +  \fe(S_t) \qquad \text{for} \ \ \lambda_t = \beta \cdot \tfrac{\min\{ 1, \rho_{t-1}\}}{\rho_{t-1}} \  \ \text{(with $0/0=0$)}.
\end{equation}
Each $\lambda_t$ here is a function of $(S_{t-1}, S_t)$ only and does not depend on $\e_{t-1}$, so this choice of $\lambda$-parameters automatically satisfies Condition~\ref{cond-lambda}(i) with the memory states being $y_t=(S_{t-1}, S_t)$. When the discount factors $\gamma(s)$ are all strictly less than $1$, $\|\e_t\|$ for all $t$ are bounded by a deterministic constant that depends on the initial $\e_0$. 
Then for each initial $\e_0$, Retrace's choice of $\lambda$ coincides with a choice in our framework, since the $C$-parameters in Condition~\ref{cond-lambda}(ii) can be made vacuously large so that the condition is satisfied by all the traces $\e_t$ that could be encountered by Retrace. Thus in this case our framework for choosing $\lambda$ effectively encompasses the particular choice used by Retrace.

One can make variations on Retrace's trace update rule. For example, instead of truncating each importance sampling ratio $\rho(s,s')$ by $1$, one can truncate it by a constant $K_{ss'} \geq 1$, and then use a scaling scheme similar to Example~\ref{ex-scaling} to bound the traces. The simplest such variation is to choose two memory-independent positive constants $K$ and $C$, and replace the definition of $\lambda_t$ in (\ref{eq-ex2}) by the following:
with 
$\tilde \lambda_t =  \tfrac{\min\{ K, \rho_{t-1}\}}{\rho_{t-1}}$ (where we treat $0/0=0$),
\begin{equation} \label{eq-var-retrace1}
   \lambda_t  = \begin{cases}  
    \beta \,  \tilde \lambda_t  & \text{if} \ \ \tilde \lambda_t \gamma_t \,  \rho_{t-1}  \, \|\e_{t-1}\|_2 \leq C; \\
   \beta \, \tilde \lambda_t \cdot \tfrac{C}{\tilde \lambda_t \gamma_t \,  \rho_{t-1}  \, \|\e_{t-1}\|_2}  & \text{otherwise}. \end{cases} 
\end{equation}
Correspondingly, instead of (\ref{eq-ex2a}), the update rule of $\e_t$ becomes
\begin{equation} \label{eq-var-retrace2}
 \e_t  =  \begin{cases}  
\beta \, \gamma_t \cdot  \min\{K, \rho_{t-1}\}  \cdot \e_{t-1} +  \fe(S_t) \ & \text{if} \ \ \gamma_t \cdot  \min\{K, \rho_{t-1}\}  \cdot \|\e_{t-1}\|_2 \leq C;\\
\beta \, C \cdot \tfrac{\e_{t-1}}{\|\e_{t-1}\|_2} +  \fe(S_t) & \text{otherwise}.
\end{cases}
\end{equation}
These variations of Retrace are similar to Example~\ref{ex-scaling} and satisfy Condition~\ref{cond-lambda}.
\footnote{To see this, let the memory states be $y_t=(S_{t-1},S_t)$. For each $y=(s,s')$, let $\lambda(y, \e)$ be defined according to (\ref{eq-var-retrace1}), and let $C_y$ in Condition~\ref{cond-lambda}(ii) be $C_{ss'} = \tfrac{\rho(s,s')}{\min\{K, \, \rho(s,s')\}}\cdot C$ (treat $0/0=0$). Then note that $\|\lambda(y, e) e - \lambda(y, e') e' \|_2 \leq \beta \min\big\{\tfrac{K}{\rho(s,s')}, \, 1\big\} \cdot \| e - e'\|_2 \leq \beta \| e - e'\|_2$.}
\qed \end{myexample}

\subsubsection{Comparison with Previous Work} \label{sec-compare-retrace}

For policy evaluation, the Retrace algorithm \citep{offpolicytd-mshb} and the ABQ algorithm \citep{abq} are very similar (ABQ was actually developed independently of Retrace before the \citet{offpolicytd-mshb} paper was published, although the ABQ paper itself was released much later). Both Retrace and ABQ include the Tree-Backup algorithm \citep{offpolicytd-pss} as a special case. They can use additional parameters to select $\lambda$ from a range of values, whereas Tree-Backup specifies $\lambda$, implicitly, in a particular way (which has the advantage of requiring no knowledge of the behavior policy) and does not have the freedom in choosing $\lambda$. Because of the relations between these algorithms, when comparing our method to them, we will compare it with Retrace only. In the experimental study given later in Section~\ref{sec-4} on the performance of LSTD for various ways of setting $\lambda$, we will compare our scheme of choosing $\lambda$ with that of Retrace for $\beta = 1$, which lets Retrace use the largest $\lambda$ that it can take. 

We see in Example~\ref{ex-retrace} that the eligibility trace update rule of Retrace can be written in two equivalent forms, (\ref{eq-ex2a}) and (\ref{eq-ex2}). The second form (\ref{eq-ex2}) has the advantage that the $\lambda$-parameters involved are shown explicitly. In TD learning, the $\lambda$-parameters directly affect the associated Bellman operators and can be meaningfully interpreted as stopping probabilities (see Section~\ref{sec-3}), whereas the importance sampling ratio terms in the eligibility trace iterates are essentially unchanged, for they have to be there in order to correct for the discrepancy between the behavior and target policies. For this reason,  we prefer (\ref{eq-ex2}) to (\ref{eq-ex2a}) and prefer thinking in terms of the selection of $\lambda$-parameters to that of what occurs \emph{apparently} to those importance sampling ratio terms in the trace updates.

As mentioned in Example~\ref{ex-retrace}, the \citet{offpolicytd-mshb} paper does not make the connection between (\ref{eq-ex2a}) and (\ref{eq-ex2}). \citet{abq} recognized the role of the $\lambda$-parameters and made explicit use of it to derive the ABQ algorithm. However, in the ABQ paper, the discussion and the presentation of the algorithm still emphasize the apparent changes in those importance sampling ratio terms in the trace iterates. This is an unsatisfactory point in that paper that we hope we have clarified with our present work.

We mentioned in the introduction that Retrace, ABQ and Tree-Backup are too conservative and tend to use too small $\lambda$ values. Let us now make this statement more precise and also explain the reason behind. 

These algorithms tend to behave effectively like TD($\lambda$) with small constant $\lambda$, despite that they can have $\lambda_t = 1$ at some time steps $t$. This is due to the nature of TD learning with time-varying $\lambda$, which is very different from that of TD with constant $\lambda$. For time-varying $\lambda$, a large $\lambda_t$ at one time step need not mean that we are using the information of the cumulative rewards over a long time horizon to estimate the value at the state $S_t$ encountered at time $t$. Because the next $\lambda_{t+1}$ could be very small or even zero, forcing a TD algorithm to ``bootstrap'' immediately. When large $\lambda_t$'s are interleaved with small ones, we are effectively in the situation of TD with small $\lambda$. This could occur to our proposed scheme as well if, for example, in Example~\ref{ex-scaling} the thresholds $\C_{ss'}$ are set too small. When we use larger thresholds, we allow larger $\lambda$. By comparison, Retrace, ABQ, and Tree-Backup constrain the state-dependent $\lambda$-parameters to be small enough so that all the products $\lambda_t \gamma_t \rho_{t-1} < 1$, and this makes them prone to the small-$\lambda$ issue just mentioned. (See the experiments in Section~\ref{sec-4.2} for demonstrations.)

While we consider Retrace for approximate policy evaluation, the \citet{offpolicytd-mshb} paper actually focuses primarily on finding an optimal policy for an MDP, in the tabular case, and it has demonstrated good empirical performance of Retrace and Tree-Backup for that purpose. Despite this, its results are not adequate yet to establish asymptotic optimality of these algorithms in the online optimistic policy iteration setting (personal communication with Munos), and it is still an open theoretical question whether online TD algorithms can solve an MDP like the Q-learning algorithm \citep{wat89,tsi94}, when positive $\lambda$ (small or not) and rapidly changing target policies are involved. 

We also mention that for policy evaluation, \citet[Section 3.1]{offpolicytd-mshb} have also conceived the use of generalized Bellman operators, although they did not relate these operators explicitly to history-dependent $\lambda$'s and did not study corresponding algorithms in this general case.

\subsection{Ergodicity Result} \label{sec-2.3}

The properties of the joint state-trace process $\{(S_t, y_t, \e_t)\}$ are important for understanding and characterizing the behavior of our proposed TD learning scheme. We study them in this subsection. Most importantly, we shall establish the ergodicity of the state-trace process. The result will be useful in convergence analysis of several associated TD algorithms~\citep{gtd-conv17}, although in this paper we discuss only the LSTD algorithm. In the next section we will also use the ergodicity result when we relate the LSTD equation (\ref{eq-lstd}) to a generalized Bellman equation for the target policy in order to interpret the LSTD solutions.

We note that to obtain the results in this subsection, we will follow similar lines of argument used in \citep{Yu-siam-lstd} for analyzing off-policy LSTD with constant $\lambda$. However, because $\lambda$ is now history-dependent, some proof steps in \citep{Yu-siam-lstd} no longer apply. We shall explain this in more detail after we prove the main result of this subsection.

As another side note, one can introduce nonnegative coefficients $i(y)$ for memory states $y$ to weight the state features (similarly to the use of ``interest'' weights in the ETD algorithm~\citep{SuMW14}) and update $\e_t$ according to
\begin{equation}
   \e_t =  \lambda_t \, \gamma_t \,  \rho_{t-1} \, \e_{t-1} +  i(y_t) \, \fe(S_t).
\end{equation}   
The results given below apply to this update rule as well.

Let us start with two basic properties of $\{(S_t, y_t, \e_t)\}$ that follow directly from our choice of the $\lambda$ function:\vspace*{-0.1cm}
\begin{itemize}
\item[(i)] By Condition \ref{cond-lambda}(i), for each $y$, $\lambda(y, \e) \e$ is a continuous function of $\e$, and thus $\e_t$ depends continuously on $\e_{t-1}$. This, together with the finiteness of $\S \times \M$, ensures that $\{(S_t, y_t, \e_t)\}$ is a weak Feller Markov chain.
\footnote{This means that for any bounded continuous function $f$ on $\S \times \M \times \rn$ (endowed with the usual topology), with $X_t = (S_t, y_t, \e_t)$, 
$\E \big[ f(X_1) \mid X_0 = x \big]$
is a continuous function of $x$~\cite[Prop.\ 6.1.1]{MeT09}.}
\item[(ii)] Then, by a property of weak Feller Markov chains~\cite[Theorem 12.1.2(ii)]{MeT09}, the boundedness of $\{\e_t\}$ ensured by~Condition \ref{cond-lambda}(ii) implies that $\{(S_t, y_t, \e_t)\}$ has at least one invariant probability measure. 
\end{itemize}
The third property, given in the lemma below, concerns the behavior of $\{\e_t\}$ for different initial $\e_0$. It is an important implication of Condition~\ref{cond-lambda}(i); actually, it is our purpose of introducing the condition~\ref{cond-lambda}(i) in the first place. In the lemma, $\asto$ stands for ``converges almost surely to.''

\begin{lem} \label{lem-1}
Let $\{\e_t\}$ and $\{\hat \e_t\}$ be generated by the iteration (\ref{eq-td1}) and (\ref{eq-td3}), using the same trajectory of states $\{S_t\}$ and initial $y_0$, but with different initial $\e_0$ and $\hat \e_0$, respectively. Then under Conditions~\ref{cond-pol}(i) and~\ref{cond-lambda}(i), $\e_t - \hat \e_t \asto 0$. 
\end{lem}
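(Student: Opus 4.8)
The plan is to exploit the fact that the two trace sequences are driven by \emph{identical} data: since they share the same state trajectory $\{S_t\}$ and the same $y_0$, they share the same memory states $y_t$, the same scalars $\gamma_t, \rho_{t-1}$, and the same feature vectors $\fe(S_t)$. Subtracting the two copies of the recursion (\ref{eq-td1})--(\ref{eq-td3}), the common term $\fe(S_t)$ cancels and I obtain
\[
 \e_t - \hat\e_t = \gamma_t \, \rho_{t-1} \big( \lambda(y_t, \e_{t-1})\, \e_{t-1} - \lambda(y_t, \hat\e_{t-1})\, \hat\e_{t-1}\big).
\]
Because $\gamma_t, \rho_{t-1} \geq 0$ and Condition~\ref{cond-lambda}(i) says the map $\e \mapsto \lambda(y, \e)\, \e$ is nonexpansive in $\|\cdot\|$ for each fixed $y$, taking norms gives the one-step contraction bound $\|\e_t - \hat\e_t\| \leq \gamma_t \, \rho_{t-1}\, \|\e_{t-1} - \hat\e_{t-1}\|$. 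Iterating this bound down to $t=0$ yields
\[
 \|\e_t - \hat\e_t\| \ \leq \ \Big( \textstyle\prod_{k=1}^{t} \gamma_k \, \rho_{k-1} \Big)\, \|\e_0 - \hat\e_0\|,
\]
so the claim reduces to showing that the nonnegative random product $W_t := \prod_{k=1}^{t} \gamma_k \rho_{k-1}$ converges to $0$ almost surely along the behavior chain.

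To control $W_t$, I would compute its conditional mean. Using $\rho(s,s') = \P_{ss'}/\P^o_{ss'}$, one telescoping step gives $\E[\gamma(S_{k})\rho(S_{k-1},S_{k}) \mid S_{k-1} = s] = \sum_{s'}\P_{ss'}\gamma(s') = (\P\Gm\1)_s$, and iterating the tower property yields $\E[W_t \mid S_0 = s] = \big( (\P\Gm)^t \1 \big)_s$. Now Condition~\ref{cond-pol}(i) is exactly what forces this to decay geometrically: $\P\Gm$ is a nonnegative matrix with row sums $\sum_{s'}\P_{ss'}\gamma(s') \leq 1$, so its spectral radius is at most $1$; by Perron--Frobenius the spectral radius is itself an eigenvalue, and since $(I - \P\Gm)^{-1}$ exists, $1$ is not an eigenvalue, forcing the spectral radius strictly below $1$. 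Hence there are constants $\C \geq 0$ and $r \in (0,1)$ with $\E[W_t] \leq \C\, r^t$ for every initial distribution of $S_0$.

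The main obstacle, and the only genuinely probabilistic point, is passing from this decay of the \emph{mean} to \emph{almost sure} convergence of $W_t$; convergence in mean alone would not suffice. The geometric rate resolves this: since $W_t \geq 0$ and $\sum_t \E[W_t] \leq \C \sum_t r^t < \infty$, Tonelli's theorem gives $\E\big[\sum_t W_t\big] < \infty$, so $\sum_t W_t < \infty$ almost surely and therefore $W_t \asto 0$ (the terms of an almost surely convergent series of nonnegative numbers tend to $0$). Combining with the deterministic bound above, $\|\e_t - \hat\e_t\| \leq W_t\, \|\e_0 - \hat\e_0\| \asto 0$, which is the assertion. I note that Borel--Cantelli applied to the events $\{W_t > \epsilon\}$, whose probabilities are summable by Markov's inequality and the geometric bound, would give the same conclusion.
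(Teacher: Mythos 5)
Your proposal is correct, and its first half — cancelling the common $\fe(S_t)$ term, invoking the nonexpansiveness in Condition~\ref{cond-lambda}(i) to get $\|\e_t - \hat\e_t\| \leq \gamma_t\,\rho_{t-1}\,\|\e_{t-1}-\hat\e_{t-1}\|$, and reducing everything to showing $W_t = \prod_{k=1}^t \gamma_k\rho_{k-1} \asto 0$ — is exactly the paper's decomposition, including the computation $\E[W_t \mid S_0=s] = \big((\P\Gm)^t\1\big)_s$. Where you diverge is in the mechanism for upgrading decay of the mean to almost sure convergence. The paper observes that $\Delta_t = \|\e_t - \hat\e_t\|$ satisfies $\E[\Delta_t \mid \F_{t-1}] \leq \Delta_{t-1}$, so $\{(\Delta_t,\F_t)\}$ is a nonnegative supermartingale; the supermartingale convergence theorem then hands over a.s.\ convergence of $\Delta_t$ to some limit $\Delta_\infty$ for free, and only $\liminf_t \E[\Delta_t] = 0$ (i.e., $(\P\Gm)^t \to 0$, no rate needed) is required to identify $\Delta_\infty = 0$. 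You instead extract the geometric rate $\E[W_t] \leq \C r^t$ from the Perron--Frobenius argument that Condition~\ref{cond-pol}(i) forces the spectral radius of the nonnegative matrix $\P\Gm$ strictly below $1$, and then conclude via summability of first moments (Tonelli, or equivalently Borel--Cantelli with Markov's inequality). Your route is more elementary in that it avoids martingale theory entirely, at the cost of needing the quantitative geometric bound rather than mere convergence of $(\P\Gm)^t$ to zero; in this finite-state setting the rate comes free from the same spectral fact the paper already uses, so nothing is lost. One minor point worth flagging: the identity $\E[\gamma(S_k)\rho(S_{k-1},S_k)\mid S_{k-1}=s] = (\P\Gm\1)_s$ implicitly sums only over transitions with $\P^o_{ss'}>0$, so strictly one gets an inequality $\leq$ unless Condition~\ref{cond-pol}(ii) is invoked; an upper bound is all you need, and the paper is equally terse on this, so it is not a gap.
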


\begin{proof}
The proof is similar to that of \cite[Lemma 3.2]{Yu-siam-lstd}. 
Let $\Delta_t = \| \e_{t} - \hat \e_{t} \|$, and let $\F_t$ denote the $\sigma$-algebra generated by $S_k, k \leq t$.
Note that under our assumption, in the generation of the two trace sequences $\{\e_t\}$ and $\{\hat \e_t\}$, the states $\{S_t\}$ and the memory states $\{y_t\}$ are the same, but the $\lambda$-parameters are different. Let us denote them by $\{\lambda_t\}$ and $\{\hat \lambda_t\}$ for the two trace sequences, respectively.
Then by (\ref{eq-td1}),
$\e_t - \hat \e_t =  \gamma_t \,  \rho_{t-1} \, (\lambda_t \e_{t-1} - \hat \lambda_t \hat \e_{t-1})$, 
and by Condition~\ref{cond-lambda}(i), $\| \lambda_t \e_{t-1} - \hat \lambda_t \hat \e_{t-1} \| \leq \| \e_{t-1} - \hat \e_{t-1} \|$.
Hence $\| \e_{t} - \hat \e_{t} \| \leq \gamma_t \,  \rho_{t-1} \, \| \e_{t-1} - \hat \e_{t-1} \|$, 
so
$ \E \big[ \Delta_t \big| \F_{t-1} \big] \leq \E \big[ \gamma_t \,  \rho_{t-1} \big| \F_{t-1} \big] \cdot  \Delta_{t-1} \leq \Delta_{t-1}.$
This shows $\{(\Delta_t, \F_t)\}$ is a nonnegative supermartingale. By the supermartingale convergence theorem \cite[Theorem 10.5.7 and Lemma 4.3.3]{Dud02}, $\{\Delta_t\}$ converges a.s.\ to a nonnegative random variable $\Delta_\infty$ with $\E [ \Delta_\infty ] \leq \liminf_{t \to \infty} \E [ \Delta_t ]$.
From the inequality $\| \e_{t} - \hat \e_{t} \| \leq \gamma_t \,  \rho_{t-1} \, \| \e_{t-1} - \hat \e_{t-1} \|$ for all $t$, we have $\Delta_t \leq \Delta_0 \cdot \prod_{k=1}^t \gamma_k \rho_{k-1}$, from which a direct calculation shows 
$\E \big[ \Delta_t \big] \leq \Delta_0 \cdot \1^\tr (\P \Gamma)^t \1$ where $\1$ denotes the $n$-dimensional vector of all $1$'s. 
As $t \to \infty$, $(\P \Gamma)^t$ converges to the zero matrix under Condition~\ref{cond-pol}(i).
Therefore, $\liminf_{t \to \infty} \E [ \Delta_t ] = 0$ and consequently, we must have $\Delta_\infty = 0$ a.s., i.e., $\Delta_t \asto 0$. 
\end{proof}

We use Lemma~\ref{lem-1} and ergodicity properties of weak Feller Markov chains \citep{Meyn89} to prove the ergodicity theorem below. A direct application to LSTD will be discussed immediately after the theorem, before we give its proof.

To state the result, we need some terminology and notation. For $\{(S_t, y_t, \e_t)\}$ starting from the initial condition $x = (s, y, \e)$, we write $\Pr_x$ for its probability distribution, and we write ``$\Pr_x$-a.s.'' for ``almost surely with respect to $\Pr_x$.''  
The \emph{occupation probability measures} are denoted by $\{\mu_{x,t}\}$, and they are random probability measures on $\S \times \M \times \rn$ given by
$$\textstyle{\mu_{x,t}(D) : = \frac{1}{t} \sum_{k=0}^{t-1} \I \big( (S_k, y_k, \e_k) \in D \big) \qquad \forall \  \text{Borel sets $D \subset \S \times \M \times \rn$}, }$$
where $\I(\cdot)$ is the indicator function. We are interested in the asymptotic convergence of these occupation probability measures in the sense of \emph{weak convergence}: for probability measures $\{\mu_t\}$ and $\mu$ on a metric space, $\{\mu_t\}$ converges weakly to $\mu$ if $\int f d \mu_{t} \to \int f d \mu$ as $t \to \infty$, for every bounded continuous function $f$.

We shall also consider the Markov chain $\{(S_t, S_{t+1}, y_t, \e_t)\}$, whose occupation probability measures are defined likewise. 
This Markov chain is essentially the same as $\{(S_t,y_t,\e_t)\}$, but it is more convenient for applying our ergodicity result to TD algorithms because the temporal-difference term $\delta_t(v)$ involves $(S_t, S_{t+1}, \e_t)$. Regarding invariant probability measures of the two Markov chains, obviously, if $\zeta$ is an invariant probability measure of $\{(S_t, y_t, \e_t)\}$, then an invariable probability measure of $\{(S_t, S_{t+1}, y_t, \e_t)\}$ is the probability measure $\zeta_1$ composed from the marginal $\zeta$ and the conditional distribution of $S_1$ given $(S_0, y_0, \e_0)$ specified by $\P^o$; i.e., 
\begin{equation} \label{eq-zeta1}
\zeta_1(D) = \textstyle{\int \sum_{s' \in \S} \P^o_{ss'}  \, \I\big( (s, s', y, e) \in D\big)  \, \zeta\big(d (s,y,e)\big)} \qquad \forall \ \text{Borel sets $D \subset \S^2 \times \M \times \rn$}.
\end{equation}
(In the above, we used the notation $\int f(x) \,\zeta(dx)$ to write the integral of $f$ w.r.t.\ $\zeta$, and the notation $\zeta\big(d(s,y,e)\big)$ is the same as $\zeta(dx)$ with $x = (s,y,e)$.)

\begin{thm} \label{thm-1}
Let Conditions~\ref{cond-pol}-\ref{cond-lambda} hold. Then $\{(S_t, y_t, \e_t)\}$ is a weak Feller Markov chain and has a unique invariant probability measure $\zeta$. For each initial condition $x:=(s,y,\e)$ of $(S_0, y_0, \e_0)$, the occupation probability measures $\{\mu_{x,t}\}$ converge weakly to $\zeta$, $\Pr_x$-a.s.

Likewise, the same holds for $\{(S_t, S_{t+1}, y_t, \e_t)\}$, whose unique invariant probability measure is as given in (\ref{eq-zeta1}).
\end{thm}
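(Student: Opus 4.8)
The plan is to build on three facts already in place: $\{(S_t,y_t,\e_t)\}$ is weak Feller, it has at least one invariant probability measure (its traces stay in a fixed compact ball by Condition~\ref{cond-lambda}(ii)), and Lemma~\ref{lem-1} shows the traces asymptotically forget their initial value. What remains is uniqueness of the invariant measure and the almost-sure weak convergence of the occupation measures, and I would obtain both from the ergodic theory of equicontinuous (e-)chains in~\citep{Meyn89}. The first step is therefore to upgrade the weak Feller property to the e-chain property. Writing $Q^k f(x) = \E_x[f(S_k,y_k,\e_k)]$ for a bounded continuous $f$ and $x=(s,y,\e)$, the pathwise bound established inside the proof of Lemma~\ref{lem-1} gives $\|\e_t - \hat\e_t\| \le \|\e_0 - \hat\e_0\|\prod_{k=1}^t \gamma_k\rho_{k-1}$ for two trace runs driven by the same states, hence $\E\|\e_t - \hat\e_t\| \le \|\e_0 - \hat\e_0\|\cdot \1^\tr(\P\Gm)^t\1 \to 0$ by Condition~\ref{cond-pol}(i). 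Since the traces live in a compact set and $\S\times\M$ is finite, $f$ is uniformly continuous there; combining this with the bound (and a Markov-inequality split on $\{\|\e_t - \hat\e_t\|>\delta\}$) yields equicontinuity of $\{Q^t f\}$ in the trace coordinate uniformly in $t$, together with $Q^t f(s,y,\e) - Q^t f(s,y,\e') \to 0$. This is the e-chain property (the $(s,y)$-coordinate is discrete, so continuity there is free).

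For uniqueness, I would use the e-chain Ces\`aro convergence from~\citep{Meyn89}: the averages $A_n f := \tfrac1n\sum_{k=0}^{n-1}Q^k f$ converge pointwise to a bounded continuous invariant function $f^*$. The coupling just established forces $f^*(s,y,\e)=f^*(s,y,\e')$, so $f^*=h(s,y)$ depends only on the state--memory pair. Because $S_1,y_1$ do not depend on $\e$, invariance of $f^*$ reduces to $h = \tilde Q h$, where $\tilde Q$ is the transition matrix of the finite chain $\{(S_t,y_t)\}$; on the single recurrent class of Condition~\ref{cond-mem} this chain is irreducible, so its only bounded harmonic functions are constants, giving $f^* \equiv c(f)$ on that class. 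Then for any invariant measure $\zeta$, invariance gives $\int f\,d\zeta = \int A_n f\,d\zeta \to \int f^*\,d\zeta = c(f)$, and since the state--memory marginal of every invariant measure is the unique invariant distribution supported on the recurrent class, the value $c(f)$ is the same for all invariant measures. As $f$ ranges over bounded uniformly continuous functions this determines the measure, so the invariant measure $\zeta$ is unique.

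With the e-chain property and uniqueness in hand, the occupation-measure conclusion follows from the ergodic theorem for e-chains in~\citep{Meyn89}: Condition~\ref{cond-lambda}(ii) confines the traces to a compact set, so $\{\mu_{x,t}\}$ is tight and its weak limit points are $\Pr_x$-almost surely invariant by the weak Feller property, and uniqueness collapses them to the single measure $\zeta$, giving $\mu_{x,t}\to\zeta$ weakly $\Pr_x$-a.s.\ for every $x$. For the augmented chain $\{(S_t,S_{t+1},y_t,\e_t)\}$, note its one-step transition is again continuous in the trace and has bounded traces, so it is again a weak Feller, equicontinuous chain to which the same argument applies; its unique invariant measure is then identified with the $\zeta_1$ of~(\ref{eq-zeta1}), which is invariant by construction, and the occupation-measure statement transfers verbatim.

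The step I expect to be the main obstacle is the passage from Lemma~\ref{lem-1} to a \emph{quantitative} e-chain estimate valid uniformly in the time index $t$. Lemma~\ref{lem-1} by itself only asserts that traces forget their initial value; because $\lambda(y,\e)$ depends on the trace, the recursion is nonlinear and there is no explicit stationary trace to fall back on (unlike the constant-$\lambda$ case, where the stationary trace is a geometric series). Turning the coupling into equicontinuity uniform in $t$, and then feeding it into the e-chain machinery of~\citep{Meyn89} to get the continuous Ces\`aro limit $f^*$, is where the real work lies; the constancy of $f^*$ and the tightness and weak-limit bookkeeping are comparatively routine.
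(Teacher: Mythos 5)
Your proposal is correct, but it follows a genuinely different route from the paper's proof, and in fact it is essentially the alternative route that the authors describe in Remark~\ref{rem-prf-thm1} as the one they could not easily carry out. The paper never proves uniqueness first: it takes an existing invariant measure $\zeta$, applies \cite[Prop.\ 4.1]{Meyn89} to get a.s.\ weak convergence of $\{\mu_{x,t}\}$ for $\zeta$-a.e.\ initial condition, uses the coupling of Lemma~\ref{lem-1} to show all traces paired with a fixed recurrent $(s,y_s)$ yield the same limit, extends to arbitrary initial conditions via the strong Markov property at the hitting time of $(s,y_s)$, and only then deduces uniqueness from the universal convergence. You instead upgrade the weak Feller property to the e-chain property (your uniform-in-$t$ equicontinuity does go through, since $\sup_t \1^\tr(\P\Gm)^t\1<\infty$ makes the Markov-inequality split work with a single $\delta$), prove uniqueness \emph{first} by showing the Ces\`aro limit $f^*$ is harmonic, depends only on $(s,y)$ by the coupling, and is therefore constant on the single recurrent class of $\{(S_t,y_t)\}$ on which every invariant measure is supported, and then invoke the ``unique invariant measure $+$ tightness $+$ weak Feller $\Rightarrow$ a.s.\ weak convergence of occupation measures'' result --- exactly \cite[Prop.~4.2]{Meyn89}, as the paper's own remark anticipates. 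Both arguments rest on the same coupling estimate from Lemma~\ref{lem-1}; the paper's route avoids the e-chain machinery and the Ces\`aro-limit kernel (for which the cleanest reference is \cite[Theorem 12.4.1]{MeT09} rather than \citep{Meyn89}), while yours concentrates all the work in the uniqueness step, yields the slightly stronger e-chain structural conclusion, and arguably has a cleaner logical order. One could even bypass the e-chain theory in your uniqueness step: two invariant measures share the same $(s,y)$-marginal, so coupling two stationary trace processes on a common state trajectory and applying Lemma~\ref{lem-1} directly shows they integrate every bounded continuous function identically.
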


If the initial distribution of $(S_0, y_0, \e_0)$ is $\zeta$, the state-trace process $\{(S_t, y_t, \e_t)\}$ is stationary. Let $\E_\zeta$ denote expectation w.r.t.\ this stationary process. We now state a corollary of the above theorem for LSTD, before we prove the theorem. 

Consider the sequence of equations in $v$,
$ \tfrac{1}{t} \sum_{k=0}^{t-1}  \, \e_k \, \delta_k (v) = 0$, appeared in (\ref{eq-lstd}) for LSTD. From the definition (\ref{eq-td2}) of $\delta_t(v)$, 
$$ \delta_t(v) = \rho_t \, \big( R_{t} + \gamma_{t+1} v(S_{t+1}) - v(S_t) \big),$$
we see that for fixed $v$, every $\e_k \, \delta_k(v)$ can be expressed as $f(S_k, S_{k+1}, \e_k)$ for a continuous function $f$. 
Since the traces and hence the entire process lie in a bounded set under Condition~\ref{cond-lambda}(ii), the weak convergence of the occupation probabilities measures of $\{(S_t, S_{t+1}, y_t, \e_t)\}$ shown by Theorem~\ref{thm-1} implies that this sequence of equations has an asymptotic limit that can be expressed in terms of the stationary state-trace process as follows.

\begin{cor} \label{cor-1}
Let Conditions~\ref{cond-pol}-\ref{cond-lambda} hold. Then for each initial condition of $(S_0, y_0, \e_0)$, almost surely, the sequence of linear equations in $v$, $ \tfrac{1}{t} \sum_{k=0}^{t-1}  \, \e_k \, \delta_k (v) = 0$, tends asymptotically to $\E_\zeta [ \, \e_0 \, \delta_0(v) ] = 0$ (also a linear equation in $v$), in the sense that the random coefficients in the former equations converge to the corresponding coefficients in the latter equation as $t \to \infty$.
\end{cor}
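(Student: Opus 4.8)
The plan is to turn the LSTD equation into an explicit random linear system in $\theta$, recognize each of its coefficients as a time-average of a continuous function of the chain $\{(S_t,S_{t+1},y_t,\e_t)\}$, and then read off the limit from the weak convergence of occupation measures supplied by Theorem~\ref{thm-1}. Concretely, substituting $v=\Phi\theta$ and the definition (\ref{eq-td2}) of $\delta_k(v)$ into $\tfrac1t\sum_{k=0}^{t-1}\e_k\delta_k(v)=0$ rewrites it as $A_t\,\theta=b_t$, where
$$ A_t = \tfrac1t\sum_{k=0}^{t-1}\e_k\,\rho_k\big(\fe(S_k)-\gamma_{k+1}\fe(S_{k+1})\big)^\tr, \qquad b_t = \tfrac1t\sum_{k=0}^{t-1}\e_k\,\rho_k\,R_k. $$
Every scalar entry of $A_t$ and of $b_t$ then has the form $\tfrac1t\sum_{k=0}^{t-1}f(S_k,S_{k+1},\e_k)=\int f\,d\mu_{x,t}$, where $\mu_{x,t}$ is the occupation measure of $\{(S_t,S_{t+1},y_t,\e_t)\}$ and $f$ is a function on $\S^2\times\M\times\rn$ assembled from $\rho,\gamma,\fe,r$ evaluated at the (finitely many) state pairs, times a single linear factor in $\e$. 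Since $\S$ is finite, each such $f$ is continuous, and the corresponding coefficient of the target equation $\E_\zeta[\e_0\delta_0(v)]=0$ is exactly $\int f\,d\zeta_1$, with $\zeta_1$ as in (\ref{eq-zeta1}).

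The step I expect to be the main obstacle is that these $f$ are continuous but \emph{not} bounded — they grow linearly in $\e$ — whereas the weak convergence $\mu_{x,t}\to\zeta_1$ in Theorem~\ref{thm-1} only tests against \emph{bounded} continuous functions. The fix is Condition~\ref{cond-lambda}(ii): for every $k\ge 1$ the trace lies in the fixed bounded set $B=\{\e:\|\e\|\le \max_{y}\C_y+\max_s\|\fe(s)\|\}$, and since the process is confined to $B$ for $k\ge1$, by stationarity the invariant measure $\zeta$, and hence $\zeta_1$, is supported on $\S^2\times\M\times B$. I would therefore multiply $f$ by a continuous cutoff that equals $1$ on $B$ and vanishes outside a slightly larger ball, obtaining a bounded continuous $\tilde f$ that agrees with $f$ on the support of $\zeta_1$ and on every trace with index $k\ge 1$. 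Consequently $\int\tilde f\,d\zeta_1=\int f\,d\zeta_1$, while $\int\tilde f\,d\mu_{x,t}$ and $\int f\,d\mu_{x,t}$ can differ only through the single $k=0$ term, whose weight $1/t$ vanishes as $t\to\infty$.

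With the integrands truncated, the conclusion is immediate. Applying Theorem~\ref{thm-1} to $\tilde f$, for each initial condition $x$ we have $\int\tilde f\,d\mu_{x,t}\to\int\tilde f\,d\zeta_1$, $\Pr_x$-a.s.; combined with the two equalities from the cutoff argument this yields $\int f\,d\mu_{x,t}\to\int f\,d\zeta_1$, $\Pr_x$-a.s. Intersecting the finitely many almost-sure events, one for each entry of $A_t$ and $b_t$, gives $A_t\to\E_\zeta\big[\e_0\rho_0(\fe(S_0)-\gamma_1\fe(S_1))^\tr\big]$ and $b_t\to\E_\zeta[\e_0\rho_0 R_0]$ almost surely. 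These are precisely the coefficients of the linear equation $\E_\zeta[\e_0\delta_0(v)]=0$, so the random coefficients of the LSTD equations converge to the corresponding coefficients of the limiting equation, as claimed.
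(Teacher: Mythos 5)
Your proposal is correct and follows essentially the same route as the paper: the paper also observes that each $\e_k\,\delta_k(v)$ is a continuous function $f(S_k,S_{k+1},\e_k)$ and invokes the weak convergence of the occupation measures of $\{(S_t,S_{t+1},y_t,\e_t)\}$ from Theorem~\ref{thm-1} together with the boundedness of the traces under Condition~\ref{cond-lambda}(ii). Your cutoff argument merely spells out the detail (handling the unbounded linear-in-$\e$ integrand and the possibly out-of-range $k=0$ term) that the paper's brief justification leaves implicit.
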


In the rest of this section we prove Theorem~\ref{thm-1}. Broadly speaking, the line of argument is as follows: We first prove the weak convergence of occupation probability measures to the same invariant probability measure, for each initial condition. This will in turn imply the uniqueness of the invariant probability measure. 

After the proof we will first comment in Remark~\ref{rem-prf-thm1} on the differences between our proof and that of a similar result in the previous work \citep{Yu-siam-lstd}. We will then comment in Remark~\ref{rem-conditions-thm1} about the technical roles of Condition~\ref{cond-lambda} (which concerns the choice of the function $\lambda(\cdot)$) and whether some part of that condition can be relaxed.

\begin{proof}[Proof of Theorem~\ref{thm-1}]
As we discussed before Lemma~\ref{lem-1}, under Conditions \ref{cond-lambda}, $\{(S_t, y_t, \e_t)\}$ is weak Feller and has at least one invariant probability measure $\zeta$. Then, by \cite[Prop.\ 4.1]{Meyn89}, there exists a set $D \subset \S \times \M \times \rn$ with $\zeta$-measure $1$ such that for each initial condition $x = (s, y, \e) \in D$, the occupation probability measures $\{\mu_{x,t}\}$ converge weakly, $\Pr_x$-a.s., to an invariant probability measure $\mu_x$ that depends only on the initial condition $x$. To prove the theorem using this result, we need to show that (i) all these $\{\mu_x \mid x \in D\}$ are the same invariant probability measure, and (ii) for all $x \not\in D$, $\{\mu_{x,t}\}$ has the same weak convergence property.

To this end, we first consider an arbitrary pair $(s, y_s)$ in the recurrent class of $\{(S_t, y_t)\}$ (cf.~Condition~\ref{cond-mem}). Let us show that for all initial conditions $x  \in \{(s, y_s, \e) \mid \e \in \rn \}$, $\{\mu_{x,t}\}$ converges weakly to the same invariant probability measure, almost surely.

Since the finite-state Markov chain $\{(S_t, y_t)\}$ has a single recurrent class (Condition~\ref{cond-mem}) and its evolution is not affected by $\{\e_t\}$, the marginal of $\zeta$ on $\S \times \M$ coincides with the unique invariant probability distribution of $\{(S_t, y_t)\}$. So the fact that $\zeta(D)=1$ and $(s, y_s)$ is a recurrent state of $\{(S_t, y_t)\}$ implies that there exists some $\hat \e$ with $(s, y_s, \hat \e) \in D$. For the initial condition $\hat x =(s, y_s, \hat \e)$, by the result of \citep{Meyn89} mentioned earlier, $\{\mu_{\hat x, t}\}$ converges weakly to $\mu_{\hat x}$, almost surely. 

Now consider $x = (s, y_s, \e)$ for an arbitrary $\e \in \rn$. Generate iterates $\{\hat \e_t\}$ and $\{\e_t\}$ according to (\ref{eq-td1}), using the same trajectory $\{(S_t, y_t)\}$ with $(S_0, y_0) = (s, y_s)$, but with $\hat \e_0 = \hat \e$ and $\e_0 = \e$. 
By Lemma~\ref{lem-1}, $\hat \e_t - \e_t \asto 0$. 
Therefore, except on a null set of sample paths, it holds for all bounded Lipschitz continuous functions $f$ on $\S \times \M \times \rn$ that
\footnote{Here we are using the same $(S_k, y_k), k \leq t$ in the occupation probability measures $\mu_{\hat x, t}$ and $\mu_{x, t}$. This is valid because the $\e_t$'s do not affect the evolution of $\{(S_t, y_t)\}$ and are functions of these states and the given initial $\e_0$. If we call the $\mu_{x, t}$ here $\tilde \mu_{x,t}$ instead and define $\mu_{x,t}$ using another independent copy of $\{(S_t, y_t)\}$, then since the two sequences of occupation probability measures will have the same probability distribution, $\{\mu_{x,t}\}$ will have the same weak convergence property as $\{\tilde \mu_{x,t}\}$.
}
\begin{equation} \label{eq-prf1}
\textstyle{ \left| \int f d \mu_{\hat x, t} - \int f d \mu_{x, t} \right| = \left| \frac{1}{t} \sum_{k=0}^{t-1} f(S_k, y_k, \hat \e_k) - \frac{1}{t} \sum_{k=0}^{t-1} f(S_k, y_k, \e_k) \right| \to 0.}
\end{equation} 
By the a.s.\ weak convergence of $\mu_{\hat x, t}$ to $\mu_{\hat x}$ proved earlier, except on a null set, $\int f d \mu_{\hat x, t} \to \int f d \mu_{\hat x}$ for all such functions $f$. Combining this with (\ref{eq-prf1}) yields that almost surely, $\int f d \mu_{x, t}  \to \int f d \mu_{\hat x}$ for all such $f$.
By \cite[Theorem 11.3.3]{Dud02}, this implies that almost surely, $\mu_{x,t} \to \mu_{\hat x}$ weakly. 

Thus we have proved that for all initial conditions $x =(s, y_s, \e), \e \in \rn$, $\{\mu_{x,t}\}$ converges weakly, almost surely, to the same invariant probability measure $\mu_{\hat x}$. Denote $\mu = \mu_{\hat x}$. Let us now show that for any initial condition $x$, $\{\mu_{x,t}\}$ also converges to $\mu$, $\Pr_x$-a.s. 

Consider $\{(S_t, y_t, \e_t)\}$ with an arbitrary initial condition $\bar x = (\bar s, \bar y, \bar \e)$.
Let $\tau = \min \{ t \mid (S_t, y_t) = (s,  y_s) \}$ (the pair $(s,y_s)$ is as in the proof above). 
Note that $\tau < \infty$ a.s., because $(s,y_s)$ is a recurrent state of $\{(S_t, y_t)\}$.
Define $(\tilde S_k, \tilde y_k) =  (S_{\tau+k}, y_{\tau+k})$, $\tilde \e_{k} = \e_{\tau+k}$ for $k \geq 0$.

By the strong Markov property (see e.g.~\citealp[Theorem 3.3]{Num84}), $\{(\tilde S_{k}, \tilde y_{k})\}_{k \geq 0}$ has the same probability distribution as the Markov chain $\{(S_t,y_t)\}$ that starts from $(S_0, y_0) = (s, y_s)$.
Therefore, by the preceding proof, $\Pr_{\bar x}$-almost surely, for all bounded continuous functions $f$ on $\S \times \M \times \rn$,
\begin{equation} \label{eq-prf2}
\lim_{m \to \infty} \textstyle{ \frac{1}{m} \sum_{k=0}^{m-1} f(\tilde S_{k}, \tilde y_{k}, \tilde \e_k)} = \textstyle{ \int f d \mu } .
\end{equation}
 Denote $a \wedge b = \min \{a, b\}$. Using (\ref{eq-prf2}) and the fact $\tau < \infty$ a.s., we have that $\Pr_{\bar x}$-almost surely, 
 \begin{align*}
\lim_{t \to \infty} \textstyle{ \frac{1}{t} \sum_{k=0}^{t-1} f(S_k, y_k, \e_k)} & = \lim_{t \to \infty}  \left( \textstyle{  \frac{1}{t} \sum_{k=0}^{t \wedge (\tau-1)} f(S_k, y_k, \e_k)} + \textstyle{ \frac{1}{t} \sum_{k=\tau}^{t-1} f(S_{k}, y_{k}, \e_k)} \right)   \\
                       & = \lim_{t \to \infty} \textstyle{ \frac{1}{t} \sum_{k=0}^{t-\tau-1} f(S_{\tau+k}, y_{\tau+k}, \e_{\tau +k})} \\
                       & = \lim_{m \to \infty} \textstyle{ \frac{1}{m} \sum_{k=0}^{m-1} f(\tilde S_{k}, \tilde y_{k}, \tilde \e_k)} =  \textstyle{\int f d \mu }.
\end{align*}                       
This proves that $\{\mu_{x,t}\}$ converges weakly to $\mu$ almost surely, for each initial condition $x$.

It now follows that $\mu$ must be the unique invariant probability measure of $\{(S_t, y_t, \e_t)\}$.
To see this, suppose $\zeta$ is another invariant probability measure. 
For any bounded continuous function $f$, by stationarity, 
$\E_{\zeta} [ \tfrac{1}{t} \sum_{k=0}^{t-1} f(S_k, y_k, \e_k) ] = \int f d\zeta$ for all $t \geq 1$.
On the other hand, 
the preceding proof has established that for all initial conditions $x$,
$$ \textstyle{\tfrac{1}{t} \sum_{k=0}^{t-1} f(S_k, y_k, \e_k)} = \int f d \mu_{x,t} \to \int f d \mu, \quad \text{$\Pr_x$-a.s.},$$ 
which implies that if $\zeta$ is the initial distribution of $(S_0, y_0,\e_0)$, then $\tfrac{1}{t} \sum_{k=0}^{t-1} f(S_k, y_k, \e_k) \to \int f d \mu$, $\Pr_\zeta$-a.s. We thus have 
\begin{align*}
 \textstyle{ \int f d\zeta = \E_{\zeta} \Big[ \frac{1}{t} \sum_{k=0}^{t-1} f(S_k, y_k, \e_k) \Big]} & = \lim_{t \to \infty} \textstyle{  \E_{\zeta} \Big[ \frac{1}{t} \sum_{k=0}^{t-1} f(S_k, y_k, \e_k) \Big]} \\
 &  = \E_{\zeta} \Big[ \lim_{t \to \infty}   \textstyle{ \frac{1}{t} \sum_{k=0}^{t-1} f(S_k, y_k, \e_k) \Big]}  = \textstyle{ \int f d \mu,}
\end{align*} 
where the third equality follows from the bounded convergence theorem.
This shows $\int f d\zeta = \int f d \mu$ for all bounded continuous functions $f$, and hence $\zeta = \mu$ by \cite[Prop.\ 11.3.2]{Dud02}, proving the uniqueness of the invariant probability measure.

The conclusions for the Markov chain $\{(S_t, S_{t+1}, y_t, \e_t)\}$ follow from the same arguments given above, if we replace $S_t$ with $(S_t, S_{t+1})$ and replace the set $\S$ with the set of possible state transitions. (We could have proved the assertions for $\{(S_t, S_{t+1}, y_t, \e_t)\}$ first and then deduced as their implications the assertions for $\{(S_t, y_t, \e_t)\}$. We treated the latter first, as it makes the notation in the proof simpler.)
\end{proof}

\begin{rem}[About the proof] \label{rem-prf-thm1} \rm
Theorem~\ref{thm-1} is similar to \cite[Theorem 3.2]{Yu-siam-lstd} for off-policy LSTD with constant $\lambda$ (the analysis given in \citep{Yu-siam-lstd} also applies to state-dependent $\lambda$). Some of the techniques used to prove the two theorems are also similar. The main difference to \citep{Yu-siam-lstd} is that in the proof here we used an argument based on the strong Markov property to extend the weak convergence property of $\{\mu_{x,t}\}$ for a subset of initial conditions $x \in \{(s, y_s, \e) \mid \e \in \rn\}$ to all initial conditions, whereas in \citep{Yu-siam-lstd} this step was proved using a result on the convergence-in-mean of LSTD iterates established first. The latter approach would not work here due to the dependence of $\lambda_t$ on the history. Indeed, due to this dependence, the proof of the convergence-in-mean of LSTD given in \citep{Yu-siam-lstd} does not carry over to our case, even though that convergence does hold as a consequence of Theorem~\ref{thm-1}, in view of the boundedness of traces by construction. Compared with the proof of the ergodicity result in \citep{Yu-siam-lstd}, the proof we gave here is more direct and therefore better.

Regarding possible alternative proofs of Theorem~\ref{thm-1}, let us also mention that if we prove first the uniqueness of the invariant probability measure, then, since $\{(S_t, y_t, \e_t)\}_{t \geq 1}$ lie in a bounded set, the weak convergence of occupation probability measures will follow immediately from \cite[Prop.~4.2]{Meyn89}. However, because the evolution of the $\lambda_t$'s depends on both states and traces, it does not seem easy to us to prove directly the uniqueness part first. 
\qed \end{rem}

\begin{rem}[About the conditions on the function $\lambda(\cdot)$] \label{rem-conditions-thm1} \rm
Our proof of Theorem~\ref{thm-1} relied on Lemma~\ref{lem-1} and the two properties discussed preceding that lemma, namely, that $\{(S_t, y_t, \e_t)\}$ is a weak Feller Markov chain and has at least one invariant probability measure. As long as these hold when we weaken or change the conditions on the function $\lambda(\cdot)$, the proof and the conclusions of the theorem will remain applicable.

We introduced Condition~\ref{cond-lambda}(ii) to bound the traces for algorithmic concerns. For the ergodicity of the state-trace process, Condition~\ref{cond-lambda}(ii) is unimportant---in fact, it can be removed from the conditions of Theorem~\ref{thm-1}. The reason is that we used this condition before Lemma~\ref{lem-1} to quickly infer that $\{(S_t, y_t, \e_t)\}$ has at least one invariant probability measure, but this is still true without Condition~\ref{cond-lambda}(ii), in view of \cite[Theorem 12.1.2(ii)]{MeT09} and the fact that under Condition~\ref{cond-pol}(i), $\{\e_t\}$ is bounded in probability (the proof of this fact is straightforward and similar to the proof of \cite[Lemma 3.1]{Yu-siam-lstd} or \cite[Prop.~A.1]{yu-etdarx}).

Condition~\ref{cond-lambda}(i) is actually two conditions combined into one. The first is the continuity of $\lambda(y, \e) \e$ in $\e$ for each $y$, which was used to ensure that the state-trace process is a weak Feller Markov chain. To be more general, instead of letting the evolutions of the traces and memory states be governed by the functions $\lambda$ and $g$, one may consider letting them be governed by stochastic kernels. Then by placing a suitable continuity condition on the stochastic kernel $\lambda$, one can ensure that the state-trace process has the desired weak Feller Markov property.

The second condition packed into Condition~\ref{cond-lambda}(i) is that for each $y$, $\lambda(y, \e) \e$ is a Lipschitz continuous function of $\e$ with modulus $1$. This condition is somewhat restrictive, and one may consider instead allowing the function to have Lipschitz modulus greater than $1$. However, additional conditions are then needed to ensure that Lemma~\ref{lem-1} holds. (If this lemma does not hold, then the state-trace process may not be ergodic and one will need a different approach than the one we took to characterize the sample path properties of the state-trace process.)

From an algorithmic perspective, if it is desirable to choose even larger $\lambda_t$'s or to have greater flexibility in choosing these $\lambda$-parameters, some of the generalizations just mentioned can be considered. For example, Condition~\ref{cond-lambda}(ii) can be replaced and stochastic kernels can be introduced to allow for occasionally large traces $\e_t$, so that instead of having the traces bounded, one only make their variances bounded in a desired range.
\qed \end{rem}

\section{Generalized Bellman Equations} \label{sec-3}

In this section, we continue the analysis started in Section~\ref{sec-2.3}. 
Recall that Corollary~\ref{cor-1} established that the asymptotic limit of the linear equations (\ref{eq-lstd}) for LSTD is the linear equation (in $v$): 
$$\E_\zeta [ \, \e_0 \, \delta_0(v) ] = 0.$$
Our goal now is to relate this equation
to a generalized Bellman equation for the target policy $\pi$. This will then allow us to interpret solutions of (\ref{eq-lstd}) computed by LSTD as solutions of approximate versions of that generalized Bellman equation. 

To this end, we will first give a general description of randomized stopping times and associated Bellman operators (Section~\ref{sec-3.1}). We will then use these notions to derive the particular Bellman operators that correspond to our choices of the $\lambda$-parameters and appear in the linear equations for LSTD (Section~\ref{sec-3.2}). We will also discuss a composite scheme of choosing the $\lambda$-parameters as a direct application and extension of our results.

To simplify notation in subsequent derivations, we shall use the following shorthand notation: For $k \leq m$, denote $S_k^m = (S_k, S_{k+1}, \ldots S_m)$, 
\begin{equation}
\textstyle{\rho_k^m = \prod_{i=k}^m  \rho_i, \qquad \lambda_k^m = \prod_{i=k}^m  \lambda_i, \qquad \gamma_k^m = \prod_{i=k}^m  \gamma_i}. \label{eq-prod}
\end{equation}
Also, we shall treat $\rho_k^m = \lambda_k^m = \gamma_k^m = 1$ if $k > m$.

\subsection{Randomized Stopping Times and Associated Bellman Operators} \label{sec-3.1}

Consider the Markov chain $\{S_t\}$ induced by the target policy $\pi$.  Let Condition~\ref{cond-pol}(i) hold. Recall that for the value function $v_\pi$, we have that for each state $s \in \S$,
$$v_{\pi}(s) = \textstyle{\E^\pi_s \big[ \sum_{t=0}^\infty \gamma_1^t \, r_\pi(S_t) \big]} \ \ \text{(by definition)}$$
and 
$$ v_\pi(s) = r_\pi(s) + \E_s^\pi [ \gamma_1 v_\pi(S_1)]. $$ 
The second equation is the standard one-step Bellman equation.

To write generalized Bellman equations for $\pi$, we shall make use of \emph{randomized stopping times} for $\{S_t\}$, a notion that generalizes naturally stopping times for $\{S_t\}$ in that whether to stop at time $t$ depends not only on the past states $S_0^t$ but also on certain random outcomes. A simple example is to toss a coin at each time and stop as soon as the coin lands on heads, regardless of the history $S_0^t$. (The corresponding Bellman equation is the one associated with TD($\lambda$) for a constant~$\lambda$; cf.~Example~\ref{ex-td-tau}.) Of interest here is the general case where the stopping decision does depend on the entire history. 

To define a randomized stopping time formally, first, the probability space of $\{S_t\}$ is enlarged to take into account whatever randomization scheme that is used to make the stopping decision. (The enlargement will be problem-dependent, as the next subsection will demonstrate.) 
Then, on the enlarged space, a randomized stopping time $\tau$ for $\{S_t\}$ is a stopping time
\footnote{A random time $\tau$ is called a stopping time relative to a sequence $\{\F_t\}$ of increasing $\sigma$-algebras if the event $\{\tau = t\} \in \F_t$ for every $t$.}
relative to some increasing sequence of $\sigma$-algebras $\mathcal{F}_0 \subset \mathcal{F}_1 \subset \cdots$, where the sequence $\{\F_t\}$ is such that\vspace*{-0.15cm} 
\begin{itemize}
\item[(i)] for all $t \geq 0$, $\mathcal{F}_t \supset \sigma(S_0^t)$ (the $\sigma$-algebra generated by $S_0^t$), and 
\item[(ii)] relative to $\{\mathcal{F}_t\}$, $\{S_t\}$ remains to be a Markov chain with transition probability $\P$, i.e., for all $s \in \S$, $\text{Prob}(S_{t+1} =s \mid \mathcal{F}_t) = P_{S_t s}$.\vspace*{-0.15cm}
\end{itemize}
See \cite[Chap.\ 3.3]{Num84}; in particular, see Prop.\ 3.6 in p.\ 31-32 therein for several equivalent definitions of randomized stopping times.

Note that if $\mathcal{F}_t = \sigma(S_0^t)$ for all $t$, then the history of states $S_0^t$ fully determines whether $\tau \leq t$ and $\tau$ reduces to a stopping time for the Markov chain $\{S_t\}$. The properties (i)-(ii) in the above definition encapsulate our earlier intuitive discussion about making stopping decisions, namely, stopping decisions are made based on the history $S_0^t$ and additional random outcomes that do not affect the evolution of the Markov chain. 

Like stopping times, the strong Markov property also holds for randomized stopping times for a Markov chain. This is an important basic property. It says that in the event $\tau < \infty$, conditioned on the $\sigma$-algebra $\mathcal{F}_\tau$  associated with the stopping time $\tau$ relative to $\{\mathcal{F}_t\}$ (which is the $\sigma$-algebra generated by the events that ``happen before $\tau$''),  the conditional distribution of $(S_\tau, S_{\tau+1}, \ldots)$ is the same as the probability distribution of a Markov chain $(S_0, S_1, \ldots)$ with initial state $S_0 = S_\tau$~\cite[Theorem 3.3]{Num84}.

The above abstract definition of a randomized stopping time allows us to write Bellman equations in general forms without worrying about the details of the enlarged space, which are not important at this point. For notational simplicity, when there is no confusion, we shall still write $\Pr^\pi$ for the probability measure on the enlarged probability space and use $\E^\pi$ and $\E^\pi_s$ to denote the expectation and conditional expectation given $S_0 = s$, respectively, for that space.

If $\tau$ is a randomized stopping time for $\{S_t\}$, the strong Markov property \cite[Theorem 3.3]{Num84} allows us to express $v_\pi$ in terms of $v_\pi(S_\tau)$ and the total discounted rewards $R^\tau$ prior to stopping:
\begin{align}
   v_\pi(s) & = \textstyle{ \E^\pi_s \left[ \sum_{t=0}^{\tau-1} \gamma_1^t \, r_\pi(S_t) + \sum_{t=\tau}^\infty \gamma_1^{\tau} \cdot \gamma_{\tau+1}^t \, r_\pi(S_t) \right] }  \notag \\    
              & = \E^\pi_s \big[  R^{\tau} + \gamma_1^{\tau} \, v_{\pi}(S_{\tau})  \big],  \label{eq-gbe1}
\end{align}
where $R^\tau = \sum_{t=0}^{\tau-1} \gamma_1^t \, r_\pi(S_t)$ for $\tau \in \{0, 1, 2, \ldots \} \cup \{+\infty\}$.
\footnote{In the case $\tau = 0$, $R^0 = 0$. In the case $\tau = \infty$, by Condition~\ref{cond-pol}(i), $R^\infty = \sum_{t=0}^{\infty} \gamma_1^t \, r_\pi(S_t)$ is almost surely well-defined, while the second term $\gamma_1^{\tau} \, v_\pi(S_{\tau})$ in (\ref{eq-gbe1}) is $0$ because $\gamma_1^\infty := \prod_{k=1}^\infty \gamma_k = 0$ a.s., under Condition~\ref{cond-pol}(i).
Equation (\ref{eq-gbe1}) is derived as follows: By the strong Markov property~\cite[Theorem 3.3]{Num84}, on $\{\tau < \infty\}$,
$$\textstyle{\E^\pi \left[ \sum_{t=\tau}^\infty \gamma_1^{\tau} \cdot \gamma_{\tau+1}^t \, r_\pi(S_t) \mid \mathcal{F}_\tau \right] = \gamma_1^{\tau} \cdot \E^\pi_{S_\tau} \left[ \sum_{t=0}^\infty  \gamma_{1}^t \, r_\pi(S_t) \right] = \gamma_1^{\tau} v_{\pi}(S_{\tau}).}$$
Then, since the term $\E^\pi_s \!\left[ \,\sum_{t=\tau}^\infty \gamma_1^{\tau} \cdot \gamma_{\tau+1}^t \, r_\pi(S_t) \right] = \E^\pi_s \!\left[ \, \I(\tau < \infty) \cdot \sum_{t=\tau}^\infty \gamma_1^{\tau} \cdot \gamma_{\tau+1}^t \, r_\pi(S_t) \right]$, we use the property of the conditional expectation given $\mathcal{F}_\tau$ and the fact $\mathcal{F}_\tau \supset \sigma(S_0)$ to rewrite this term as 
$$\textstyle{ \E^\pi_s \!\big[ \, \I(\tau < \infty)  \cdot \E^\pi\! \left[ \,\sum_{t=\tau}^\infty \gamma_1^{\tau} \cdot \gamma_{\tau+1}^t \, r_\pi(S_t) \mid \mathcal{F}_\tau \right] \big] = \E^\pi_s \!\left[ \,\I(\tau < \infty) \cdot \gamma_1^{\tau} v_{\pi}(S_{\tau}) \right] = \E^\pi_s \left[ \gamma_1^{\tau} v_{\pi}(S_{\tau}) \right]},$$
where in the last equality we also used the fact $\gamma_1^\infty = 0$ a.s. This gives (\ref{eq-gbe1}).
}
We can also write the Bellman equation (\ref{eq-gbe1}) in terms of $\{S_t\}$ only, by taking expectation over $\tau$:
\begin{align}
 v_\pi(s) & = \textstyle{ \E^\pi_s \left[ \sum_{t=0}^\infty \Big( \I(\tau > t) \cdot \gamma_1^t \, r_\pi(S_t) + \I(\tau = t) \cdot \gamma_1^t \, v_\pi(S_t) \Big)  \right],} \notag \\
  & = \textstyle{ \E^\pi_s \left[ \sum_{t=0}^\infty \Big( q^+_{t}(S_0^t) \cdot \gamma_1^t \, r_\pi(S_t) + q_{t}(S_0^t) \cdot \gamma_1^t \, v_\pi(S_t) \Big)  \right],}   \label{eq-gbe2} 
\end{align} 
where 
\begin{equation}
 q^+_{t}(S_0^t) = \Pr^\pi( \tau > t \mid S_0^t),   \qquad q_{t}(S_0^t) = \Pr^\pi( \tau = t \mid S_0^t). 
\end{equation}
The r.h.s.\ of (\ref{eq-gbe1}) or (\ref{eq-gbe2}) defines a generalized Bellman operator $T : \Re^{N} \to \Re^{N}$ associated with $\tau$, which has several equivalent expressions; e.g.,
\begin{align*}
 (Tv)(s) \! = \!\textstyle{ \E^\pi_s \big[  R^{\tau} + \gamma_1^{\tau} \, v(S_{\tau}) \big]} 
 \! =  \! \textstyle{ \E^\pi_s \!\left[ \sum_{t=0}^\infty \!\Big( q^+_{t}(S_0^t) \cdot \gamma_1^t \, r_\pi(S_t) + q_{t}(S_0^t) \cdot \gamma_1^t \, v(S_t) \Big)\!  \right]\!,} \quad s \in \S.
\end{align*}
Depending on the context, one expression can be more convenient to use than the other. For example, the first expression is convenient for defining $T$ through the associated $\tau$ and for deducing the contraction property of $T$, whereas expressions like the second will be of interest when we want to know more explicitly the particular $T$ for our TD learning scheme and its dependence on the $\lambda$-parameters. 

In common with one-step Bellman operator, the generalized Bellman operator $T$ is affine and involves a substochastic matrix.
If $\tau \geq 1$ a.s., then the value function $v_\pi$ is the unique fixed point of $T$, i.e., the unique solution of $v = T v$, and $T$ is a sup-norm contraction. In fact, this can be shown for slightly more general $\tau$:

\begin{thm} \label{thm-gbo}
Let Condition~\ref{cond-pol}(i) hold, and let the randomized stopping time $\tau$ be such that $\Pr^\pi(\tau \geq 1 \mid S_0 = s) > 0$ for all states $s \in \S$. Then $v_\pi$ is the unique fixed point of the generalized Bellman operator $T$ associated with $\tau$, and $T$ is a contraction w.r.t.\ a weighted sup-norm on $\Re^{N}$.
\end{thm}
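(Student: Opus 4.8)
The plan is to exhibit $T$ in the affine form $Tv = b + Q\,v$ for a suitable vector $b\in\Re^N$ and matrix $Q$, verify that $Q$ is a nonnegative substochastic matrix whose spectral radius is strictly less than $1$, and then invoke the standard contraction argument. From the expression
$$(Tv)(s) = \E^\pi_s\big[R^\tau + \gamma_1^\tau\, v(S_\tau)\big],$$
I read off $b(s) = \E^\pi_s[R^\tau]$ (finite and well-defined under Condition~\ref{cond-pol}(i), as noted in the footnote to (\ref{eq-gbe1})) and the entries $Q_{ss'} = \E^\pi_s[\gamma_1^\tau\,\I(S_\tau = s',\ \tau<\infty)]$. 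Each $Q_{ss'}\geq 0$ since $\gamma\in[0,1]$. The affineness is then immediate: $v\mapsto Tv$ differs from $v\mapsto Qv$ only by the constant $b$, so $v_\pi$ being a fixed point (which we already have from (\ref{eq-gbe1}) with $v=v_\pi$) together with contraction of $Q$ will give uniqueness.

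Next I would establish the substochasticity and the key strict contraction estimate. Summing a row of $Q$ gives $\sum_{s'} Q_{ss'} = \E^\pi_s[\gamma_1^\tau\,\I(\tau<\infty)] \leq \E^\pi_s[\gamma_1^\tau]\leq 1$, so $Q$ is substochastic. The crucial point is to get a genuine weighted-sup-norm contraction. For this I would relate $Q$ to the one-step substochastic matrix $\P\Gm$, which under Condition~\ref{cond-pol}(i) satisfies $(\P\Gm)^t\to 0$ and hence has spectral radius strictly below $1$. The comparison I expect to use is $Q_{ss'}\leq (\text{something dominated by the discounted return of the unstopped chain})$; more precisely, decomposing over the value of $\tau$ and using the hypothesis $\Pr^\pi(\tau\geq 1\mid S_0=s)>0$ to ensure that with positive probability from every state we take at least one genuine $\gamma$-discounted step before the stopping contribution can return weight to $s$. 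I would argue that $Q$ inherits a spectral-radius bound $\rho(Q)<1$ from $\P\Gm$: intuitively, stopping only removes mass (it replaces future discounted transitions by a bootstrapped value that carries no further propagation), so $Q$ is ``no larger'' than the defective matrix governing $\sum_t\gamma_1^t$, whose convergence is exactly Condition~\ref{cond-pol}(i).

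Once $\rho(Q)<1$ is in hand, the weighted sup-norm is constructed in the standard way: take $u$ to be a strictly positive vector with $Qu\leq \kappa\,u$ componentwise for some $\kappa<1$ (such $u$ exists because $\rho(Q)<1$ for a nonnegative matrix, e.g.\ via $u = \sum_{k\geq 0} Q^k w$ for a strictly positive $w$, which converges), and define $\|x\|_u = \max_s |x(s)|/u(s)$. Then for any $v,v'$,
$$\|Tv - Tv'\|_u = \|Q(v-v')\|_u \leq \kappa\,\|v-v'\|_u,$$
so $T$ is a $\kappa$-contraction in $\|\cdot\|_u$. The Banach fixed point theorem then yields a unique fixed point, which must be $v_\pi$ by (\ref{eq-gbe1}). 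The main obstacle I anticipate is the spectral-radius step: the hypothesis is only $\Pr^\pi(\tau\geq 1\mid S_0=s)>0$ (not $\tau\geq 1$ a.s.), so I cannot simply say $Q$ has no diagonal ``instant return'' mass. I expect to handle the possibility $\Pr^\pi(\tau=0)>0$ by carefully separating the $\tau=0$ contribution (which contributes to $Q_{ss}$ a term $\gamma_1^0 = 1$ times $\Pr^\pi(\tau=0\mid S_0=s)$) and showing that the positive probability of taking at least one step, combined with $(\P\Gm)^t\to 0$, still forces $\rho(Q)<1$; this is the delicate bookkeeping that makes the slightly-more-general $\tau$ (rather than $\tau\geq 1$ a.s.) work.
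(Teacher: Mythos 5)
Your skeleton matches the paper's: write $T$ in affine form with a nonnegative substochastic matrix $Q=\tilde P$ (the paper's (\ref{gbo-prf1}) is exactly your $Q_{ss'}=\E^\pi_s[\gamma_1^\tau\,\I(S_\tau=s')]$), reduce everything to $\sigma(\tilde P)<1$ (equivalently, invertibility of $I-\tilde P$), and then extract a weighted sup-norm contraction from a positive vector $u$ with $\tilde P u<u$ --- the paper gets such a $u$ by perturbing $\tilde P$ to a primitive matrix and applying Perron--Frobenius. So the last step and the uniqueness step are fine. The genuine gap is the one you yourself flag: you never actually prove $\rho(Q)<1$, and the comparison you propose does not work. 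The only entrywise domination available from ``stopping removes mass'' is $Q_{ss'}=\sum_{t\ge 0}\E^\pi_s[q_t(S_0^t)\gamma_1^t\I(S_t=s')]\le\sum_{t\ge0}\big((\P\Gm)^t\big)_{ss'}=\big((I-\P\Gm)^{-1}\big)_{ss'}$, and $(I-\P\Gm)^{-1}\ge I$ has spectral radius at least $1$, so nothing follows. Nor is $Q$ dominated by $\P\Gm$ itself (take $\tau=0$ with high probability, which makes $Q$ close to $I$), and even with $\tau\ge1$ a.s.\ the row sums $\E^\pi_s[\gamma_1^\tau]$ can equal $1$ when $\gamma\equiv1$ on the states visited before stopping, so no crude norm bound gives $\rho(Q)<1$. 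The hypothesis $\Pr^\pi(\tau\ge1\mid S_0=s)>0$ enters in a more global way than ``positive probability of one discounted step from each state'': a single application of $\tau$ need not shrink anything.

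What the paper does instead is probabilistic. It extends $\tilde P$ to a transition matrix $\tilde P^e$ on $\S\cup\{\Delta\}$ with $\Delta$ absorbing, and shows all states of $\S$ are transient by contradiction: if $\tilde\S$ were a recurrent class, then $\E^\pi_s[\gamma_1^\tau]=1$ on $\tilde\S$, hence $\gamma_1^\tau=1$ a.s.\ there; iterating the randomized stopping time via the strong Markov property produces times $0\le\tau_1\le\tau_2\le\cdots$ with $\gamma_{\tau_k+1}^{\tau_{k+1}}=1$ a.s.\ for every $k$, and the hypothesis $\Pr^\pi(\tau\ge1\mid S_0=s)>0$ is used precisely to rule out $\{\tau_k\}$ stalling, forcing $\tau_k\to\infty$ and hence $\gamma_1^\infty=1$ a.s.\ --- contradicting $\gamma_1^\infty=0$ a.s., which follows from $(\P\Gm)^t\to0$ under Condition~\ref{cond-pol}(i) together with monotonicity of $\{\gamma_1^t\}$ and Fatou's lemma. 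This iterated-stopping argument (or some equivalent renewal/transience argument) is the missing idea; without it your proposal establishes substochasticity and the reduction to a spectral-radius claim, but not the claim itself.
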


We prove this theorem in Appendix~\ref{appsec-1}. The proof amounts to showing that if a state process evolves according to the substochastic matrix $\tilde P$ involved in the affine operator $T$, then all the states in $\S$ are transient (equivalently, the spectral radius of $\tilde P$ is less than $1$ and $I - \tilde P$ is invertible \cite[Appendix A.4]{puterman94}). From this the conclusions of the theorem follow as a basic fact from nonnegative matrix theory \cite[Theorem 1.1]{Sen06}, and one specific choice of the weights of the sup-norm in the theorem is simply the expected time for the process to leave $\S$ from each initial state (see e.g.,~the proof of \cite[Prop. 2.2]{BET}).

For TD algorithms that do not use history-dependent $\lambda$, the random times $\tau$ and the corresponding Bellman operators $T$ have simple descriptions:

\begin{myexample}[TD with constant or state-dependent $\lambda$]
\label{ex-td-tau} \rm 
Depending on the choice of $\lambda$, TD($\lambda$) algorithms are associated with different randomized stopping times $\tau$. 
In the case of constant $\lambda$, starting from time $1$, we stop the system with probability $1 - \lambda$ if it has not stopped yet; i.e., 
$$\tau \geq 1 \ \ \ \text{and} \ \ \ \Pr^\pi(\tau = t \mid \tau > t-1, S_0^t) = 1 - \lambda, \ \ \  \forall \, t \geq 1.$$ 
In particular, we always stop at $t=1$ if $\lambda =0$, and we never stop if $\lambda = 1$. 
Similarly, for state-dependent $\lambda$ where $\lambda_t = \lambda(S_t)$, a function of the current state, the preceding stopping probability is replaced by $1 - \lambda(S_t)$: $\Pr^\pi(\tau = t \mid \tau > t-1, S_0^t) = 1 - \lambda(S_t)$ for $t \geq 1$.
In these cases, by taking expectations over $\tau$, the corresponding Bellman operators can be expressed solely in terms of $\lambda$ and the model parameters for the target policy. \qed
\end{myexample}

\subsection{Bellman Equation for the Proposed TD Learning Scheme} \label{sec-3.2}

With the terminology of randomized stopping times, we are now ready to write down the generalized Bellman equation associated with the TD learning scheme proposed in Section~\ref{sec-2.2}. It corresponds to a particular randomized stopping time. We shall first describe this random time, from which a generalized Bellman equation follows as seen in the preceding subsection. That this is indeed the Bellman equation for our TD learning scheme will then be proved. 

Consider the Markov chain $\{S_t\}$ \emph{under the target policy} $\pi$. We define a randomized stopping time $\tau$ for $\{S_t\}$:
\begin{itemize}
\item[$\bullet$] Let $y_t,  \lambda_t, \e_t, t \geq 1,$ evolve according to (\ref{eq-td3}) and (\ref{eq-td1}):
$$ y_t = g(y_{t-1}, S_t), \qquad \lambda_t=\lambda(y_t, \e_{t-1}), \qquad \e_t  =  \lambda_t \, \gamma_t \,  \rho_{t-1} \, \e_{t-1} +  \fe(S_t), \quad  t \geq 1.$$
\item[$\bullet$] Let the initial $(S_0, y_0, \e_0)$ be distributed according to $\zeta$, the unique invariant probability measure in Theorem~\ref{thm-1} for the state-trace process induced by the behavior policy.
\item[$\bullet$] At time $t \geq 1$, we stop the system with probability $1 - \lambda_t$ if it has not yet been stopped. Let $\tau$ be the time when the system stops ($\tau = \infty$ if the system never stops).
\end{itemize}
To make the dependence on the initial distribution $\zeta$ explicit, we write $\Pr^\pi_\zeta$ for the probability measure of this process.

Note that by definition \emph{$\lambda_t$ and $\lambda_1^t = \prod_{k=1}^t \lambda_k$ are functions of the initial $(y_0, \e_0)$ and states $S_0^t$}.
From how the random time $\tau$ is defined, we have for all $t \geq 1$,
\begin{align}
 \Pr^\pi_\zeta( \tau > t \mid S_0^t, y_0, \e_0) & = \lambda_1^t  = : h^+_{t}(y_0, \e_0, S_0^t),  \label{eq-ptau1} \\
 \Pr^\pi_\zeta( \tau = t \mid S_0^t, y_0, \e_0) & =   \lambda_1^{t-1} (1 - \lambda_t) = : h_t(y_0, \e_0, S_0^t), \label{eq-ptau2}
\end{align} 
and hence
\begin{align}
   q^+_{t}(S_0^t) & : = \Pr^\pi_\zeta( \tau > t \mid S_0^t)  = \int h^+_{t}(y, \e, S_0^t)  \, \zeta\big(d(y, \e) \mid S_0 \big),  \label{eq-Ta} \\ 
   q_{t}(S_0^t)  & : = \Pr^\pi_\zeta( \tau = t \mid S_0^t) = \int h_t(y, \e, S_0^t)  \, \zeta\big(d(y, \e) \mid S_0 \big), \label{eq-Tb}
\end{align}
where $\zeta(d(y,e) \mid s)$ is the conditional distribution of $(y_0, \e_0)$ given $S_0 = s$, w.r.t.\ the initial distribution $\zeta$.
As before, we can write the generalized Bellman operator $T$ associated with $\tau$ in several equivalent forms. 
Let $\E^\pi_\zeta$ denote expectation under $\Pr^\pi_\zeta$.
Similarly to the derivation of (\ref{eq-gbe2}), we can rewrite (\ref{eq-gbe1}) in this case by taking expectation over $\tau$ conditioned on $(S_0^t, y_0, \e_0)$ to derive that 
for all $v: \S \to \Re, s \in \S$,
\begin{equation} \label{eq-T0}
(Tv)(s) =  \textstyle{ \E^\pi_\zeta \Big[ \sum_{t=0}^\infty  \lambda_1^t \gamma_1^t \, r_\pi(S_t) + \sum_{t=1}^\infty \lambda_1^{t-1} (1 - \lambda_t)  \gamma_1^{t} \, v(S_{t})  \mid S_0 = s\Big]}.
\end{equation}
Or express $T$ in the form of (\ref{eq-gbe2}) by further integrating over $(y_0, \e_0)$ and using (\ref{eq-Ta})-(\ref{eq-Tb}): 
\begin{equation}
 (Tv)(s) =  \textstyle{ \E^\pi_\zeta \left[ \sum_{t=0}^\infty \Big( q^+_{t}(S_0^t) \cdot \gamma_1^t \, r_\pi(S_t) + q_{t}(S_0^{t}) \cdot \gamma_1^{t} \, v(S_{t}) \Big) \, \big| \, S_0 = s \right]},  \label{eq-T}
\end{equation} 
for all $v: \S \to \Re, s \in \S$, where in the case $t=0$, $q^+_0(S_0) = 1$ and $q_0(S_0)=0$ since $\tau > 0$ by construction.

It will be useful later to express $TV -V$ in terms of temporal differences. From (\ref{eq-T0}), by writing $\lambda_1^{t-1} (1 - \lambda_t)  \gamma_1^{t} \, v(S_{t}) =  \lambda_1^{t-1} \gamma_1^{t} \, v(S_{t}) -  \lambda_1^{t} \gamma_1^{t} \, v(S_{t})$ and rearranging terms, we have for all $v: \S \to \Re, s \in \S$,
\begin{align}
(Tv)(s) - v(s) & =  \textstyle{ \E^\pi_\zeta \Big[ \sum_{t=0}^\infty  \lambda_1^t \gamma_1^t \, r_\pi(S_t) + \sum_{t=0}^\infty \lambda_1^{t}  \gamma_1^{t+1} \, v(S_{t+1})  - \sum_{t=0}^\infty \lambda_1^{t} \gamma_1^{t} \, v(S_{t}) \mid S_0 = s\Big]} \notag \\
& = \textstyle{ \E^\pi_\zeta \left[ \sum_{t=0}^\infty  \lambda_1^t \gamma_1^t \cdot \Big( r_\pi(S_t) + \gamma_{t+1} \, v(S_{t+1}) - v(S_{t}) \Big) \, \big|\, S_0 = s\right]} \label{eq-Tdiff}.
\end{align}
In a similar way, from (\ref{eq-T}), we can write
\footnote{Since $\tau$ is a randomized stopping time for the Markov chain $\{S_t\}$, we have $\Pr^\pi_\zeta( \tau > t \mid S_0^{t+1}) = \Pr^\pi_\zeta( \tau > t \mid S_0^t)$, so $\Pr^\pi_\zeta( \tau > t \mid S_0^t) - \Pr^\pi_\zeta( \tau = t +1 \mid S_0^{t+1}) =  \Pr^\pi_\zeta( \tau > t+1 \mid S_0^{t+1})$, i.e., $q^+_{t}(S_0^t) - q_{t+1}(S_0^{t+1}) = q^+_{t+1}(S_0^{t+1})$. 
Thus we can write the term $q_{t}(S_0^{t})$ in (\ref{eq-T}) for $t \geq 1$ as $q^+_{t-1}(S_0^{t-1}) - q^+_{t}(S_0^t)$, and the expression for $(Tv -v)(s)$ then follows by rearranging terms.}
$$ (Tv)(s) - v(s) = \textstyle{ \E^\pi_\zeta \left[ \sum_{t=0}^\infty q^+_{t}(S_0^t) \cdot \gamma_1^t \cdot \Big( r_\pi(S_t) + \gamma_{t+1} \, v(S_{t+1}) - v(S_{t}) \Big)  \, \big|\, S_0 = s\right]}.$$

\begin{rem} \label{rem-expT} \rm
Comparing the two expressions (\ref{eq-T0}) and (\ref{eq-T}) of $T$, we remark that the expression (\ref{eq-T0}) reflects the role of the $\lambda_t$'s in determining the stopping time, 
whereas the expression (\ref{eq-T}), which has eliminated the auxiliary variables $y_t$ and $\e_t$, shows more clearly the dependence of the stopping time on the entire history $S_0^t$. It can also be seen, from the initial distribution $\zeta$, the dependence of $\lambda_t$ on the traces and the dependence of the traces on the function $\rho(\cdot)$ (which describes importance sampling ratios), that both the behavior policy and the choice of the feature representation assert a significant role in determining the Bellman operator $T$ for the target policy. This is in contrast with off-policy TD learning that uses a constant $\lambda$, where the behavior policy and the approximation subspace affect only how one approximates the Bellman equation underlying TD, not the Bellman equation itself, which is solely determined by $\lambda$ (cf.\ Example~\ref{ex-td-tau}). 

Furthermore, note that as the invariant distribution of the state-trace process, $\zeta$ is associated with the dynamic behavior of the states \emph{and traces} under the behavior policy. Generally, there is no explicit expression of $\zeta$ in terms of $\P^o$ and the parameters in the $\lambda$ function. As a result, in general we cannot express the operator $T$ in terms of these parameters in the learning scheme. This is different from the case of TD($\lambda$) where $\lambda$ is a function of the present state only.
\qed \end{rem}

We now proceed to show how the Bellman equation $v=T v$ given above relates to the off-policy TD learning scheme in Section~\ref{sec-2.2}.
Some notation is needed. Denote by $\zeta_\S$ the invariant probability measure of the Markov chain $\{S_t\}$ \emph{induced by the behavior policy}; note that it coincides with the marginal of $\zeta$ on $\S$.
For two functions $v_1, v_2$ on $\S$, we write $v_1 \perp_{\zeta_\S} v_2$ if $\sum_{s \in \S} \zeta_\S(s) \, v_1(s) \, v_2(s) = 0$. 
If $\L$ is a linear subspace of functions on $\S$ and $v \perp_{\zeta_\S} v'$ for all $v' \in \L$, we write $v \perp_{\zeta_\S} \L$. 
Recall that $\phi$ is a function that maps each state $s$ to an $n$-dimensional feature vector. Denote by $\L_\phi$ the subspace spanned by the $n$ component functions of $\phi$, which is the space of approximate value functions for our TD learning scheme. Recall also that $\E_\zeta$ denotes expectation w.r.t.\ the \emph{stationary} state-trace process $\{(S_t, y_t, \e_t)\}$ under the behavior policy (cf.\ Theorem~\ref{thm-1}).

\begin{thm} \label{thm-2}
Let Conditions~\ref{cond-pol}-\ref{cond-lambda} hold. Then as a linear equation in $v$, $\E_\zeta \big[ \e_0 \, \delta_0(v) \big] = 0$ is equivalently 
$T v - v \perp_{\zeta_\S} \L_\phi,$ where $T$ is the generalized Bellman operator for $\pi$ given in (\ref{eq-T0}) or (\ref{eq-T}). 
\end{thm}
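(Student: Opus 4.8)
The plan is to evaluate the left-hand vector $\E_\zeta[\e_0\,\delta_0(v)]$ explicitly and to show that it equals $\sum_{s\in\S}\zeta_\S(s)\,\phi(s)\,\big((Tv)(s)-v(s)\big)$, i.e.\ the vector whose $l$-th entry is $\sum_{s}\zeta_\S(s)\,\phi_l(s)\,(Tv-v)(s)$. Once this identity is established, setting the vector to zero is by definition exactly the statement $Tv-v\perp_{\zeta_\S}\L_\phi$, since the component functions of $\phi$ span $\L_\phi$; so the theorem follows at once. The whole argument is thus one computation that rewrites a behavior-policy expectation of the trace against the temporal-difference term as a target-policy expectation matching the operator $T$ in the form (\ref{eq-Tdiff}).

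First I would pass to the two-sided stationary extension of $\{(S_t,y_t,\e_t)\}$ (legitimate by Theorem~\ref{thm-1}) and unroll the trace recursion (\ref{eq-td1}) backward to write $\e_0=\sum_{k=0}^\infty\big(\prod_{i=0}^{k-1}\lambda_{-i}\gamma_{-i}\rho_{-i-1}\big)\,\phi(S_{-k})$, the series converging because $\{\e_t\}$ is bounded under Condition~\ref{cond-lambda}(ii). Substituting this into $\E_\zeta[\e_0\,\delta_0(v)]$ and using stationarity to shift the $k$-th summand forward by $k$ time steps turns it into $\E_\zeta\big[\phi(S_0)\,\lambda_1^k\gamma_1^k\rho_0^{k-1}\,\delta_k(v)\big]$. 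Since $\delta_k(v)=\rho_k\big(R_k+\gamma_{k+1}v(S_{k+1})-v(S_k)\big)$ and $\rho_0^{k-1}\rho_k=\rho_0^k$, this gives
\[
\E_\zeta[\e_0\,\delta_0(v)]=\sum_{k=0}^\infty \E_\zeta\Big[\phi(S_0)\,\lambda_1^k\gamma_1^k\,\rho_0^k\big(R_k+\gamma_{k+1}v(S_{k+1})-v(S_k)\big)\Big].
\]
Justifying the interchange of the infinite sum with the expectation (by dominated convergence, using the bounded traces together with the $\P\Gamma$-contraction from Condition~\ref{cond-pol}(i) to dominate the tail) is a routine but necessary bookkeeping step here.

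The crux is the change of measure. Conditioning each summand on $(S_0,y_0,\e_0)$, I would use the fact recorded just before (\ref{eq-ptau1}), namely that $\lambda_1^k$ and $\gamma_1^k$ are deterministic functions of $(y_0,\e_0,S_0^k)$, so that only the forward factor $\rho_0^k\big(R_k+\gamma_{k+1}v(S_{k+1})-v(S_k)\big)$ carries the transition randomness. The product $\rho_0^k=\prod_{i=0}^k\rho(S_i,S_{i+1})$ is exactly the Radon--Nikodym factor converting the behavior transitions $\P^o$ into the target transitions $\P$ along $S_0\to\cdots\to S_{k+1}$, and Condition~\ref{cond-pol}(ii) guarantees this weight is well defined. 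Hence the behavior-policy conditional expectation carrying the weight $\rho_0^k$ equals the corresponding target-policy conditional expectation without it, and averaging the reward over the next state replaces $R_k$ by $r_\pi(S_k)$ via the defining relation $r_\pi(s)=\E^\pi_s[r(s,S_1)]$. This is the main obstacle: one must track the dependence of $\lambda_1^k$ on the memory/trace pair carefully so that the measure change touches only the state transitions, which is precisely why the memory-state formulation of Section~\ref{sec-2.2} was arranged.

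Finally, taking the outer expectation over $(S_0,y_0,\e_0)\sim\zeta$ in the form $\sum_{s}\zeta_\S(s)\,\phi(s)\,\E_\zeta[\,\cdot\mid S_0=s]$, the conditional average over $(y_0,\e_0)$ given $S_0=s$ together with the forward target-policy dynamics is exactly the process $\Pr^\pi_\zeta$ underlying $T$. Summing over $k$ then reproduces the bracketed series in (\ref{eq-Tdiff}), yielding $\E_\zeta[\e_0\,\delta_0(v)]=\sum_{s}\zeta_\S(s)\,\phi(s)\big((Tv)(s)-v(s)\big)$, whence the equivalence with $Tv-v\perp_{\zeta_\S}\L_\phi$ follows as described. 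I expect the interchange-of-limits justification and the careful conditioning in the change of measure to be the only nontrivial points; everything else is algebra matching the two sides.
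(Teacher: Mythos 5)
Your proposal takes essentially the same route as the paper's proof: unroll the trace backward along the double-ended stationary process (the content of Lemma~\ref{lem-2}), shift each summand forward by stationarity, change measure through $\rho_0^k$ so that only the state transitions are reweighted while $\lambda_1^k\gamma_1^k$ ride along as functions of $(y_0,\e_0,S_0^k)$, and match the resulting series against the expression (\ref{eq-Tdiff}) for $Tv-v$. The one small slip is attributing the a.s.\ convergence of the backward series to the boundedness of the traces under Condition~\ref{cond-lambda}(ii) alone---what actually drives it is the summability of $\E_\zeta\big[\gamma_{1-t}^0\rho_{-t}^{-1}\big]=\zeta_{\S}^\top(\P\Gm)^t\1$ under Condition~\ref{cond-pol}(i), which you do invoke later for the dominated-convergence step, the trace bound being needed only to kill the remainder term in the backward unrolling.
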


\begin{rem}[On LSTD] \label{rem-1} \rm
Note that 
$$T v - v \perp_{\zeta_\S} \L_\phi, \ \ \ v \in \L_\phi$$  
is a projected version of the generalized Bellman equation $T v - v =0$ (projecting the left-hand side onto the approximation subspace $\L_\phi$ w.r.t.\ the $\zeta_\S$-weighted Euclidean norm). Theorem~\ref{thm-2} and Corollary~\ref{cor-1} together show that this is what LSTD solves in the limit. 

Note also that although the generalized Bellman operator $T$ is a contraction (Theorem~\ref{thm-gbo}), the composition of projection with $T$ is in general not a contraction (cf.\ Example~\ref{counter-ex-contraction} in Appendix~\ref{appsec-oblproj}). Thus we cannot use contraction-based arguments to analyze approximation properties. For that purpose, we use the oblique projection viewpoint of \cite{bruno-oblproj}. Specifically, if the preceding projected Bellman equation admits a unique solution $\bar v$, then $\bar v$ can be viewed as an oblique projection of $v_\pi$ \citep{bruno-oblproj} and the approximation error $\bar v - v_\pi$ can be characterized as in \citep{yb-errbd} by using the oblique projection viewpoint. The details of these are given in Appendix~\ref{appsec-oblproj}. 
\qed \end{rem}

\begin{rem}[On gradient-based TD] \label{rmk-gtd} \rm
While Theorem~\ref{thm-2} is about the LSTD algorithm, it also helps in prepare the ground for analyzing gradient-based algorithms similar to those discussed in \citep{maei11,pmrl}. Like LSTD, these algorithms aim to solve the same projected generalized Bellman equation as characterized by Theorem~\ref{thm-2} (cf.\ Remark~\ref{rem-1}). Their average dynamics, which is important for analyzing their convergence using the mean ODE approach from stochastic approximation theory \citep{KuY03}, can be studied based on the ergodicity result of Theorem~\ref{thm-1}, in essentially the same way as we did in Section~\ref{sec-2.3} for the LSTD algorithm. For details of the convergence analysis of these gradient-based TD algorithms, see the recent work~\citep{gtd-conv17}.
\qed \end{rem}

In the rest of this subsection, we give a corollary to Theorem~\ref{thm-2}, deferring the proofs of both the theorem and the corollary to the next subsection.  The corollary concerns a composite scheme of setting $\lambda$, which is slightly more general than what Section~\ref{sec-2.2} described. It results in a Bellman operator that is a composition of the components of other Bellman operators, and it can be useful in practice for variance control. Let us describe the scheme first, before explaining our motivation for it.
 
Partition the state space into $m$ nonempty disjoint sets: $\S = \cup_{i=1}^m \S_i$. Associate each set $\S_i$ with a possibly different scheme of setting $\lambda$ that is of the type described in Section~\ref{sec-2.2}, and denote its memory states by $y^{(i)}_t$ and $\lambda$-function by $\lambda^{(i)}(\cdot, \cdot)$.
Keep $m$ trace vectors $\e^{(1)}_t, \ldots, \e^{(m)}_t$, one for each set, and update them according to
\begin{equation}
    \e^{(i)}_{t}  =  \lambda^{(i)}_t \, \gamma_t \,  \rho_{t-1} \, \e^{(i)}_{t-1} +  \fe(S_t) \, \I(S_t \in \S_i),  \qquad 1 \leq i \leq m, \label{eq-tdcmp1}  
\end{equation} 
where $\lambda^{(i)}_t = \lambda^{(i)}\big(y^{(i)}_t, \e^{(i)}_{t-1} \big)$. We then have $m$ ergodic state-trace processes that share the same state variables, $\big\{\big(S_t, y^{(i)}_t, \e^{(i)}_t \big) \big\}$, $i =1, 2, \ldots, m$. Each process has a unique invariant probability measure $\zeta^{(i)}$ (Theorem~\ref{thm-1}) and an associated randomized stopping time $\tau^{(i)}$ and generalized Bellman operator $T^{(i)}$, as discussed in this subsection.
Define now an operator $T$ by concatenating the component mappings of $T^{(i)}$ for $\S_i$ as follows: for all $v \in \Re^N$ and $s \in \S$,
\begin{equation} \label{eq-tdcmp-T}
    (T v)(s) : = (T^{(i)} v)(s) \ \ \ \text{if} \ s \in \S_i.
\end{equation}
Consider an LSTD algorithm that defines the trace $\e_t$ to be the sum of the $m$ trace vectors,
\begin{equation} 
   \e_t = \textstyle{ \sum_{i=1}^m \e^{(i)}_t,}  \label{eq-tdcmp2}
\end{equation}
and uses the traces to form the linear equation as before,
$$ \textstyle{ \tfrac{1}{t} \sum_{k=0}^{t-1}  \, \e_k \, \delta_k (v) = 0, \quad v = \Phi \theta.}$$
Note that $\tfrac{1}{t} \sum_{k=0}^{t-1}  \e_k  \delta_k (v) = 0$ is the same as
$ \sum_{i=1}^m \tfrac{1}{t} \sum_{k=0}^{t-1}   \e_k^{(i)}  \delta_k (v)  = 0.$
By Corollary~\ref{cor-1}, as a linear equation in $v$, it tends asymptotically (as $t \to \infty$) to the linear equation $\sum_{i=1}^m \E_{\zeta^{(i)}} \big[ \e_0^{(i)}  \delta_0(v) \big] = 0$.

\begin{cor} \label{cor-tdcmp}
Let Condition~\ref{cond-pol} hold. Consider the composite scheme of setting $\lambda$ discussed above, and let Conditions~\ref{cond-mem}-\ref{cond-lambda} hold for each of the $m$ schemes involved. Let LSTD calculate traces according to (\ref{eq-tdcmp1}) and (\ref{eq-tdcmp2}). Then the limiting linear equation (in $v$) associated with LSTD, $\sum_{i=1}^m \E_{\zeta^{(i)}} \big[ \e_0^{(i)}  \delta_0(v) \big] = 0$, is equivalently 
$T v - v \perp_{\zeta_\S} \L_\phi,$ where $T$ is the generalized Bellman operator for $\pi$ given by (\ref{eq-tdcmp-T}) and has the same fixed point and contraction properties as stated in Theorem~\ref{thm-gbo}.
\end{cor}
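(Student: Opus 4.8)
The plan is to reduce the corollary to two results already established for a single scheme: the interest-weighted form of Theorem~\ref{thm-2} and the contraction statement of Theorem~\ref{thm-gbo}. The equivalence between the limiting LSTD equation and the projected Bellman equation will come from applying (the identity underlying) Theorem~\ref{thm-2} to each of the $m$ schemes and summing over the partition; the fixed-point and contraction properties of the concatenated operator $T$ will come from exhibiting a single randomized stopping time whose associated Bellman operator is exactly $T$, and then invoking Theorem~\ref{thm-gbo}.

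For the equivalence, I would treat each scheme $i$ in isolation. The update (\ref{eq-tdcmp1}) is the trace recursion (\ref{eq-td1}) with $\phi(S_t)$ replaced by the interest-weighted feature $\I(S_t \in \S_i)\,\phi(S_t)$, i.e., the weighted update noted in Section~\ref{sec-2.3} with interest weight $i(\cdot)=\I(\cdot \in \S_i)$ (one may augment the memory state to include $S_t$ so that this indicator is a function of the memory state, without affecting any hypothesis). Since Conditions~\ref{cond-mem}--\ref{cond-lambda} hold for scheme $i$, Theorem~\ref{thm-1} supplies the unique invariant measure $\zeta^{(i)}$, and the proof of Theorem~\ref{thm-2} in fact yields the linear identity (of which the stated equivalence is the zero-set)
\[
\E_{\zeta^{(i)}}\big[\e^{(i)}_0\,\delta_0(v)\big] \;=\; \sum_{s \in \S} \zeta^{(i)}_\S(s)\, \I(s \in \S_i)\, \phi(s)\,\big[(T^{(i)}v)(s) - v(s)\big],
\]
where $T^{(i)}$ is the generalized Bellman operator for scheme $i$. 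The observation that makes the summation collapse is that the $\S$-marginal of every $\zeta^{(i)}$ equals the common $\zeta_\S$: each chain $\{(S_t, y^{(i)}_t)\}$ evolves under $\P^o$ independently of the traces, so its invariant state-marginal is the invariant distribution of the autonomous, irreducible chain $\{S_t\}$ (Condition~\ref{cond-pol}(ii)), which is $\zeta_\S$. Summing the displayed identity over $i$, using that the $\S_i$ partition $\S$ together with the defining relation (\ref{eq-tdcmp-T}), collapses the right-hand side to $\sum_{s \in \S}\zeta_\S(s)\,\phi(s)\,[(Tv)(s)-v(s)]$. Hence $\sum_i \E_{\zeta^{(i)}}[\e^{(i)}_0\delta_0(v)] = 0$ holds iff $Tv - v \perp_{\zeta_\S} \L_\phi$, since the component functions of $\phi$ span $\L_\phi$; together with Corollary~\ref{cor-1} applied to $\e_t=\sum_i \e^{(i)}_t$, this is the claimed characterization of the LSTD limit.

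For the fixed-point and contraction properties, I would construct a single randomized stopping time $\tau$ for $\{S_t\}$ under $\pi$ that realizes $T$. The idea is to let the block containing the \emph{initial} state select which scheme governs the entire trajectory: on the event $S_0 \in \S_i$, run scheme $i$'s randomization (with $(y^{(i)}_0,\e^{(i)}_0)$ drawn from $\zeta^{(i)}$ conditioned on $S_0$) and stop at time $t \ge 1$ with probability $1-\lambda^{(i)}_t$, exactly as in the definition of $\tau^{(i)}$ in Section~\ref{sec-3.2}. Then $\tau$ is a randomized stopping time for $\{S_t\}$, and conditioned on $S_0=s\in\S_i$ its reward/stop decomposition coincides with that of $\tau^{(i)}$, so its Bellman operator has $s$-component $(T^{(i)}v)(s)=(Tv)(s)$ by (\ref{eq-tdcmp-T}). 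Because every scheme never stops at $t=0$, we have $\Pr^\pi(\tau \ge 1 \mid S_0=s)=1>0$ for all $s$, so Theorem~\ref{thm-gbo} applies directly and gives that $v_\pi$ is the unique fixed point of $T$ and that $T$ is a contraction in a weighted sup-norm.

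The main obstacle I anticipate is the bookkeeping of the first step: making precise the interest-weighted identity behind Theorem~\ref{thm-2} for the indicator weight $\I(\cdot\in\S_i)$, and confirming that the state-marginals $\zeta^{(i)}_\S$ genuinely agree across schemes so that the summation telescopes onto the single measure $\zeta_\S$. The $\tau$-construction in the second step is conceptually clean but requires checking that selecting the governing scheme by the initial block still produces a bona fide randomized stopping time relative to an admissible filtration $\{\F_t\}$ in the sense of Section~\ref{sec-3.1}; this holds because each $\tau^{(i)}$ is such a stopping time and the selection is $\sigma(S_0)$-measurable, but it should be stated with care.
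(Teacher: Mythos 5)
Your proposal is correct and follows essentially the same route as the paper: apply the identity behind Theorem~\ref{thm-2} (equation (\ref{cal-3})) to each of the $m$ state-trace processes, sum over the partition using (\ref{eq-tdcmp-T}), and then realize $T$ as the Bellman operator of the randomized stopping time obtained by letting the block containing $S_0$ select which $\tau^{(i)}$ governs the trajectory, so that Theorem~\ref{thm-gbo} applies. The two points you flag as needing care---that the $\S$-marginals of the $\zeta^{(i)}$ all coincide with $\zeta_\S$, and that the initial-block selection yields a bona fide randomized stopping time on a common (suitably enlarged) probability space---are exactly the points the paper also addresses, the latter in a footnote.
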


The use of composite schemes will be demonstrated by experiments in Section~\ref{sec-mcar}.
Here let us explain informally our motivation for such schemes.

\begin{rem}[About composite schemes of setting $\lambda$] \label{rmk-tdcmp} \rm
Our motivation for using the composite schemes is revealed by the equation (\ref{eq-tdcmp-T}). Typically each $T^{(i)}$ is designed to be simple to implement in TD learning.
For example, if we ignore for now the bounding of traces introduced in Section~\ref{sec-2.2} and just consider TD($\lambda$) with constant $\lambda$, $T^{(i)}$ can be the Bellman operator $T^{(\lambda)}$ for TD($\lambda$) with some constant $\lambda$. A simple, extreme example is to partition the state space into two sets, and associate one with $T^{(\lambda)}, \lambda =1$, and the other with $T^{(\lambda)}, \lambda =0$. 
Using the combination (\ref{eq-tdcmp-T}) of the two operators in TD then means that for the first set of states whose $\lambda = 1$, we want to estimate their values by using the information about the total rewards received when starting from those states, whereas for the second set of states whose $\lambda = 0$, we only use the information about their one-stage rewards and how these states relate to the ``neighboring'' states in the transition graph. While this way of using different kinds of information for different states is natural and useful for TD-based policy evaluation, it cannot be realized by keeping a single trace sequence as before and only letting $\lambda_t$ evolve with states or histories. Indeed, in that case, as discussed in Section~\ref{sec-compare-retrace}, interleaving large and small $\lambda_t$'s would make the algorithm behave effectively like TD with small $\lambda$ over the entire state space.

In the context of the more complex scheme of setting $\lambda$ discussed in this paper, our motivation and reasons for considering composite schemes are the same. Each $T^{(i)}$ can be designed to be simple to implement, such as in the simple scaling example in Section~\ref{sec-2.2}. The parameters in the $i$th scheme can be chosen so that they encourage the use of large $\lambda_t$'s throughout time or dictate the use of only small $\lambda_t$'s. By combining component mappings of $T^{(i)}$ through (\ref{eq-tdcmp-T}), composite schemes allow us to use cumulative rewards and transition structures at different timescales for different states. This provides additional flexibility in managing the bias-variance trade-off when estimating the value function (see Figure~\ref{fig-mcar3b} and Figure~\ref{fig-mcar4b} in Section~\ref{sec-mcar} for a demonstration).

Finally, we mention that for off-policy LSTD($\lambda$) with constant $\lambda$, composite schemes were proposed in \citep{yb-bellmaneq} and analyzed in \cite[Proposition~4.5, Section~4.3]{Yu-siam-lstd}. Our Corollary~\ref{cor-tdcmp} extends that result. The convergence analysis of the gradient-based algorithms for the composite schemes is given in \citep{gtd-conv17}.
\qed
\end{rem}

\subsection{Proofs of Theorem~\ref{thm-2} and Corollary~\ref{cor-tdcmp}}

We divide the proof of Theorem~\ref{thm-2} into two steps. The first step deals with an expression for the trace vector, given in the following lemma. It is more subtle than the other step in the proof, which involves mostly calculations.

We start by extending the stationary state-trace process $\{(S_t, y_t, \e_t)\}_{t \geq 0}$ to $t = -1$, $-2, \ldots$, and work with a double-ended stationary process $\{(S_t, y_t, \e_t)\}_{- \infty < t < \infty}$ (by Kolmogorov's existence theorem \cite[Theorem 12.1.2]{Dud02}, such a process exists). Note that as before this is a Markov chain whose transition probability is defined by the behavior policy $\pi^o$ together with the update rules (\ref{eq-td1}) and (\ref{eq-td3}) for $\e_t$, $y_t$ and $\lambda_t$, and the marginal distribution of each $(S_t, y_t, \e_t)$ is $\zeta$. We keep using the notation $\Pr_\zeta$ and $\E_\zeta$ for this double-ended stationary Markov chain. 

Recall the shorthand notation (\ref{eq-prod}) introduced at the beginning of Section~\ref{sec-3}: For $k \leq m$, $\rho_k^m = \prod_{i=k}^m \rho_i$, $\lambda_k^m = \prod_{i=k}^m \lambda_i$, $\gamma_k^m = \prod_{i=k}^m  \gamma_i$, and in addition, $\lambda_1^0=\gamma_1^0=\rho_0^{-1}=1$ by convention.

\begin{lem} \label{lem-2}
$\Pr_\zeta$-almost surely, $\sum_{t=1}^{\infty} \lambda_{1-t}^0  \gamma_{1-t}^0 \rho_{-t}^{-1}  \fe(S_{-t})$ is well-defined and finite, and
\begin{equation} 
 \e_0 =  \fe(S_0) + \textstyle{ \sum_{t=1}^{\infty} \lambda_{1-t}^0  \gamma_{1-t}^0 \rho_{-t}^{-1} \, \fe(S_{-t})}.
\label{eq-e}
\end{equation}
\end{lem}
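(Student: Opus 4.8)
The plan is to unroll the backward trace recursion and show that the finite truncation error vanishes almost surely. Iterating $\e_t = \lambda_t \gamma_t \rho_{t-1}\e_{t-1} + \fe(S_t)$ backward from $t=0$ down to $t=-n+1$ on the double-ended stationary process gives, for every $n \geq 1$,
$$\e_0 = \fe(S_0) + \sum_{t=1}^{n-1} \lambda_{1-t}^0\,\gamma_{1-t}^0\,\rho_{-t}^{-1}\,\fe(S_{-t}) + \lambda_{1-n}^0\,\gamma_{1-n}^0\,\rho_{-n}^{-1}\,\e_{-n}.$$
Here the coefficient of $\fe(S_{-t})$ is the pathwise product $\prod_{j=1-t}^{0}\lambda_j\gamma_j\rho_{j-1}$, which factors exactly as $\lambda_{1-t}^0\gamma_{1-t}^0\rho_{-t}^{-1}$ in the shorthand (\ref{eq-prod}); this is purely algebraic. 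It then suffices to establish (a) that the series on the right converges absolutely $\Pr_\zeta$-a.s., and (b) that the remainder $R_n := \lambda_{1-n}^0\gamma_{1-n}^0\rho_{-n}^{-1}\,\e_{-n} \to 0$ $\Pr_\zeta$-a.s.; letting $n\to\infty$ then yields (\ref{eq-e}).

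For (a), since $\lambda_j \in [0,1]$ and $\|\fe(\cdot)\|$ is bounded on the finite state space, it is enough to show $\sum_{t=1}^\infty \gamma_{1-t}^0\rho_{-t}^{-1} < \infty$ $\Pr_\zeta$-a.s. As these are nonnegative terms, I would bound the expectation and invoke Tonelli's theorem. The key computation is $\E_\zeta[\gamma_{1-t}^0\rho_{-t}^{-1}]$: the product is a function of the states $S_{-t},\dots,S_0$ only (not of the traces), so by stationarity of the double-ended chain I can shift the window $\{-t,\dots,0\}$ to $\{0,\dots,t\}$ and reduce it to $\E_\zeta[\gamma_1^t\rho_0^{t-1}]$ for the forward behavior-policy chain with $S_0 \sim \zeta_\S$. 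Expanding over sample paths and applying the change-of-measure identity $\P^o_{ss'}\rho(s,s') = \P_{ss'}$ (Condition~\ref{cond-pol}(ii)) collapses the $\rho$-weighted $\P^o$-transitions into $\P$-transitions, giving $\E_\zeta[\gamma_{1-t}^0\rho_{-t}^{-1}] = \zeta_\S^\tr(\P\Gm)^t\1$. Summing and using $\sum_{t\geq 1}(\P\Gm)^t = (I-\P\Gm)^{-1} - I < \infty$ under Condition~\ref{cond-pol}(i) (equivalently $(\P\Gm)^t \to 0$, exactly as in Lemma~\ref{lem-1}) makes the total expectation finite, so the series converges absolutely a.s.

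For (b), I would bound $\|R_n\| \leq \gamma_{1-n}^0\rho_{-n}^{-1}\,\|\e_{-n}\|$ using $\lambda_{1-n}^0 \leq 1$. Condition~\ref{cond-lambda}(ii) makes every trace uniformly bounded by the deterministic constant $\max_{y\in\M}\C_y + \max_{s\in\S}\|\fe(s)\|$, and this holds for all $t$ (including negative ones), since the stationary process obeys the same recursion; hence $\|R_n\| \leq (\text{const})\cdot\gamma_{1-n}^0\rho_{-n}^{-1}$. But convergence of the series in (a) forces its general term $\gamma_{1-n}^0\rho_{-n}^{-1} \to 0$ a.s., so $R_n \to 0$ a.s.

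The only genuinely delicate step is (a): correctly handling the backward, time-reversed products on the stationary double-ended chain and carrying out the change of measure to reach the clean bound $\zeta_\S^\tr(\P\Gm)^t\1$. The algebraic unrolling and the remainder estimate are routine once the uniform trace bound from Condition~\ref{cond-lambda}(ii) is in hand, and I expect the whole computation to mirror the estimate in Lemma~\ref{lem-1}, where the same product $\prod_k\gamma_k\rho_{k-1}$ and the matrix $(\P\Gm)^t$ already appear.
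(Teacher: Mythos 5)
Your proposal is correct and follows essentially the same route as the paper's proof: unroll the recursion backward, establish $\E_\zeta\big[\sum_{t\geq 1}\gamma_{1-t}^0\rho_{-t}^{-1}\big]=\sum_{t\geq 1}\zeta_\S^\tr(\P\Gm)^t\1<\infty$ via stationarity and the change of measure to get a.s.\ absolute convergence, and kill the remainder using the uniform trace bound from Condition~\ref{cond-lambda}(ii) together with $\gamma_{1-n}^0\rho_{-n}^{-1}\to 0$ a.s. No gaps.
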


\begin{proof}
First, we show $\E_\zeta \big[ \sum_{t=1}^\infty \gamma_{1-t}^0 \rho_{-t}^{-1} \big] < \infty$. Indeed, 
\begin{align*}
  \textstyle{ \E_\zeta \big[ \sum_{t=1}^\infty \gamma_{1-t}^0 \rho_{-t}^{-1} \big]} =  \textstyle{ \sum_{t=1}^\infty \E_\zeta \big[ \gamma_{1-t}^0 \rho_{-t}^{-1} \big]} 
   =  \sum_{t=1}^\infty \zeta_{\S}^\top (\P \Gamma)^t \1 < \infty,
\end{align*}
where the first equality follows from the monotone convergence theorem, 
the second equality from Condition~\ref{cond-pol}(ii) and a direct calculation, and the last inequality follows from Condition~\ref{cond-pol}(i) (since $(I - \P \Gamma)^{-1} = \sum_{t=0}^\infty (\P \Gamma)^t$). This implies $\sum_{t=1}^\infty \gamma_{1-t}^0 \rho_{-t}^{-1} < \infty$, $\Pr_\zeta$-a.s., so $\gamma_{1-t}^0 \rho_{-t}^{-1}  \to 0$ as $t \to \infty$, $\Pr_\zeta$-a.s. Since $\lambda_{1-t}^0 \leq 1$ for all $t$, it also implies that 
$$\textstyle{ \E_\zeta \big[ \sum_{t=1}^{\infty} \lambda_{1-t}^0  \gamma_{1-t}^0 \rho_{-t}^{-1} \, \| \fe(S_{-t}) \| \big]  \leq \max_{s \in \S} \| \fe(s) \| \cdot \E_\zeta \big[ \sum_{t=1}^{\infty}  \gamma_{1-t}^0 \rho_{-t}^{-1} \big] < \infty}.$$ 
It then follows from a theorem on integration \cite[Theorem 1.38, p.\ 28-29]{Rudin66} that $\Pr_\zeta$-almost surely, the infinite series $\sum_{t=1}^{\infty} \lambda_{1-t}^0  \gamma_{1-t}^0 \rho_{-t}^{-1}  \fe(S_{-t})$ converges to a finite limit.

We now prove the expression for $\e_0$. By unfolding the iteration (\ref{eq-td1}) for $\e_t$ backwards in time, we have for all $m \geq 1$,
\begin{equation} \label{eq-prf-e}
 \textstyle{\e_0 = \fe(S_0) + \sum_{t=1}^{m-1} \lambda_{1-t}^0  \gamma_{1-t}^0 \rho_{-t}^{-1} \, \fe(S_{-t})  +  \lambda_{1-m}^0  \gamma_{1-m}^0 \rho_{-m}^{-1}  \,\e_{-m}.}
\end{equation} 
Let $m \to \infty$ in the r.h.s.\ of (\ref{eq-prf-e}). For the last term, the trace $\e_{-m}$  lies in a bounded set by Condition~\ref{cond-lambda}(ii), $\lambda_{1-m}^0 \leq 1$, and as we just showed, $\gamma_{1-m}^0 \rho_{-m}^{-1}  \to 0$, $\Pr_\zeta$-a.s. So the last term converges to zero $\Pr_\zeta$-a.s. Also as we just showed, the second term converges $\Pr_\zeta$-almost surely to $\sum_{t=1}^{\infty} \lambda_{1-t}^0  \gamma_{1-t}^0 \rho_{-t}^{-1}  \fe(S_{-t})$. The expression (\ref{eq-e}) for $\e_0$ then follows.
\end{proof}

\begin{proof}[Proof of Theorem~\ref{thm-2}]
Treating $\lambda_1^0 =\gamma_1^0=\rho_0^{-1}=1$, we write the expression of $\e_0$ given in Lemma~\ref{lem-2} as $\e_0  = \sum_{t=0}^{\infty} \lambda_{1-t}^0  \gamma_{1-t}^0 \rho_{-t}^{-1} \, \fe(S_{-t})$, $\Pr_\zeta$-a.s. 
We use this expression to calculate first $\E_\zeta \big[ \e_0 \cdot \rho_0 f(S_0^1) \big]$ for an arbitrary function $f$ on $\S \times \S$. (Note that $f$ is bounded and measurable, since $\S$ is finite.)
We have
\begin{align}
 \E_\zeta \big[ \e_0 \cdot \rho_0 f(S_0^1) \big] & = \textstyle{  \sum_{t=0}^\infty \E_\zeta \Big[ \lambda_{1-t}^0  \gamma_{1-t}^0 \rho_{-t}^{-1} \, \fe(S_{-t}) \cdot \rho_0 f(S_0^1) \Big]} \notag \\
     & = \textstyle{   \sum_{t=0}^\infty \E_\zeta \Big[ \lambda_{1}^t  \gamma_{1}^t \rho_{0}^{t-1} \fe(S_0) \cdot \rho_t  f(S_t^{t+1}) \Big]} \notag  \\
     & =  \textstyle{  \sum_{t=0}^\infty \E_\zeta \Big[ \fe(S_0) \cdot \E_\zeta \left[ \lambda_{1}^t  \gamma_{1}^t \rho_{0}^{t} \,  f(S_t^{t+1}) \mid S_0, y_0, \e_0 \right] \Big] }
\label{cal-2} 
\end{align} 
where we used the stationarity of the double-ended state-trace process to derive the second equality, and we changed the order of expectation and summation in the first equality. This change is justified by the dominated convergence theorem, and so are similar interchanges of expectation and summation that will appear in the rest of this proof.

To proceed with the calculation, we relate the expectations in the summation in (\ref{cal-2}) to expectations w.r.t.\ the process with probability measure $\Pr^\pi_\zeta$ introduced in Section~\ref{sec-3.2} (which we recall is induced by the target policy $\pi$ and involves the randomized stopping time $\tau$).
Let $\tilde\E^\pi_\zeta$ denote expectation w.r.t.\ the marginal of $\Pr^\pi_\zeta$ on the space of $\{(S_t, y_t, \e_t)\}_{t \geq 0}$. From the change of measure performed through $\rho_{0}^{t}$, we have
\begin{equation}
 \E_\zeta \left[ \lambda_{1}^t  \gamma_{1}^t \rho_{0}^{t} \,  f(S_t^{t+1}) \mid S_0, y_0, \e_0 \right]  = \tilde\E^\pi_\zeta \left[ \lambda_{1}^t  \gamma_{1}^t  \,  f(S_t^{t+1}) \mid S_0, y_0, \e_0 \right], \quad  t \geq 0.  \label{cal-2b}
\end{equation}
Combining this with (\ref{cal-2}) and using the fact that $\zeta$ is the marginal distribution of $(S_0, y_0, \e_0)$ in both processes, we obtain
\begin{align}
  \E_\zeta \big[ \e_0 \cdot \rho_0 f(S_0^1) \big] & = \textstyle{\sum_{t=0}^\infty \tilde\E^\pi_\zeta \Big[ \fe(S_0) \cdot \tilde\E^\pi_\zeta \left[ \lambda_{1}^t  \gamma_{1}^t  \,  f(S_t^{t+1}) \mid S_0, y_0, \e_0 \right] \Big]}  \notag \\ 
  & =  \textstyle{ \tilde\E^\pi_\zeta \Big[ \fe(S_0) \cdot \sum_{t=0}^\infty \tilde\E^\pi_\zeta \left[ \lambda_{1}^t  \gamma_{1}^t  \,  f(S_t^{t+1}) \mid S_0 \right] \Big].} \label{cal-2c}
\end{align}  

We now use (\ref{cal-2c}) to calculate $\E_\zeta \big[ \e_0  \, \delta_0(v)  \big]$ for a given function $v$. 
Recall from (\ref{eq-td2}) that $\delta_0(v) = \rho_0 \cdot \big(r(S_0^1) + \gamma_1 v(S_1) - v(S_0) \big)$,
so we let $f(S_t^{t+1}) = r(S_t^{t+1}) + \gamma_{t+1} v(S_{t+1}) - v(S_t)$ in (\ref{cal-2c}). 
Since $\E^\pi_\zeta[ r(S_t^{t+1}) \mid S_0^t] = r_\pi(S_t)$, we have
\begin{align*}
 \textstyle{ \sum_{t=0}^\infty \tilde\E^\pi_\zeta \left[ \lambda_{1}^t  \gamma_{1}^t  \,  f(S_t^{t+1}) \mid S_0 \right]} & =
 \textstyle{ \sum_{t=0}^\infty \tilde\E^\pi_\zeta \left[ \lambda_{1}^t  \gamma_{1}^t  \Big( r_\pi(S_t) + \gamma_{t+1} v(S_{t+1}) - v(S_t) \Big) \, \Big| \, S_0 \right]} \\
 & = (Tv - v)(S_0),
\end{align*} 
where the last equality follows from the expression (\ref{eq-Tdiff}) for $TV - V$. Therefore, by (\ref{cal-2c}),
\begin{equation} \label{cal-3}
 \E_\zeta \big[ \e_0 \, \delta_0(v)   \big]  =  \textstyle{\sum_{s \in \S} \zeta_\S(s) \, \phi(s) \cdot (Tv-v)(s), }
\end{equation}  
and this shows that $\E_\zeta \big[ \e_0 \, \delta_0(v)  \big] = 0$ is equivalent to $Tv - v \perp_{\zeta_\S} \L_\phi$.
\end{proof}

We now prove Corollary~\ref{cor-tdcmp}.

\begin{proof}[Proof of Corollary~\ref{cor-tdcmp}]
We apply Theorem~\ref{thm-2} to each state-trace process $\big\{\big(S_t, y^{(i)}_t, \e^{(i)}_t \big) \big\}$ for $i =1, 2, \ldots, m$. Specifically, by (\ref{cal-3}) and the definition (\ref{eq-tdcmp1}) of $\e^{(i)}_t$,
$$ \E_{\zeta^{(i)}} \big[ \e_0^{(i)}  \delta_0(v) \big]  = \textstyle{\sum_{s \in \S} \zeta_\S(s) \cdot \phi(s) \I(s \in \S_i) \cdot \big(T^{(i)}v - v \big)(s).}$$
Hence
\begin{align*}
 \textstyle{ \sum_{i=1}^m \E_{\zeta^{(i)}} \big[ \e_0^{(i)}  \delta_0(v) \big]} & = \textstyle{  \sum_{s \in \S} \zeta_\S(s) \, \phi(s) \cdot \Big[ \sum_{i=1}^m \I(s \in \S_i)  \cdot \big(T^{(i)}v - v \big)(s) \Big] } \\
 & = \textstyle{ \sum_{s \in \S} \zeta_\S(s) \, \phi(s) \cdot (Tv-v)(s),}
\end{align*} 
where the last equality follows from the definition (\ref{eq-tdcmp-T}) of the operator $T$. 
This shows that the linear equation in $v$, $\sum_{i=1}^m \E_{\zeta^{(i)}} \big[ \e_0^{(i)}  \delta_0(v) \big] = 0$, is equivalently 
$T v - v \perp_{\zeta_\S} \L_\phi$.

We now prove that $T$ has $v_\pi$ as its unique fixed point and is a contraction with respect to a weighted sup-norm---in other words, Theorem~\ref{thm-gbo} applies to $T$. For this, it suffices to show that $T$ satisfies the conditions of Theorem~\ref{thm-gbo}, namely, $T$ is a generalized Bellman operator associated with a randomized stopping time $\tau$ that satisfies $\Pr^\pi(\tau \geq 1 \mid S_0 = s) > 0$ for all states $s \in \S$. 
We can define such a random time $\tau$ from the randomized stopping times $\tau^{(i)}$ associated with the Bellman operators $T^{(i)}$. 
In particular, by enlarging the probability space if necessary, we can regard $\tau^{(i)}$, $i=1, 2, \ldots, m$, as being defined on the same probability space.
\footnote{That we can do so is clear from the definition of each $\tau^{(i)}$ as described at the beginning of Section~\ref{sec-3.2} and from the fact that for each $i$, the initial distribution $\zeta^{(i)}$ on $(S_0, y_0^{(i)}, \e_0^{(i)})$ has the same marginal on $\S$, which is $\zeta_\S$.} 
We then let $\tau = \tau^{(i)}$ if $S_0 \in \S_i$. With this definition, we have $\Pr^\pi(\tau \geq 1 \mid S_0 = s) > 0$ for all states $s \in \S$ (since $\tau^{(i)} \geq 1$ a.s.\ for all $i$). For each set $\S_i$, by (\ref{eq-gbe1}), the component mappings of the generalized Bellman operator $T_\tau$ associated with $\tau$ are given by 
$$ (T_\tau v)(s) \! = \!\textstyle{ \E^\pi_s \big[  R^{\tau} + \gamma_1^{\tau} \, v(S_{\tau}) \big]} = \!\textstyle{ \E^\pi_s \big[  R^{\tau^{(i)}} + \gamma_1^{\tau^{(i)}} \, v(S_{\tau^{(i)}}) \big]} = (T^{(i)}v)(s), \quad s \in \S_i.$$
So $T_\tau = T$ by the definition (\ref{eq-tdcmp-T}) of $T$; i.e., $T$ is the Bellman operator associated with the randomized stopping time $\tau$.
\end{proof}

\section{Numerical Study} \label{sec-4}

In this section, we first use a toy problem to illustrate the behavior of traces calculated by off-policy LSTD($\lambda$) for constant $\lambda$ and for $\lambda$ that evolves according to a simple special case of our proposed scheme described in Example~\ref{ex-scaling}. We then compare the behavior of LSTD for various choices of $\lambda$, on the toy problem and on the Mountain Car problem.

\subsection{Behavior of Traces}

The toy problem we use in this study has $21$ states, arranged as shown in Figure~\ref{fig-trgrph} (left). One state is located at the centre, and the rest of the states split evenly into four groups, indicated by the four loops in the figure. The topology of the transition graph is the same for the target and behavior policies. We have drawn the transition graph only for the northeast group in Figure~\ref{fig-trgrph} (left); the states in each of the other three groups are arranged in the same manner and have the same transition structure. Given this symmetry, to specify the transition matrices $P$ and $P^o$ for the target and behavior policies $\pi$ and $\pi^o$ respectively, it suffices to specify the submatrices for the central state and one of the groups.
If we label the central state as state $1$ and the states in the northeast group clockwise as states $2$-$6$, the submatrices of $P$ and $P^o$ for these states are given, respectively, by
\begin{align*}
\pi: & \quad \left( 
\begin{array}{cccccc}
  0  &  0.25 & 0  &  0  & 0  &  0\\
  0 &  0 & 1 & 0  & 0  &  0\\
     0 & 0.2 & 0  & 0.8  & 0  & 0\\
     0  & 0  & 0.2  & 0  & 0.8 & 0\\
     0  & 0 &  0  & 0.2 & 0  & 0.8\\
     0.8 & 0 &  0 & 0  & 0.2  & 0 
     \end{array} \right), &
\pi^o: & \quad    
     \left(
    \begin{array}{cccccc}
  0  &  0.25 & 0  &  0  & 0  &  0\\
  0 &  0 & 1 & 0  & 0  &  0\\
     0 & 0.5 & 0  & 0.5  & 0  & 0\\
     0  & 0  & 0.5  & 0  & 0.5 & 0\\
     0  & 0 &  0  & 0.5 & 0  & 0.5\\
     0.5 & 0 &  0 & 0  & 0.5  & 0 
     \end{array} \right).
\end{align*}         
Intuitively speaking, from the central state, the system enters one group of states by moving diagonally in one of the four directions with equal probability, and after spending some time in that group, eventually returns to the central state and the process repeats. The behavior policy on average spends more time wandering inside each group than the target policy, while the target policy tends to traverse clockwise through the group more quickly. 

\begin{figure}[!th]
   \centering
      \includegraphics[width=0.27\linewidth]{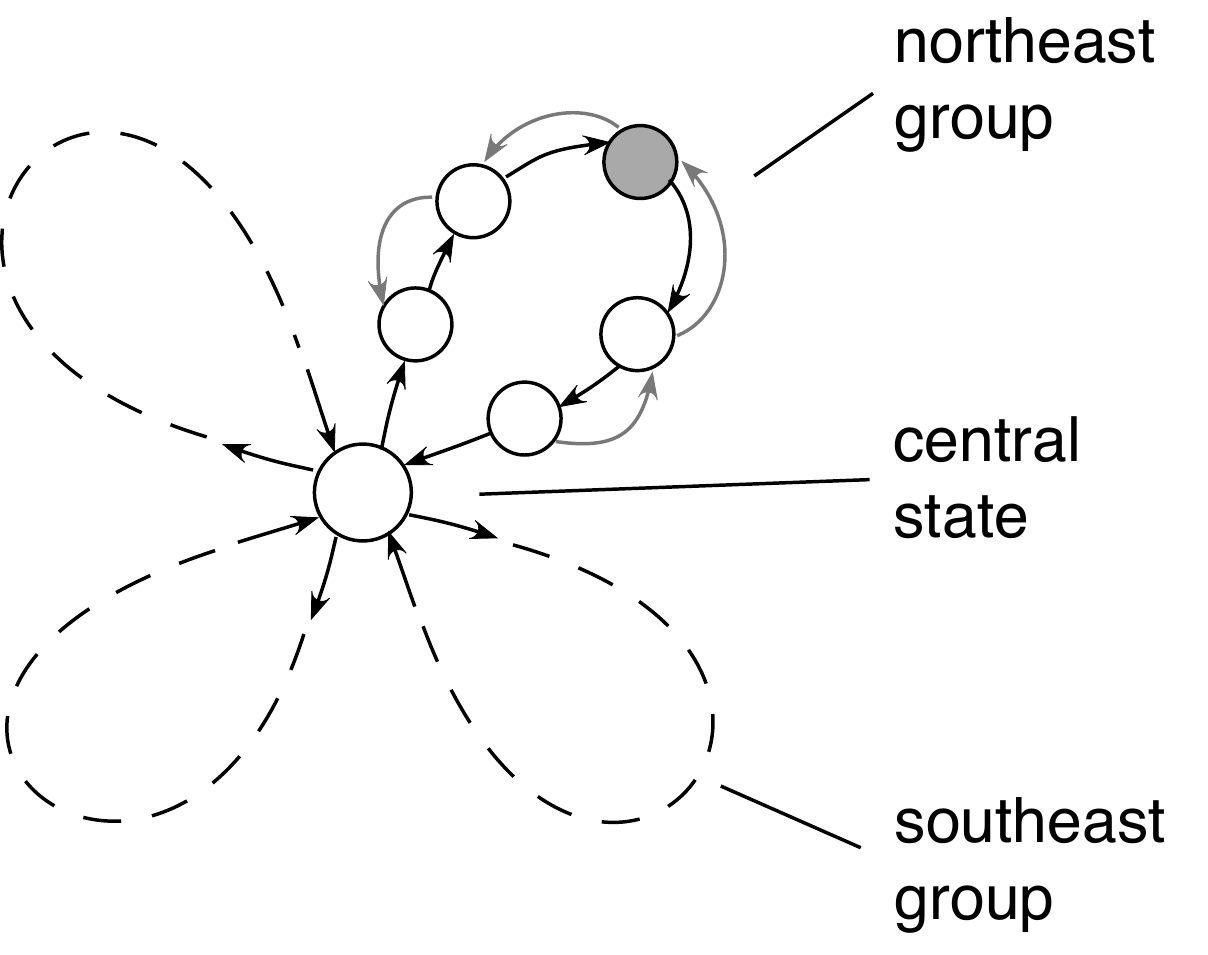} \qquad \qquad \qquad
   \raisebox{11pt}{\includegraphics[width=0.17\linewidth]{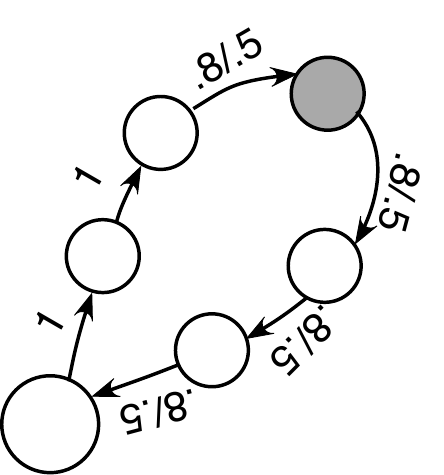}}
 \caption{The transition graph of a toy problem (left) and a cycle pattern in it (right). The numbers appearing in the right graph indicate the importance sampling ratios for the state transitions represented by each directed edge. From such cycle patterns one can infer whether the trace sequence is unbounded almost surely.}
   \label{fig-trgrph}
\end{figure}

All the rewards are zero except for the middle state in each group---for the northeast group, this is the shaded state in Figure~\ref{fig-trgrph} (left). For the two northern (southern) groups, their middle states have reward $1$ ($-1$).
The discount factor is $\gamma=0.9$ for all states.
As to features, we aggregate states into $5$ groups: the $4$ groups mentioned earlier and the central state forming its own group, and 
we let each state have $5$ binary features indicating its membership.
 
We now discuss and illustrate the behavior of traces in this toy problem. For comparison, we first do this for the off-policy TD($\lambda$) with a constant $\lambda$. It can help explain the challenges in off-policy TD learning and our motivation for proposing the new scheme of setting $\lambda$.

\subsubsection{Traces for Constant $\lambda$} \label{sec-4.1.1}

In this experiment we let $\lambda = 1$ and consider the trace iterates $\{\e_t\}$ calculated by TD($1$). In general, by identifying certain cycle patterns in the transition graph, one can infer whether $\{\e_t\}$ will be unbounded over time almost surely \cite[Section 3.1]{Yu-siam-lstd}.  
Figure~\ref{fig-trgrph} (right) shows such a cycle of states in the transition graph of the toy problem. 
It consists of the central state and the northeast group of states. Labeled on each edge of the cycle is the importance sampling ratio for that state transition. Traversing through the cycle once from any starting state, and multiplying together the importance sampling ratios of each edge and the discount factors of the destination states, we get
$\left( \tfrac{0.8}{0.5} \right)^4 \cdot \gamma^6 = \left( \tfrac{0.8}{0.5} \right)^4 \cdot 0.9^6 > 1$. From this one can infer that $\{\e_t\}$ calculated by off-policy TD($1$) will be unbounded in this problem (cf.\ \citealp[Prop.\ 3.1]{Yu-siam-lstd}).

We plotted in the upper left graph of Figure \ref{fig-trace1} the Euclidean norm $\|\e_t\|$ of the traces over $8 \times 10^5$ iterations for TD($1$). One can see the recurring spikes and the exceptionally large values of some of these spikes in the plot. This is consistent with the unboundedness of $\{\e_t\}$ just discussed.

\begin{figure}[!t]
   \includegraphics[width=0.5\linewidth]{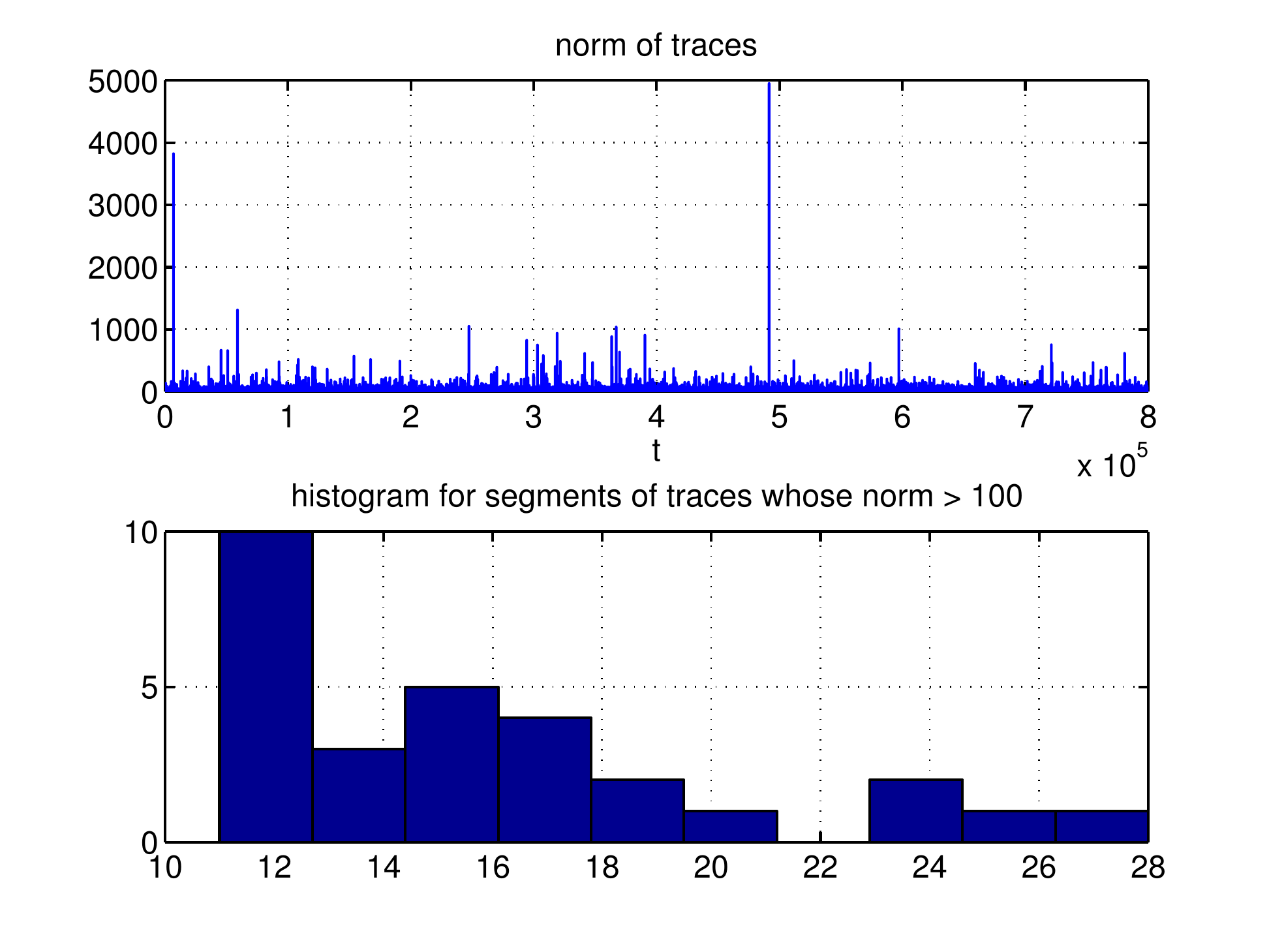} \hfill
   \raisebox{0pt}{\includegraphics[width=0.5\linewidth]{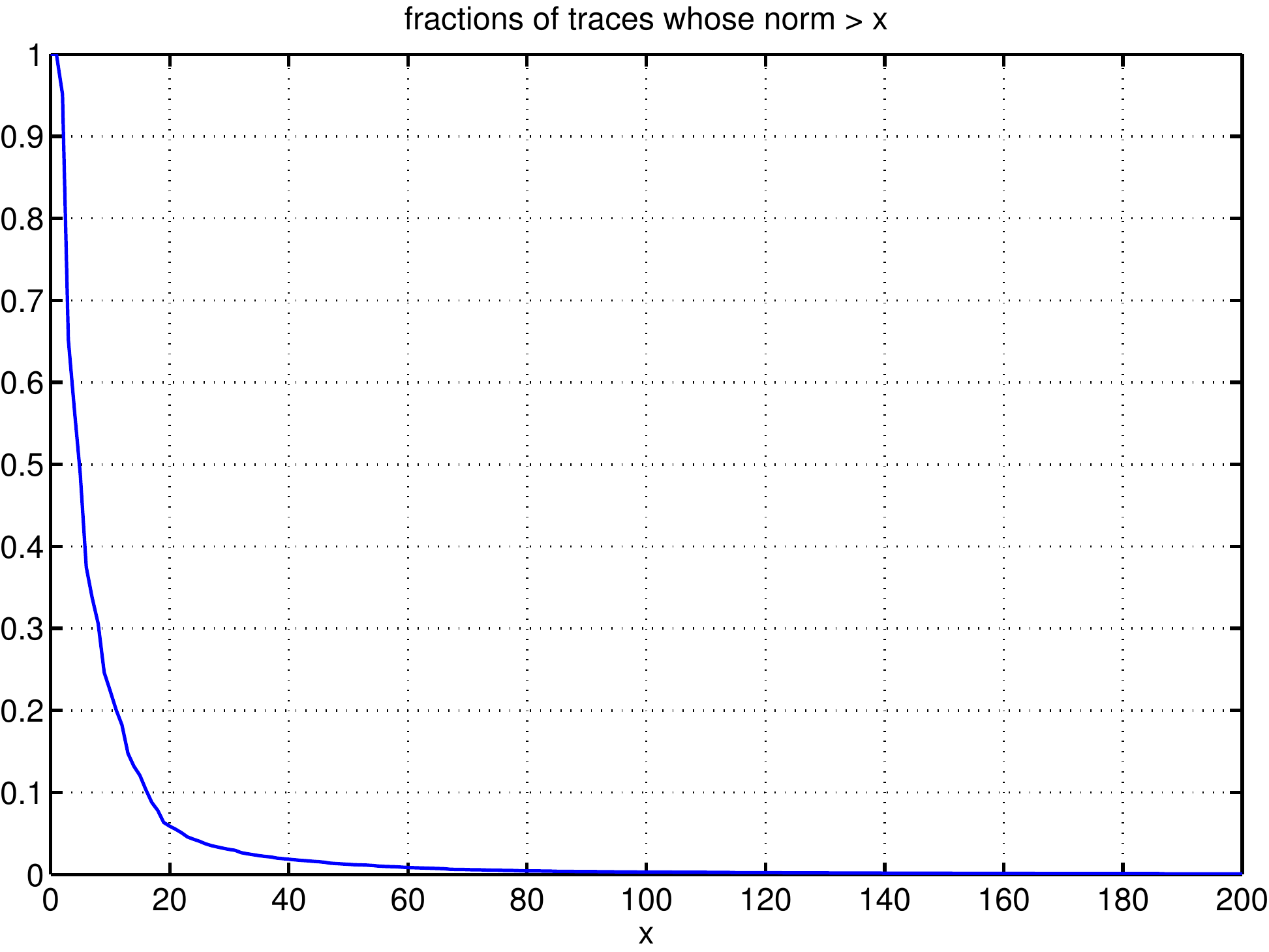}}\vspace*{-0.3cm}
   \caption{Statistics of traces for TD($1$) in a toy problem (see the text in Section~\ref{sec-4.1.1} for detailed explanations).} \label{fig-trace1}
\end{figure}%

The unboundedness of $\{\e_t\}$ tells us that the invariant probability measure $\zeta$ of the state-trace process $\{(S_t, \e_t)\}$ has an unbounded support. 
Despite this unboundedness, $\{\e_t\}$ is bounded in probability~\cite[Lemma 3.4]{Yu-siam-lstd} and under the invariant distribution $\zeta$, $\E_\zeta \big[ \| \e_0 \| \big] < \infty$ \cite[Prop.\ 3.2]{Yu-siam-lstd}. The latter property implies that under the invariant distribution, the probability of $\| \e_0 \| > x$ decreases as $o(1/x)$ for large $x$.
Since the empirical distribution of the state-trace process converges to $\zeta$ almost surely \cite[Theorem 3.2]{Yu-siam-lstd}, during a run of many iterations, we expect to see the fraction of traces with $\| \e_t \| > x$  drop in a similar way as $x$ increases.

The simulation result shown in the right part of Figure~\ref{fig-trace1} agrees with the preceding discussion.
Plotted in the graph are fractions of traces with $\|\e_t \| > x$ during $8\times 10^5$ iterations (the vertical axis indicates the fraction, and the horizontal axis indicates $x$). It can be seen that despite the recurring spikes in $\|\e_t\|$ during the entire run, the fraction of traces with large magnitude $x$ drops sharply with the increase in $x$.

While only a small fraction of traces have exceptionally large magnitude, they can occur in consecutive iterations. This is illustrated by the histogram in the lower left part of Figure~\ref{fig-trace1}. The histogram concerns the excursions of the trajectory $\{\e_t\}$ outside of the ball $\{  \e \in \re^{n} \mid  \| \e \| \leq 100\}$. The horizontal axis indicates the lengths of the excursions (where the length is the number of iterations an excursion contains), 
and the vertical axis indicates how many excursions of length $x$ occurred during the $8\times 10^5$ iterations of the experimental run. 
We plotted the histogram for lengths $x > 10$. It can be seen that one can have large traces during many consecutive iterations. Such behavior, although tolerable by LSTD, is especially detrimental to TD algorithms and can disrupt their learning. This is our main motivation for suggesting the use of $\lambda$-parameters to bound the traces directly.

\subsubsection{Traces with Evolving $\lambda$} \label{sec-4.1.2}

We now proceed to illustrate the behavior of traces and LSTD for $\lambda$ that evolves according to our proposed scheme. Specifically, for this demonstration, we will use the simple scaling example given by (\ref{eq-ex1})-(\ref{eq-ex1b}) in Example~\ref{ex-scaling}, with all the thresholds $\C_{ss'}$ being the same constant $\C$. That is, the update rule for $\e_t$ used in this experiment is
\begin{equation} \label{eq-ex1c}
 \e_t = \begin{cases}
  \gamma_t \,  \rho_{t-1} \, \e_{t-1} +  \fe(S_t) \ & \text{if} \ \ \gamma_t \rho_{t-1} \| \e_{t-1}\|_2 \leq \C; \\
  \C \cdot \tfrac{\e_{t-1}}{\|\e_{t-1}\|_2} +  \fe(S_t)
    &  \text{otherwise}.
    \end{cases}
\end{equation}
We first simulate the state-trace process to illustrate the ergodicity of this process stated by Theorem~\ref{thm-1}.
We will shortly study the performance of LSTD for different values of $\C$ in Section~\ref{sec-4.2.1}.
The results of these two experiments are shown in Figure~\ref{fig-trace2} and Figures~\ref{fig-cmpsol}-\ref{fig-cmpsol2}, respectively, and the details are as follows.

According to the ergodicity result of Theorem~\ref{thm-1}, no matter from which initial state and trace pair $(S_0, \e_0)$ we generate a trajectory $\{(S_t, \e_t)\}_{0 \leq t \leq \bar t}$ according to the behavior policy, the empirical distribution of state and trace pairs in this trajectory should converge, as $\bar t \to \infty$, to the same distribution on $\S \times \rn$, which is the marginal of the invariant probability measure $\zeta$ on that space. Since $\S$ is discrete, we can verify this fact by examining the empirical conditional distribution of the trace given the state. In other words, for each state $s$, we examine the empirical distribution of the trace for the sub-trajectory $(S_{t_k}, \e_{t_k})$, $k=1, 2, \ldots$, where $S_{t_k} = s$ and it is the $k$th visit to state $s$ by the trajectory. We check if this empirical distribution converges to the same one as we increase the length $\bar t$ of the trajectory and as we vary the initial condition $(S_0, \e_0)$. 

\begin{figure}[thb]
   \hspace*{-1.8cm} \includegraphics[width=1.2\linewidth]{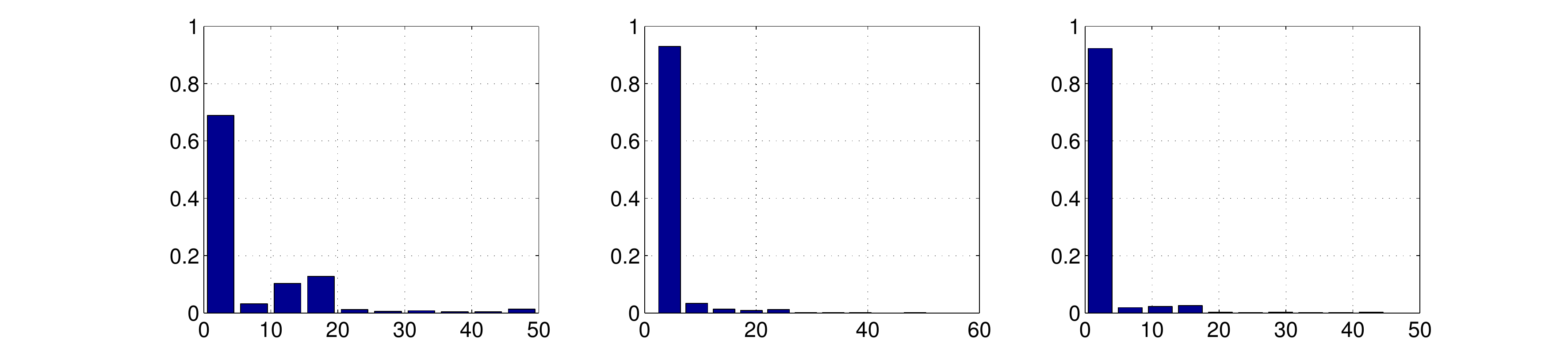}\\*[5pt]
   \hspace*{-1.8cm} \includegraphics[width=1.2\linewidth]{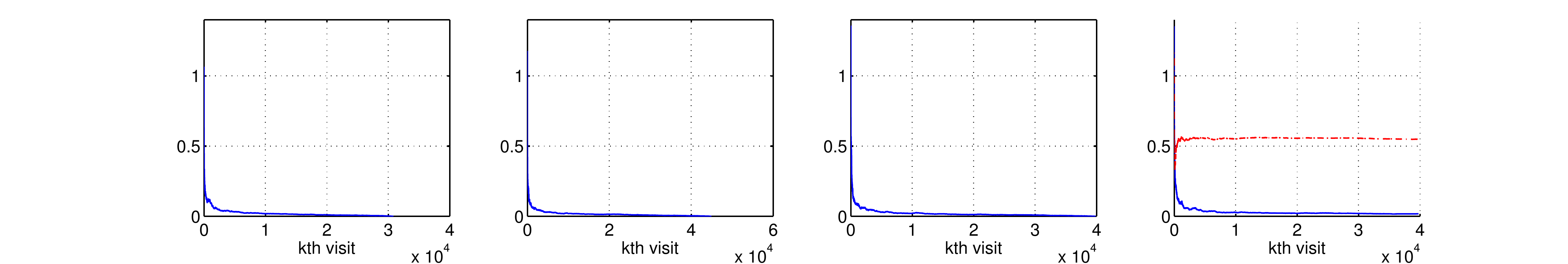}\vspace*{-0.2cm}
   \caption{Demonstration of convergence of empirical conditional distributions on the trace space. (See the text in Section~\ref{sec-4.1.2} for detailed explanations.)} \label{fig-trace2}
\end{figure}%

To give a sense of what these limiting distributions over the trace space look like, we set the parameter $C=50$ and generated a long trajectory with $8 \times 10^5$ iterations.
In the top row of Figure~\ref{fig-trace2}, we plotted three normalized histograms of the first trace component for the sub-trajectories associated with three states of the toy problem, respectively: the central state (left), the middle state of the northeast group (middle), and the first state of the southeast group (right).

To check whether the empirical conditional distributions on the trace space converge along the sub-trajectory $\{(S_{t_k}, \e_{t_k})\}$ for a given state, we compare the characteristic functions $f_k$ of these distributions with the characteristic function $f$ of the empirical conditional distribution obtained at the end of the sub-trajectory during the experimental run. In particular, we evaluate all these (complex-valued) characteristic functions at $500$ points, which are chosen randomly according to the multivariate normal distribution on $\Re^5$ with mean $0$ and covariance matrix $200^2 I$. We take the maximal difference between $f_k$ and $f$ at these $500$ points as an indicator of the deviation between the two corresponding distributions.
\footnote{Recall that a tight sequence $\{p_k\}$ of probability distributions on $\re^m$ converges to a probability distribution $p$ if and only if the characteristic functions of $p_k$ converge pointwise to the characteristic function of $p$ \cite[Lemma 9.5.5]{Dud02}. Recall also that we are dealing with convergence in distribution here, which is much weaker than convergence in total variation, so we cannot use total variation as a metric on the distribution space in this case.\label{footnote-char}}
In the bottom row of Figure~\ref{fig-trace2}, the first three plots show the difference curves obtained in the way just described, for three different states, respectively. These three states are the same ones mentioned earlier in the description of the top row of Figure~\ref{fig-trace2}. The horizontal axis of these plots indicates $k$, the number of visits to the corresponding state. As can be seen, the difference curves all tend to zero as $k$ increases, which is consistent with the predicted convergence of the empirical conditional distributions on the trace space.

So far we compared the empirical distributions along the same trajectory. Next we compare them against the one obtained at the end of another trajectory that starts from a different initial condition. The difference curve for one state (the first state in the southeast group) is plotted in the last graph in the bottom row of Figure~\ref{fig-trace2}, and it is the lower curve in that graph. As can be seen, the curve tends to zero, suggesting that the limiting distribution of these empirical distributions does not change if we vary the initial condition, which is consistent with Theorem~\ref{thm-1}. 

For comparison, we also plotted the difference curve when these same empirical conditional distributions are compared against the empirical conditional distribution obtained from the same trajectory but for a different state (specifically, the middle state of the northeast group). This is the upper curve in the last graph in the bottom row of Figure~\ref{fig-trace2}. It clearly indicates that for the two states, the associated limiting conditional distributions on the trace space are different. It also shows that the characteristic function approach we adopted in this experiment can effectively distinguish between two different distributions (cf.\ Footnote~\ref{footnote-char}).

\subsection{LSTD with Evolving $\lambda$} \label{sec-4.2}

We now present experiments on the LSTD algorithm. 

\subsubsection{A Toy Problem} \label{sec-4.2.1}
Let us first continue with the toy problem of the previous subsection and show how the LSTD algorithm performs in this problem as we vary the parameter $C$ in the $\lambda$ function for bounding the traces.
We ran LSTD for $C = 10, 20, \ldots, 100$, using the same trajectory, for $3 \times 10^5$ iterations, and we computed the (Euclidean) distance of these LSTD solutions to the asymptotic TD($1$) solution (in the space of the $\theta$-parameters), normalized by the norm of the latter. We then repeat this calculation $10$ times, each time with an independently generated trajectory. Plotted in Figure~\ref{fig-cmpsol} (left) against the values of $C$ are the means and standard deviations of the normalized distances of LSTD solutions thus obtained.

\begin{figure}[thb]
   \includegraphics[width=0.49\linewidth]{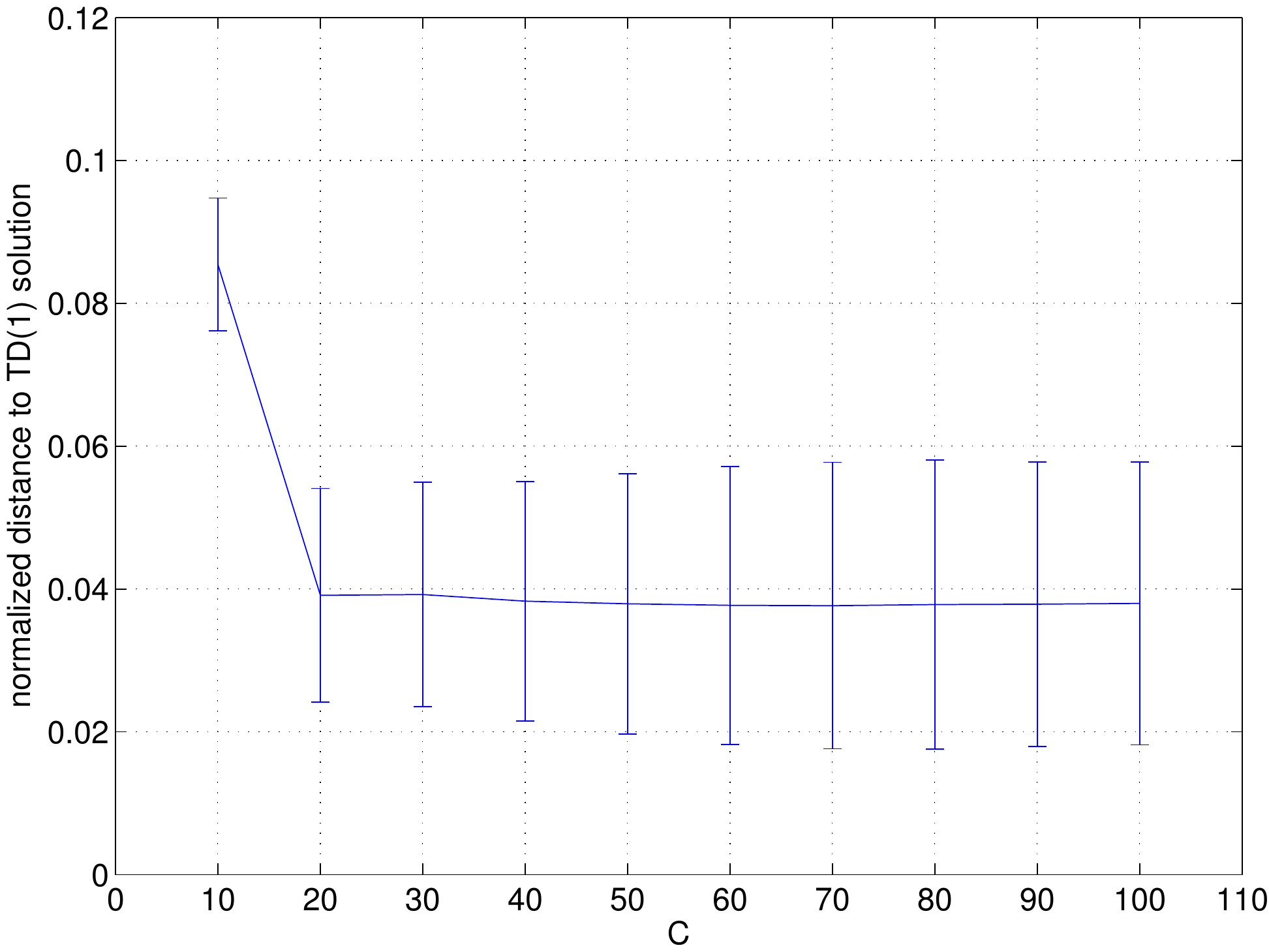} \hfill
   \includegraphics[width=0.49\linewidth]{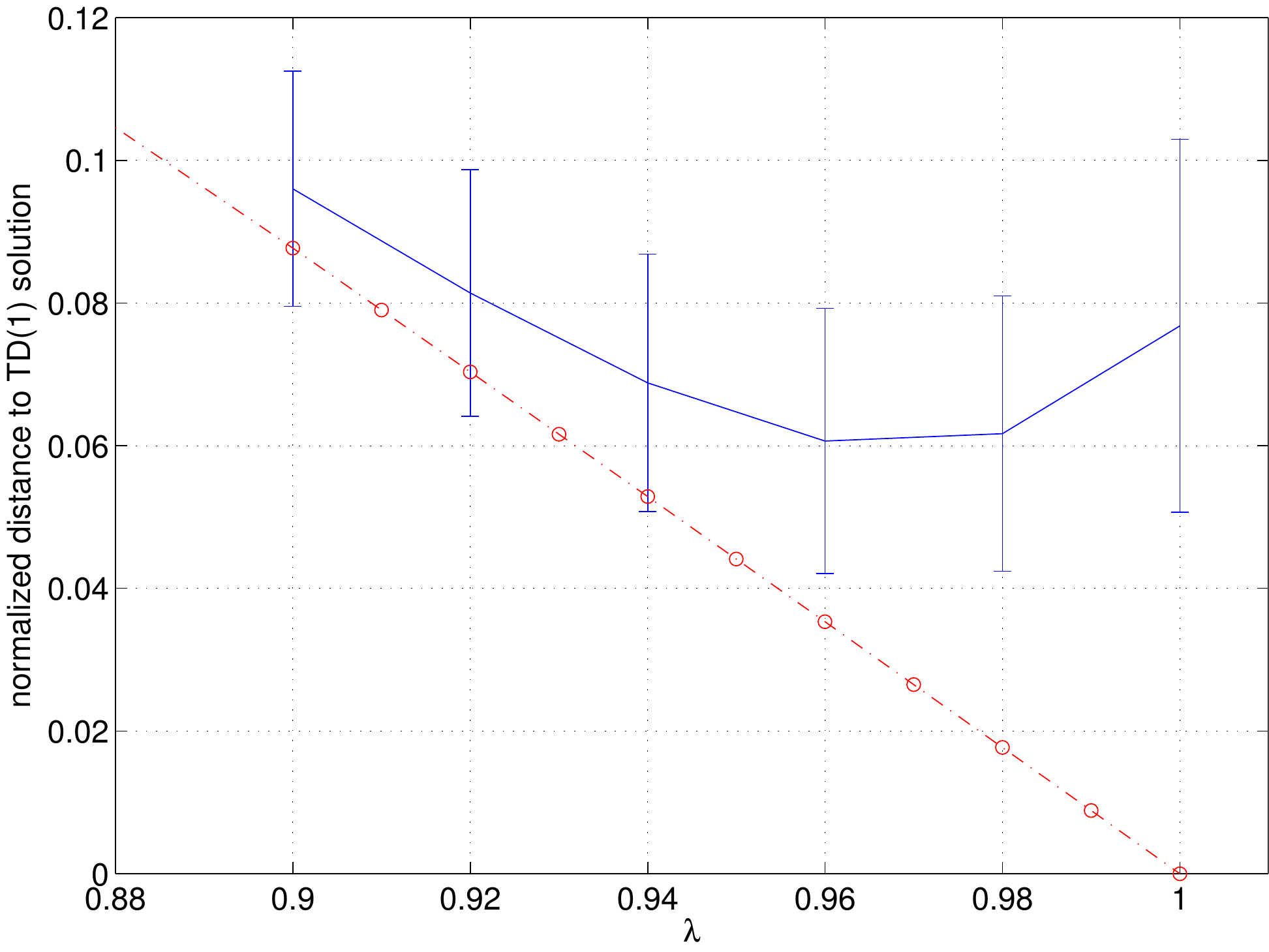}\vspace*{-0.2cm}
   \caption{Compare LSTD solutions with evolving $\lambda$ (left) and with constant $\lambda$ (right). For constant $\lambda$, the red dot-dash curve in the right plot shows the quality of the asymptotic TD($\lambda$) solutions, and LSTD($\lambda$) would approach this curve in the limit, but due to variance issues, it can require an impractically large number of iterations to exhibit this convergent behavior. LSTD with evolving $\lambda$ outperforms LSTD with constant $\lambda$ in this case and effectively archives the quality of TD($\lambda$) solutions for large constant $\lambda$.} \label{fig-cmpsol}
\end{figure}%
\begin{figure}[thb]
   \includegraphics[width=0.49\linewidth]{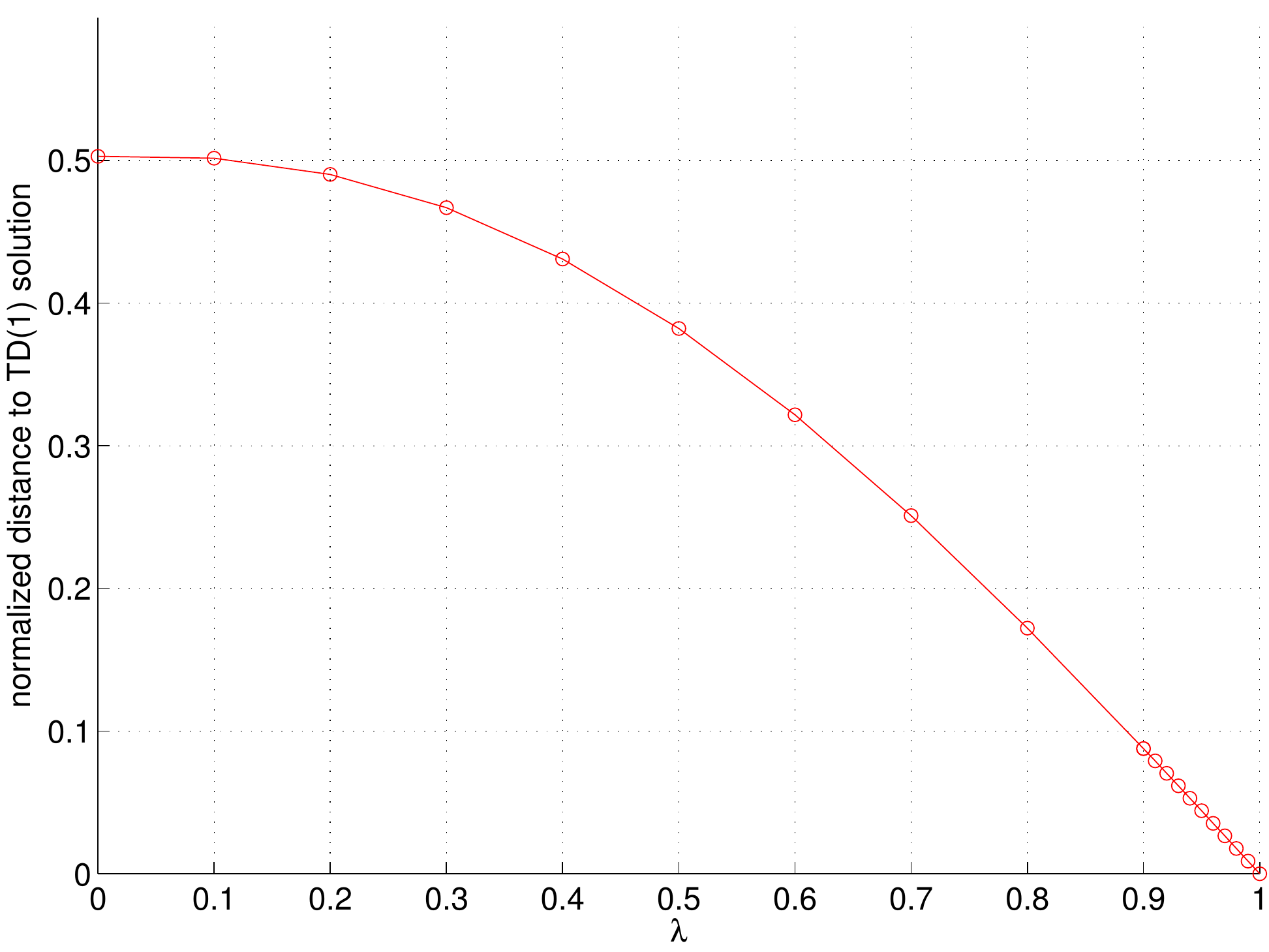} \hfill
   \includegraphics[width=0.49\linewidth]{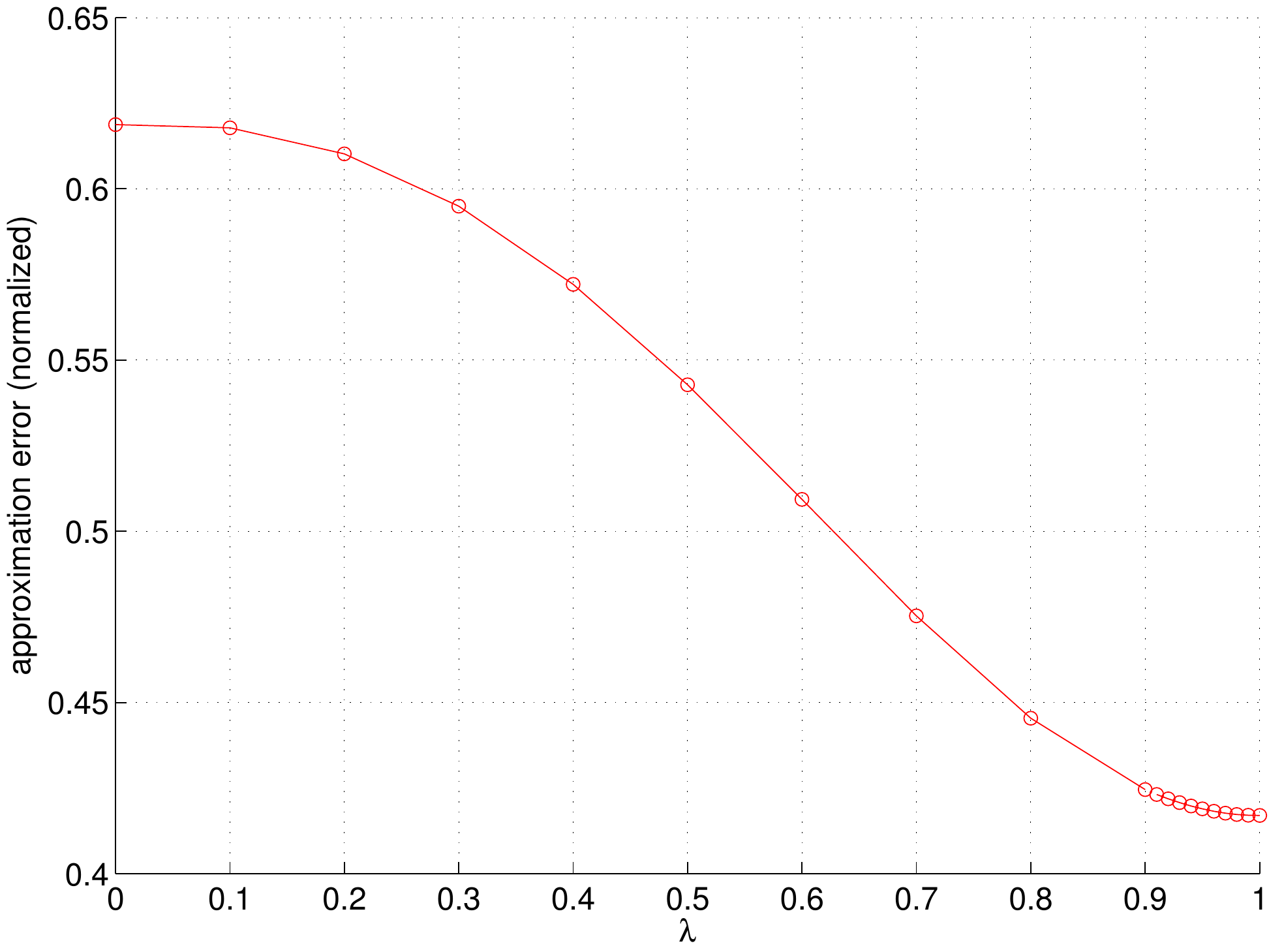}\vspace*{-0.2cm}
   \caption{Approximation quality of asymptotic TD($\lambda$) solutions with constant $\lambda$ for the toy problem. Larger $\lambda$ yields better approximations.} \label{fig-cmpsol2}
\end{figure}%

For comparison, we did the same for LSTD with a large constant $\lambda$: $\lambda = 0.9, 0.92, \ldots, 1$. Figure~\ref{fig-cmpsol} (right) shows the result, where the dash-dot curve indicates the normalized distance of the asymptotic TD($\lambda$) solution---the solution that LSTD($\lambda$) would obtain in the limit. It can be seen that the performance of LSTD deteriorates as $\lambda$ gets close to $1$. We think that this is because due to the high variance issue, the convergence of LSTD with a large constant $\lambda$ is too slow and requires far more iterations than the $3 \times 10^5$ iterations performed. In comparison, LSTD with evolving $\lambda$ behaves better: it works effectively for $C \geq 20$, and the approximation quality it achieved with such $C$ is comparable to that of asymptotic TD($\lambda$) solutions for a large constant $\lambda$ around $0.96$. 

Figure~\ref{fig-cmpsol2} shows the quality of the asymptotic solutions of TD($\lambda$) with constant $\lambda$, for the full range of $\lambda$ values. Plotted in the left graph is the normalized distance of the TD($\lambda$) solution to the TD($1$) solution. Plotted in the right graph is the normalized approximation error of the corresponding approximate value function, where the error is measured by the weighted Euclidean norm with weights specified by $\zeta_{\S}$ (the invariant distribution on $\S$ under the behavior policy), and the normalization is over the weighted norm $\| v_{\pi} \|_{\zeta_{\S}}$ of the true value function $v_\pi$ of the target policy. It can be seen that using $\lambda > 0.9$ provides considerably better approximations for this problem than using small $\lambda$. 

We also found that for this problem, if we set $\lambda$ according to the Retrace algorithm with $\beta=1$ (cf.~(\ref{eq-ex2a}) in Example~\ref{ex-retrace}), then the performance of LSTD is comparable to TD($\lambda$) with a small $\lambda$ around $0.5$.
(Specifically, for Retrace, the normalized distance to the TD($1$) solution is $0.37$, and the normalized approximation error is $0.54$, which are comparable to the numbers for TD($0.5$), as Figure~\ref{fig-cmpsol2} shows.)
This is not surprising, because, as we discussed earlier in Section~\ref{sec-compare-retrace}, in keeping $\lambda_t \rho_{t-1} \leq 1$ always, Retrace and ABQ can be too ``conservative,'' resulting in an overall effect that is like using a small $\lambda$, even though $\lambda_t$ may appear to be large at times. Recall also that this can happen to our proposed scheme too.  In the present experiment, for instance, this can happen when $C$ is small; in particular, the case $C=0$ reduces to LSTD($0$).

\subsubsection{Mountain Car Problem} \label{sec-mcar}

In this subsection we demonstrate LSTD with evolving $\lambda$ on a problem adapted from the well-known Mountain Car problem \citep{SUB}. 
The details of this adaptation, including the target and behavior policies involved, can be found in the report~\cite[Section 5.1, p.\ 23-26]{etd-exp16}; most of these details are not crucial for our experiments, so to avoid distraction, we only describe briefly the experimental setup here.
    
In Mountain Car, the goal is to drive an underpowered car to reach the top of a steep hill, from the bottom of a valley.
A state consists of the position and velocity of the car, whose values lie in the intervals $[-1.2, 0.5]$, $[-0.07, 0.07]$, respectively.
The position $0.5$ corresponds to the desired hill top destination, while the position $-\pi/6$ ($\approx -0.52$) lies at the bottom of a valley that is between the destination and a second hill peaked at $-1.2$ in the opposite direction (see the illustration in Figure~\ref{fig-mcar1}).
Except for the destination state, each state has three available actions: $\{\texttt{back}, \texttt{coast},  \texttt{forward} \}$, and the rewards depend only on the action taken and are $-1.5$, $0$ and $-1$ for the three actions, respectively. The dynamics is as given in \citep{SUB}. 
We consider undiscounted expected total rewards, so the discount factor is $1$ except at the destination state, where the discount factor is $0$ and from where the car enters a rewardless termination state permanently. 

\begin{figure}[thb]
   \centering
  \raisebox{15pt}{\includegraphics[width=0.27\linewidth]{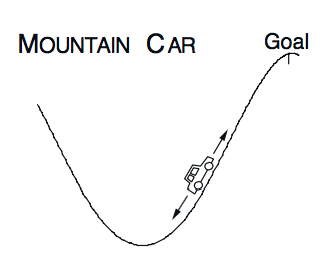}} \qquad \ \  
  \includegraphics[width=0.47\linewidth]{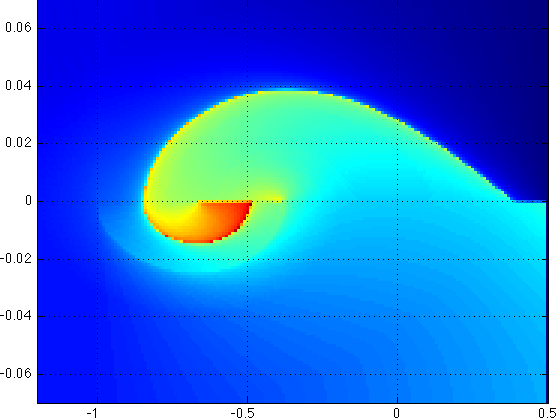} \qquad
    \raisebox{-15pt}{\includegraphics[width=0.062\linewidth]{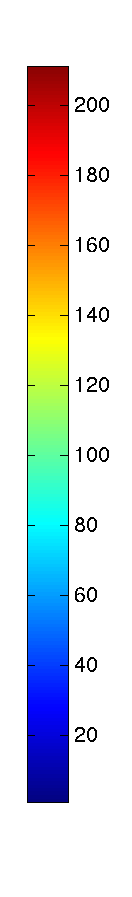}}\vspace*{-0.5cm}
   \caption{Left: Illustration of the Mountain Car problem. Right: Costs $-v_{\pi}$ for this problem are estimated and visualized as a color image, with the coloring scheme indicated by the colorbar. (The horizontal and vertical axes of the image correspond to position and velocity, respectively, for the $2$-dimensional state space of this problem.)} \label{fig-mcar1}
\end{figure}%
\begin{figure}[h!]
   \centering
   \includegraphics[width=0.5\linewidth]{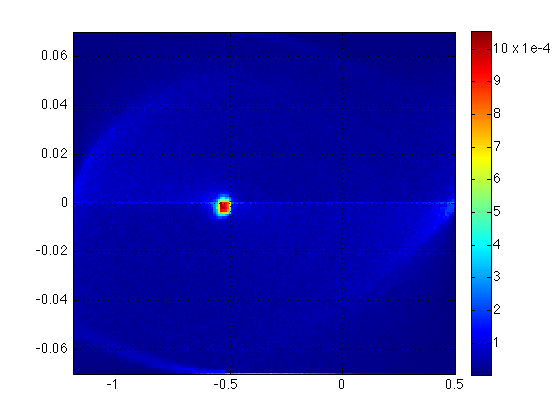} \quad
    \raisebox{7pt}{\includegraphics[width=0.445\linewidth]{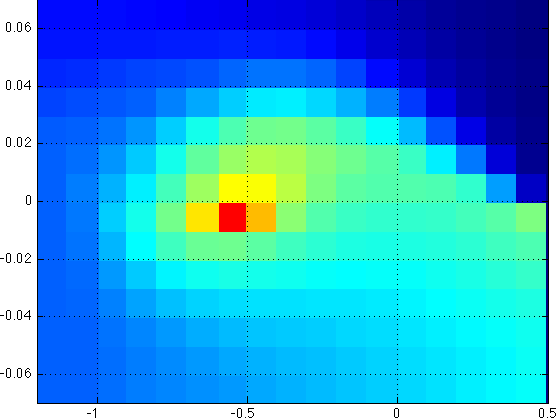}}\vspace*{-0.5cm} 
   \caption{Left: Visualization of weights on states induced by the behavior policy. Right: The color image visualizes the approximation of $- v_{\pi}$ obtained from a discretized model for the Mountain Car problem, where the coloring scheme is the same as that shown in Figure~\ref{fig-mcar1} (right). The quality of this approximation is close to that of TD($0$).} \label{fig-mcar2}
\end{figure}%

The target policy $\pi$ is a simple but reasonably well-behaved policy. On either slopes between the two hills, it tries to increase its energy (kinetic plus gravitational potential energy) by accelerating in the direction of its current motion. If this brings it up to the opposite hill (position $< -1$), it coasts; otherwise, if its velocity drops to near zero, it goes forward or backward with equal probability. Figure~\ref{fig-mcar1} visualizes the total costs $-v_\pi$ of the target policy,
\footnote{The values of $v_{\pi}$ shown in Figure~\ref{fig-mcar1} are estimated by simulating the target policy for each starting state in a set of  $171 \times 141$ points evenly spaced in the position-velocity space. In particular, the position (velocity) interval is evenly divided into subintervals of length $0.01$ ($0.001$), and for each stating state, the target policy is simulated $600$ times.} 
where the horizontal (vertical) axis indicates position (velocity) and the colorbar on the right shows the value corresponding to each color. The discontinuity of the function in certain regions can be seen in this figure.

The behavior policy $\pi^o$ is an artificial policy that takes a random action (chosen with equal probability from the three actions) $90\%$ of the time, and explores the state space by jumping to some random state $10\%$ of the time. It also restarts when it is at the destination: with equal probability, it either restarts near the bottom of the valley or restarts from a random point sampled uniformly from the state space.
\footnote{The behavior policy is exactly the same as described in \cite[p.\ 24-25]{etd-exp16} except for the possibility of restarting near the bottom of the valley whenever the destination is reached.}   

We now explain how we will measure approximation qualities. Since the Mountain Car problem has a continuous state space, it is actually not covered by our analysis, which is for finite-state problems. Although we can treat it as essentially a finite-state problem (since the simulation is done with finite precision in computers), the number of states would still be too large to calculate the weights $\zeta_{\S}$. So, to measure weighted approximation errors $\| v - v_{\pi} \|$ for approximate value functions $v$ produced by various LSTD algorithms in the subsequent experiments, we will compare $v$ and $v_{\pi}$ at a grid of points in the state space and calculate a weighted Euclidean distance between the function values at these grid points, using a set of weights precalculated by simulating the behavior policy.
\footnote{Specifically, we chose a grid of $171 \times 141$ points evenly spaced in the position-velocity space. We ran the behavior policy for $8 \times 10^5$ \emph{effective} iterations, where an iteration is considered to be \emph{ineffective} if the behavior policy takes an action, e.g., the restart action, that is impossible for the target policy. A visit to a state at an effective iteration was counted as a visit to the nearest grid point. At the end of the run, visits to a boundary point $(-1.2, 0)$ were disregarded as they were due to boundary effects in the dynamics of this problem, and the final counts were normalized to produce a set of weights on the grid points that sum to $1$. \label{footnote-weights}}
The image in Figure~\ref{fig-mcar2} (left) visualizes these weights.

\begin{figure}[t]
   \centering
   \includegraphics[width=0.328\linewidth]{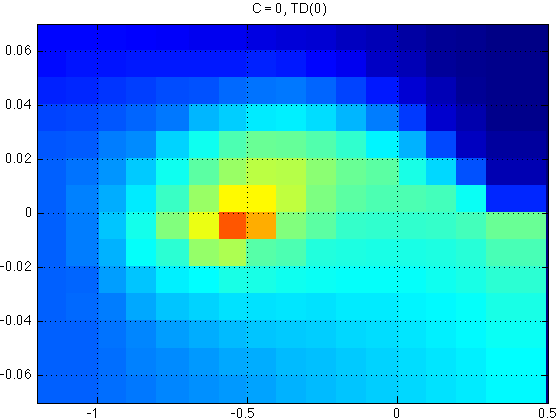} 
   \includegraphics[width=0.328\linewidth]{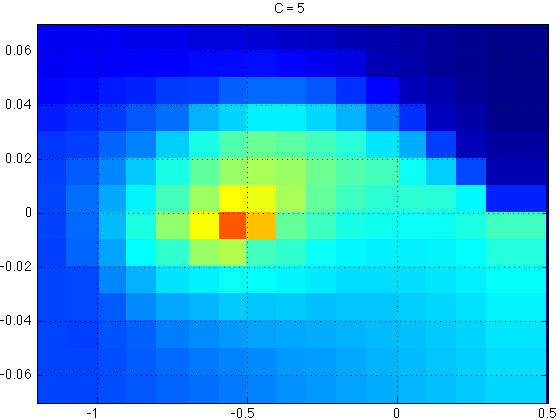}  
   \includegraphics[width=0.328\linewidth]{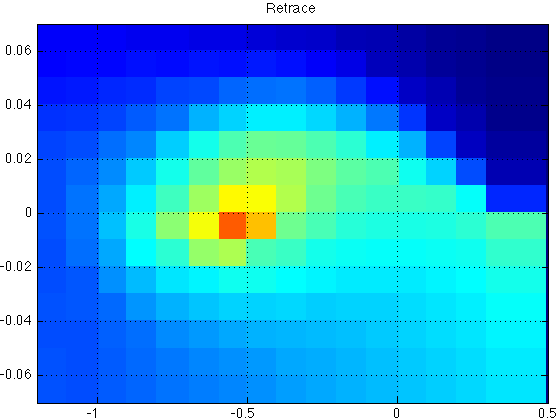} \\*[5pt]
   \includegraphics[width=0.328\linewidth]{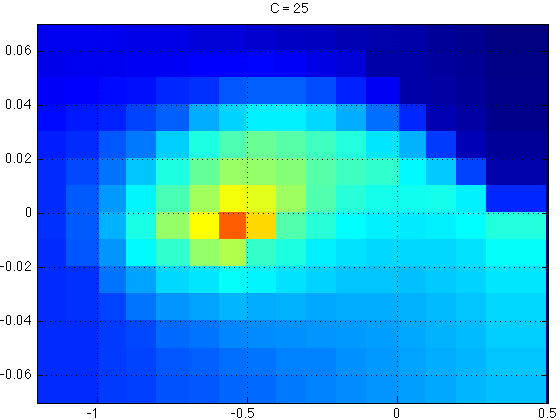} 
   \includegraphics[width=0.328\linewidth]{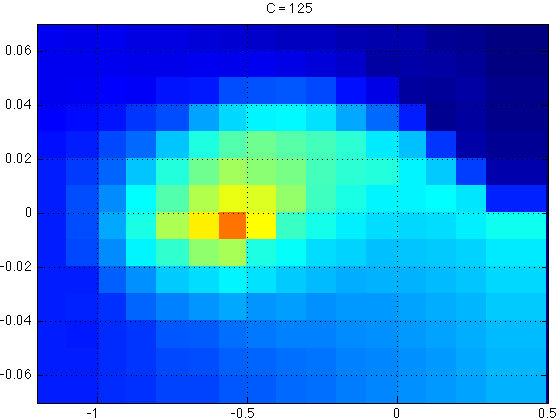}  
   \includegraphics[width=0.328\linewidth]{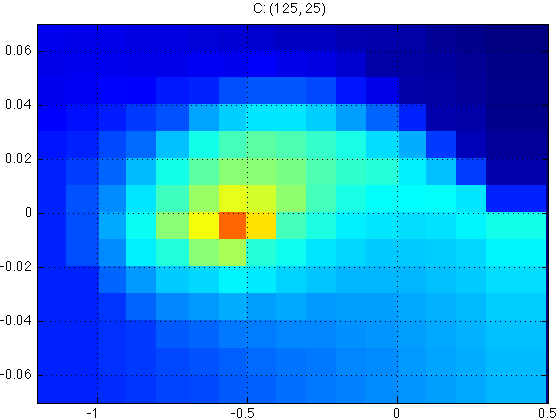}  \\*[5pt]
   \includegraphics[width=0.328\linewidth]{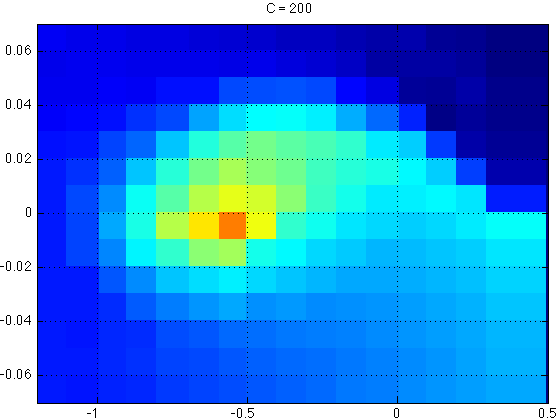} 
   \includegraphics[width=0.328\linewidth]{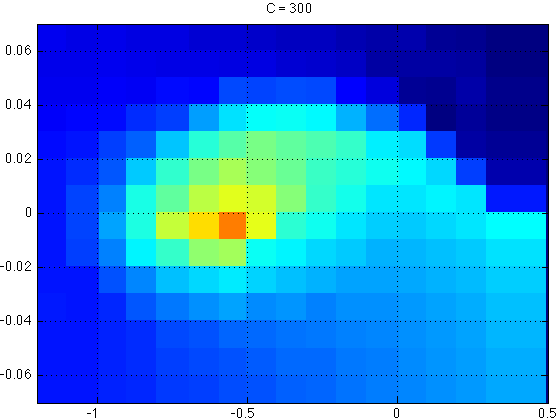}  
   \includegraphics[width=0.328\linewidth]{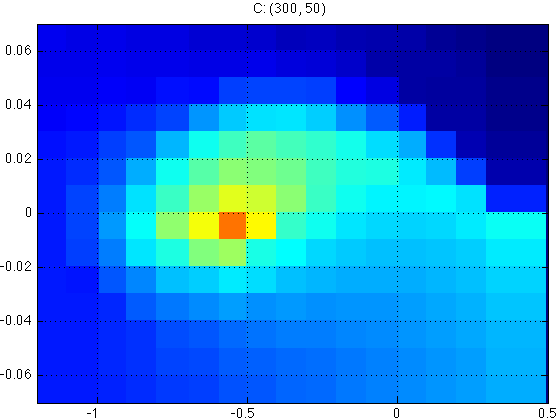}  
   \caption{Visualized in the color images are approximations of $- v_{\pi}$ obtained by LSTD with different schemes of setting $\lambda$, where the coloring scheme is as that shown in Figure~\ref{fig-mcar1} (right). The choices of $\lambda$ for each image, from left to right and top to bottom, are as follows. Top row: $C=0$ (equivalent to LSTD($0$)), $C=5$, Retrace. Middle row: $C=25$, $C=125$, $C\colon (125, 25)$. Bottom row: $C=200$, $C=300$, $C\colon (300, 50)$.} \label{fig-mcar3a}
\end{figure}%

We use tile-coding \citep{SUB} to generate $145$ binary features for our experiments.\footnote{Two tilings are used: the first (second) comprises of $64$ ($81$) uneven-sized rectangles that cover the state space. Together they produce a total of $145$ binary features. The details of the coding scheme are as described in \cite[p.\ 28]{etd-exp16}.} 
The approximate value functions obtained with LSTD algorithms are thus piecewise constant.
For comparison, we also build a discrete approximate model by state aggregation. The discretization is done at a resolution comparable to our tile-coding scheme, and the dynamics and rewards of this model are calculated based on data collected under the behavior policy. The solution of the discrete approximate model is shown in Figure~\ref{fig-mcar2} (right) (the coloring scheme for this and the subsequent images are the same as shown in Figure~\ref{fig-mcar1}). It is similar to the approximate value function calculated by LSTD($0$), which is shown in Figure~\ref{fig-mcar3a} (first image, top row). As will be seen shortly, with positive $\lambda$, the approximation quality of LSTD improves. Thus the discrete model approximation approach is not as effective as the TD method in this case.

We now report the results of our experiments on the Mountain Car problem. 

\smallskip
\noindent {\bf First Experiment:} In this experiment, we compare three ways of setting $\lambda$: (i) Retrace with $\beta=1$ (cf.\ Example~\ref{ex-retrace}); (ii) our simple scaling scheme with parameter $C$ used in the previous experiments (cf.\ (\ref{eq-ex1c}) and Example~\ref{ex-scaling}); and (iii) a composite scheme of the type discussed at the end of Section~\ref{sec-3.2}, which partitions the state space into two sets
\footnote{The first set consists of those states (position, velocity) with either $\text{position} \leq -0.9$ or $\text{velocity} \geq 0.04$. The rest of the states belong to the second set.}
and applies the simple scaling scheme with parameters $C_1, C_2$ for the first and second set, respectively. When referring to this composite scheme in the figures, we will use the designation $C\colon (C_1, C_2)$.

We ran LSTD with different ways of setting $\lambda$ just mentioned, on the same state trajectory generated by the behavior policy, for $6 \times 10^5$ effective iterations (cf.\ Footnote~\ref{footnote-weights}). Some of the approximate value functions obtained at the end of the run are visualized as images in Figure~\ref{fig-mcar3a}. It can be seen that the result of Retrace is similar to that of the simple scaling scheme with a small $C$, and as we increase $C$, the approximation from the scaling scheme improves.

To compare more precisely the approximation errors and see how they change over time for each algorithm, we did $10$ independent runs, each of which consists of $6 \times 10^5$ effective iterations. The results are shown in Figures~\ref{fig-mcar3b}-\ref{fig-mcar4b}. Plotted in Figure~\ref{fig-mcar3b} for each algorithm are the mean and standard deviation of the approximation errors for the $10$ approximate value functions obtained by that algorithm at the end of the $10$ runs.  We can see from this figure the improvement in approximation quality as $C$ increases. We can also see that the result of Retrace is in between those of $C=0$ and $C=5$, which is consistent with what the images in the top row of Figure~\ref{fig-mcar3a} tell us.

\begin{figure}[t]
\centering
   \includegraphics[width=1\linewidth]{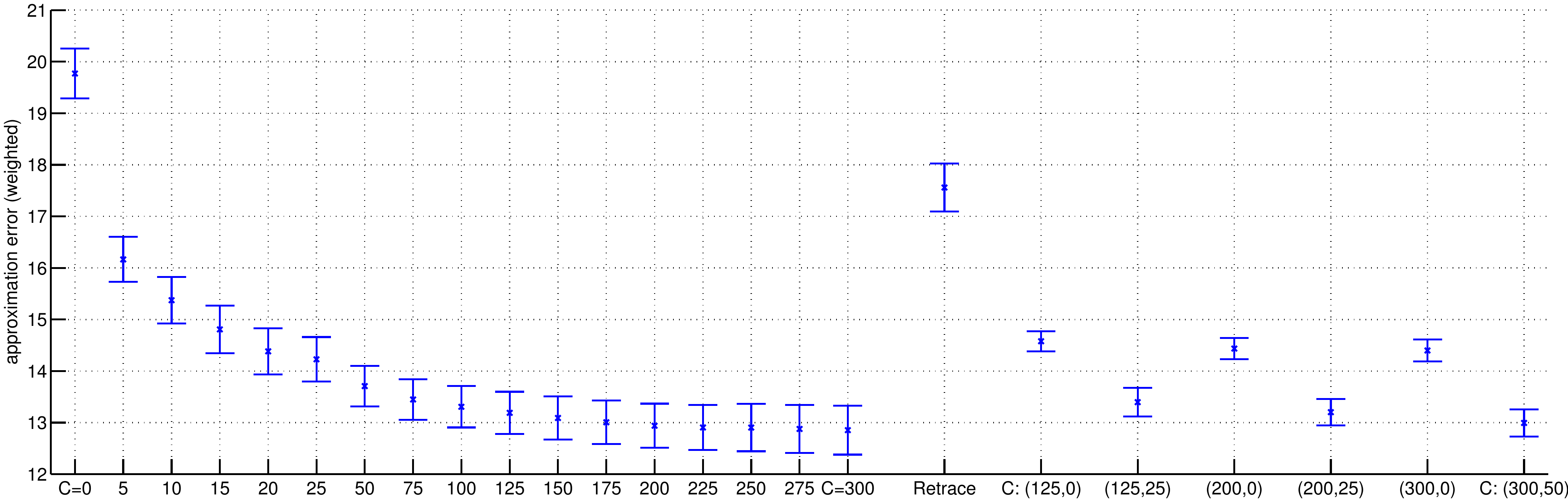}
\caption{Compare the approximation error of LSTD for different schemes of setting $\lambda$.} \label{fig-mcar3b}
\end{figure}%
\begin{figure}[!h]
   \centering
   \includegraphics[width=0.46\linewidth]{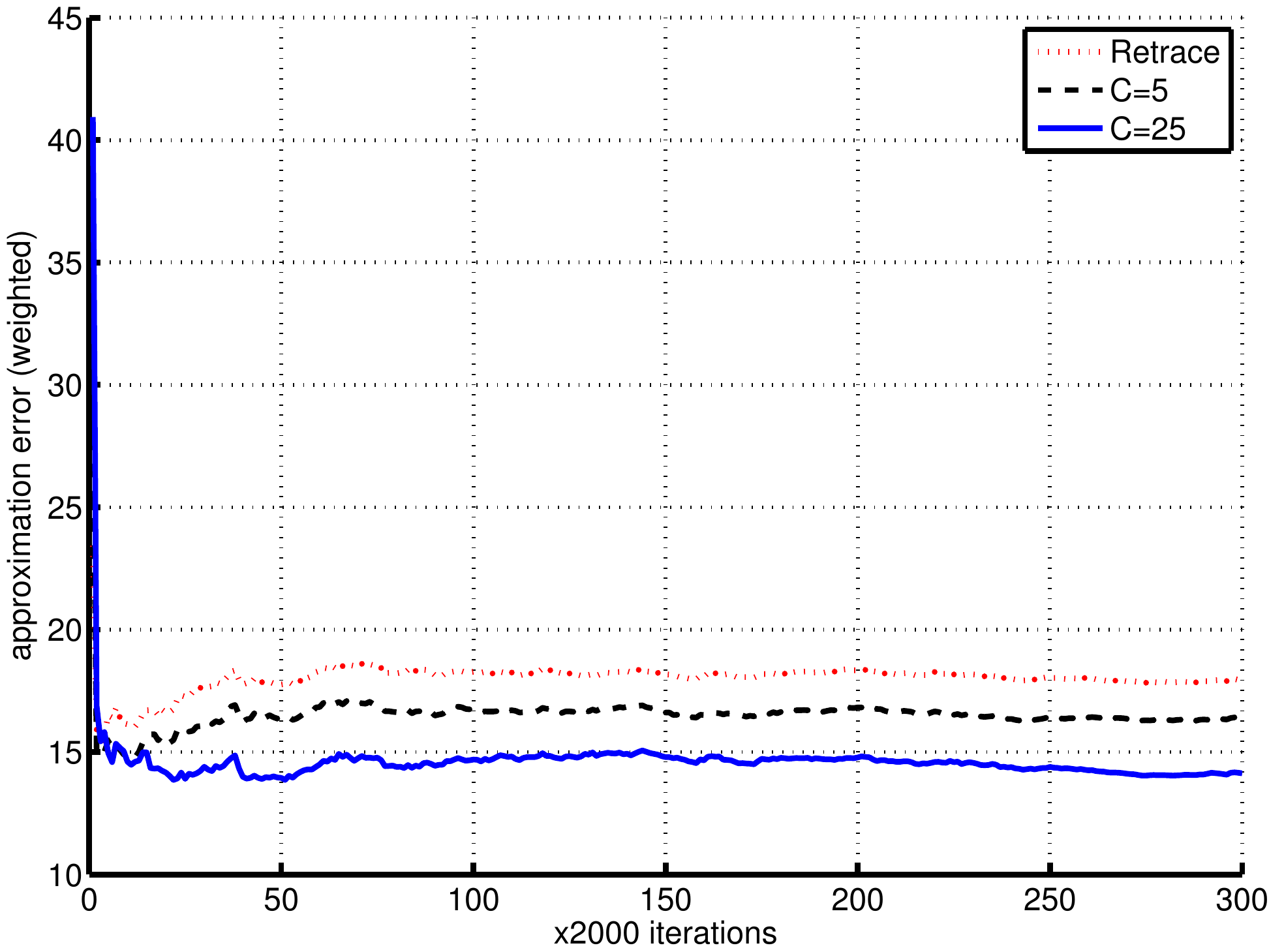} \quad
   \includegraphics[width=0.46\linewidth]{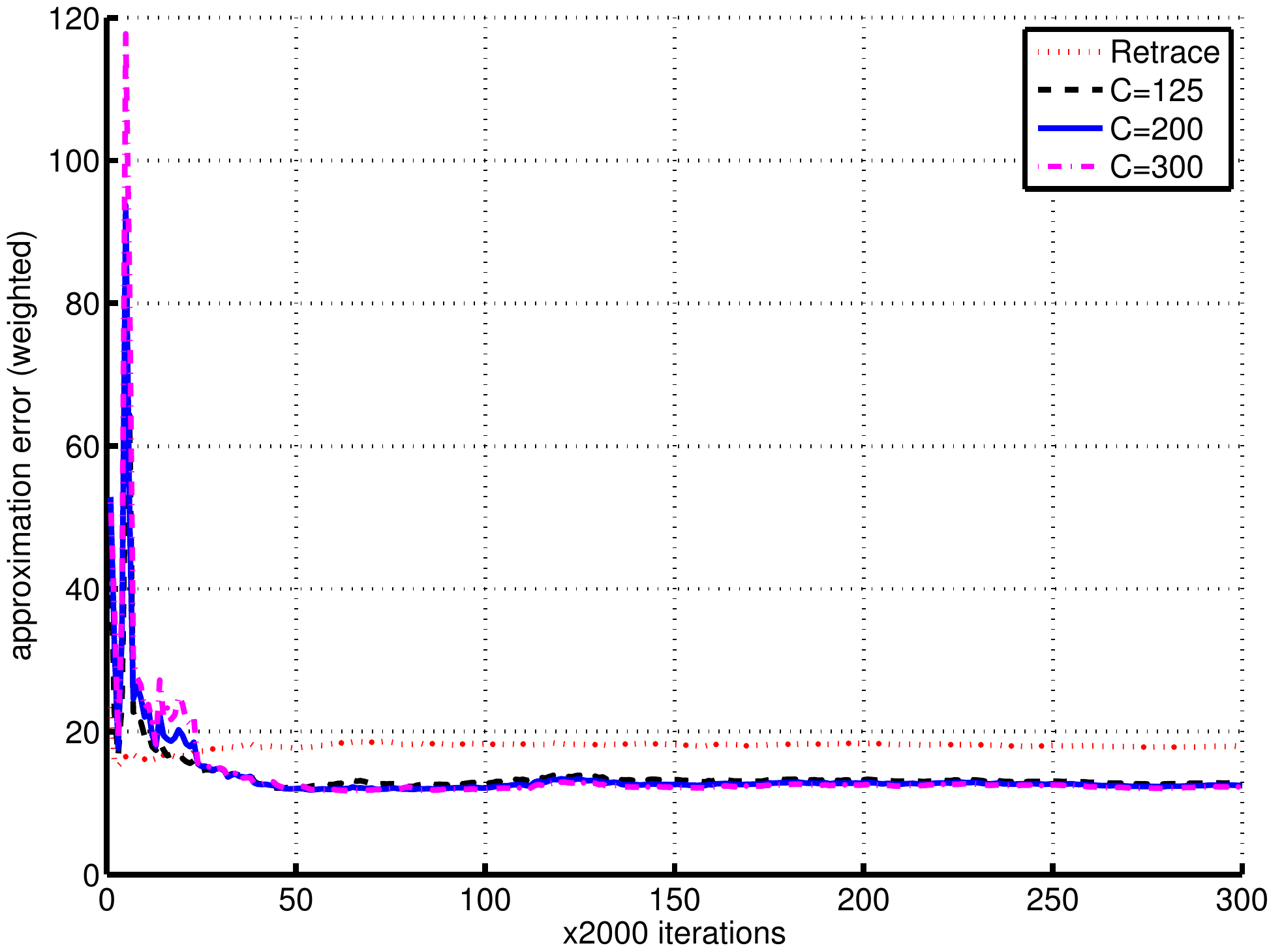}\vspace*{-0.2cm}
   \caption{Compare the temporal behavior of LSTD for different schemes of setting $\lambda$.} \label{fig-mcar4a}
\end{figure}%
\begin{figure}[!h]
   \centering
   \includegraphics[width=0.3285\linewidth]{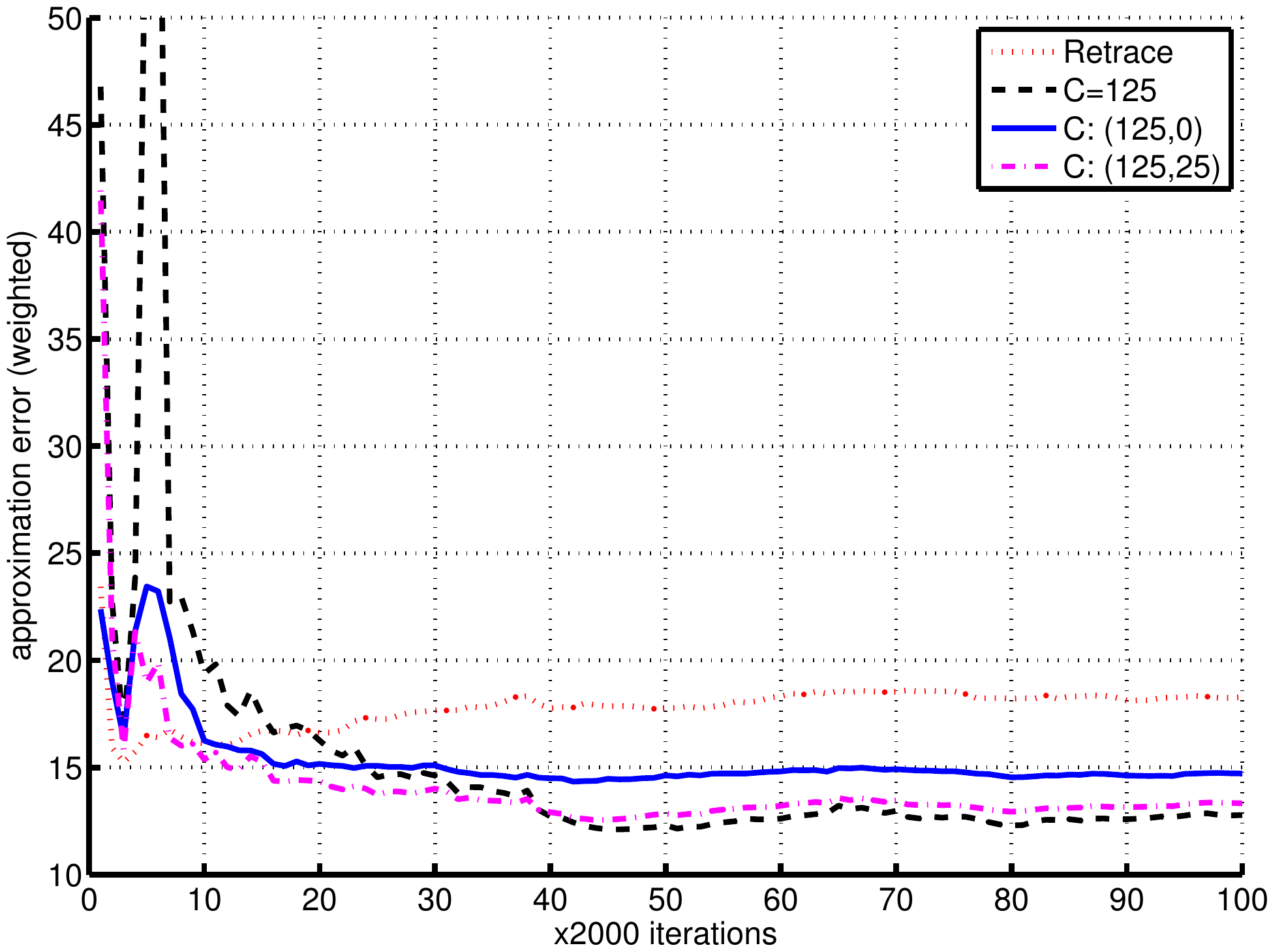} 
   \includegraphics[width=0.3285\linewidth]{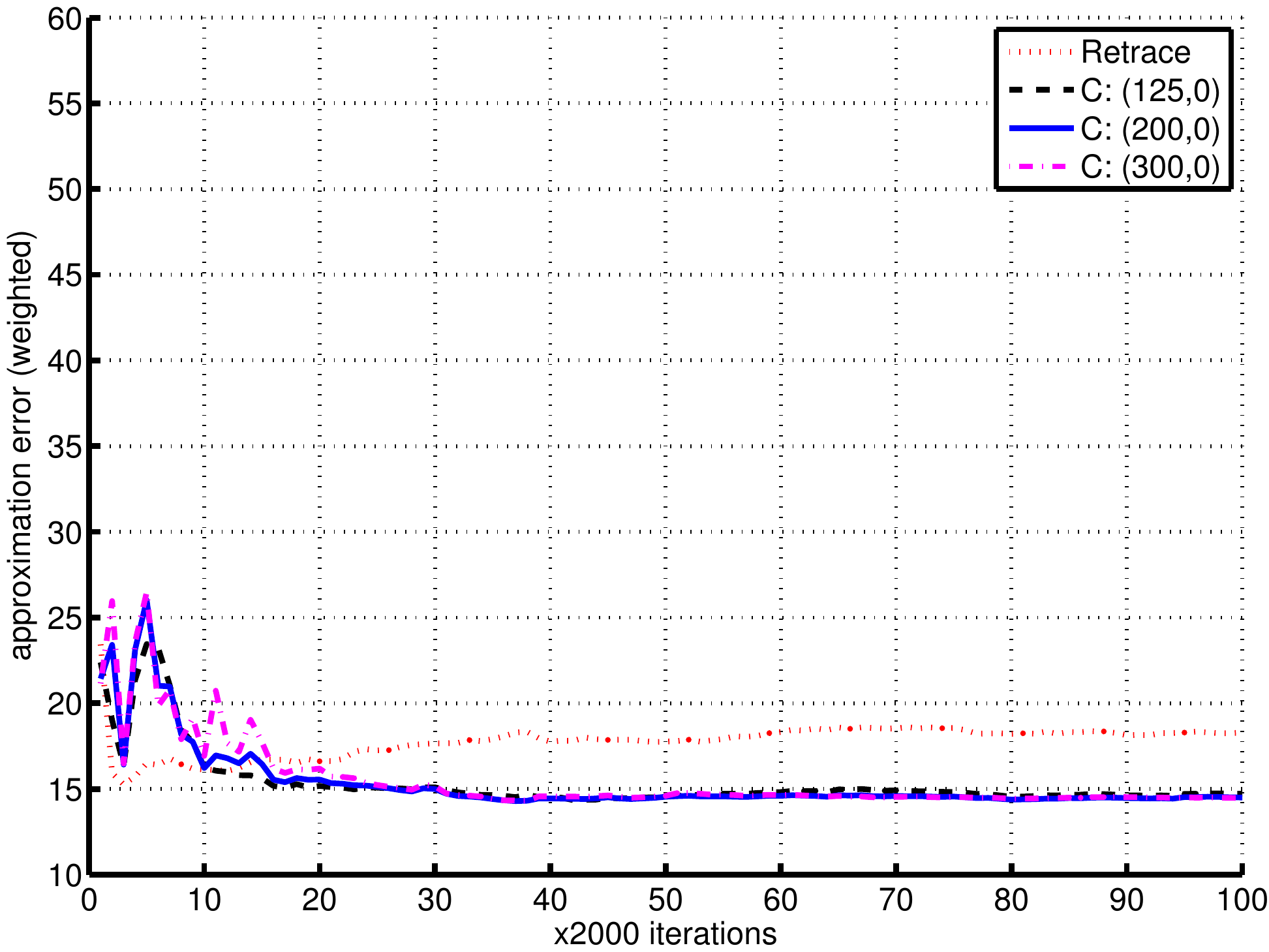}  
    \includegraphics[width=0.3285\linewidth]{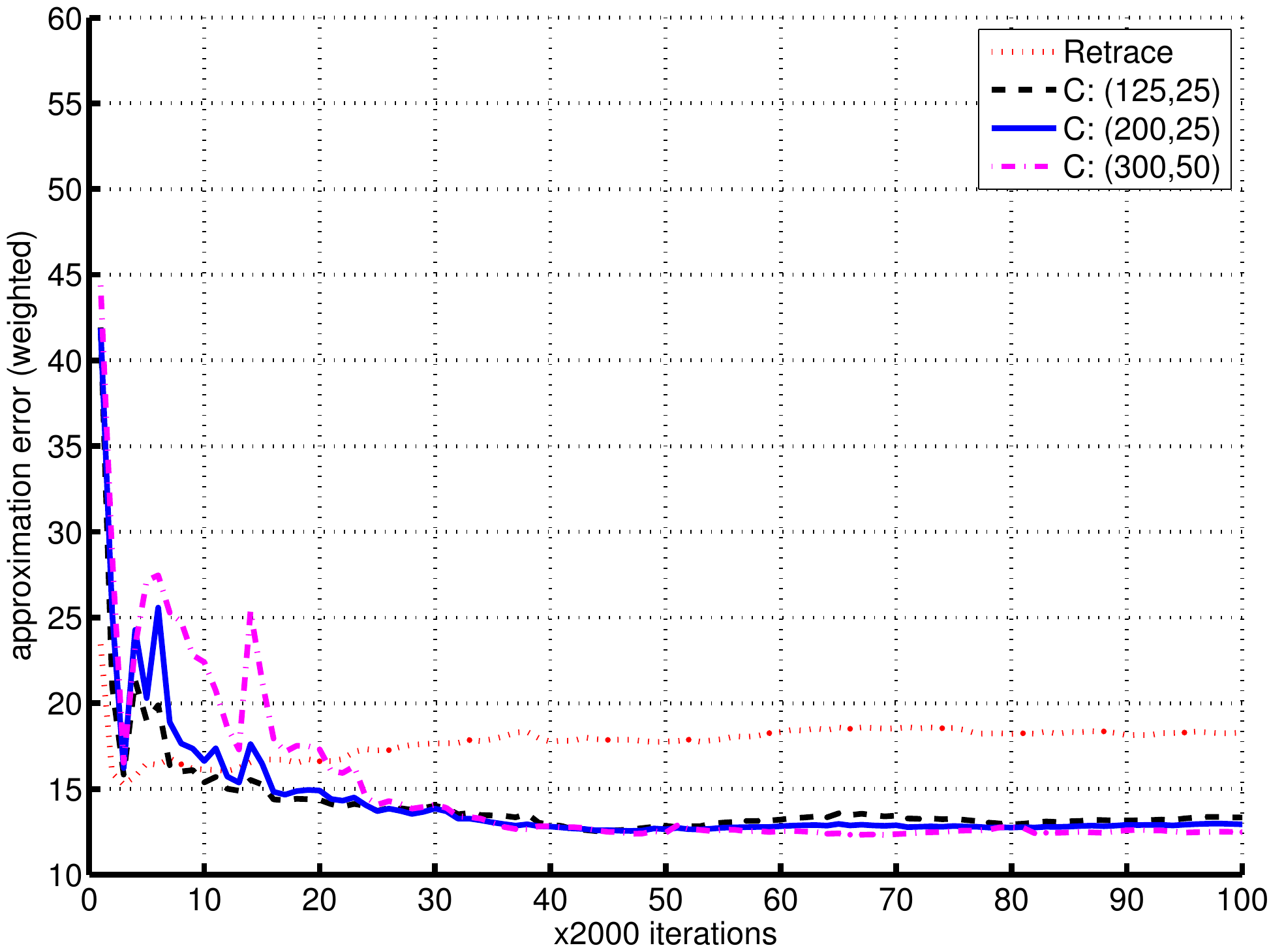}\vspace*{-0.2cm}
   \caption{Compare the temporal behavior of LSTD for different schemes of setting $\lambda$.} \label{fig-mcar4b}
\end{figure}%

Plotted in Figures~\ref{fig-mcar4a}-\ref{fig-mcar4b} are the approximation errors calculated per $2000$ effective iterations for each algorithm, during one of the $10$ experimental runs. 
Figure~\ref{fig-mcar4a} (left) shows how Retrace compares with the simple scaling with $C=5$ and $C=25$. It can be seen that the latter two achieved better approximation quality than Retrace without increases in variance. Figure~\ref{fig-mcar4a} (right) shows that for larger values of $C$, variances also became larger initially; however, after about $5 \times 10^4$ iterations, these schemes overtook Retrace, yielding better approximations. 

Figure~\ref{fig-mcar4b} shows how the composite scheme of setting $\lambda$ performs. 
Comparing the plots in this figure with the right plot in Figure~\ref{fig-mcar4a}, it can be seen that the composite schemes helped in reducing variances, and in about $2 \times 10^4$ iterations the schemes $C\colon(125,0)$ and $C\colon(125,25)$ overtook Retrace and yielded better approximations. Together with Figure~\ref{fig-mcar3b}, Figure~\ref{fig-mcar4b} shows clearly the bias-variance trade-off of using composite schemes in this problem.

\medskip
\noindent {\bf Second Experiment:} Similarly to the previous experiment, we now compare our proposed method with several other ways of setting $\lambda$ for the LSTD algorithm as well as with a constrained variant of LSTD: (i) Retrace with $\beta=1$ as before; (ii) constant $\lambda$; (iii) constrained LSTD with constant $\lambda$; 
and (iv) the simple scaling scheme with parameter $C$. For constant $\lambda \in [0,1]$, the constrained LSTD($\lambda$) used in this experiment evolves the trace vectors as off-policy LSTD($\lambda$) does, but it forms and solves the linear equation $\tfrac{1}{t} \sum_{k=0}^{t-1}[ \e_k ]_{50} \cdot \delta_k (v) = 0, v = \Phi \theta$ instead, where the function $[\cdot]_{50}$ truncates each component of the trace vector to be within the interval $[-50, 50]$. (Such an algorithm follows naturally from the ergodicity of the state-trace process and the approximation of an unbounded integrable function by a bounded one. For a detailed discussion, see \cite[Section 3.2]{etd-wkconv} or \cite[Section 3.3]{gtd-conv17}.)

\begin{figure}[thb]
\centering
   \vspace*{-0.3cm}\hspace*{-2cm} \includegraphics[width=1.24\linewidth]{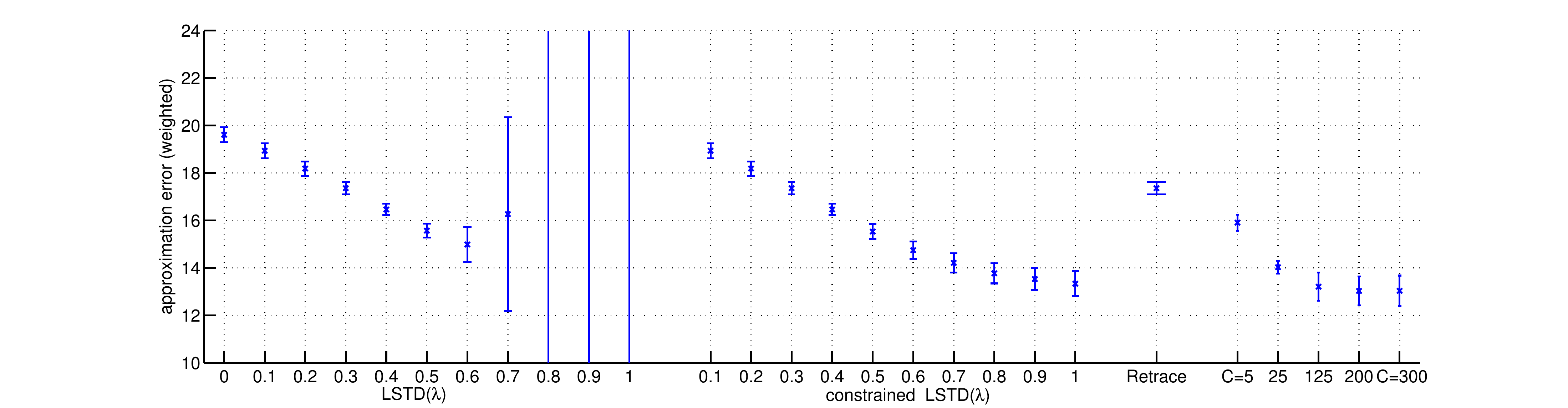}\vspace*{-0.3cm}
\caption{Compare the approximation errors of several LSTD algorithms.} \label{fig-mcar5}
\end{figure}%
\begin{figure}[thb]
   \centering
   \includegraphics[width=0.46\linewidth]{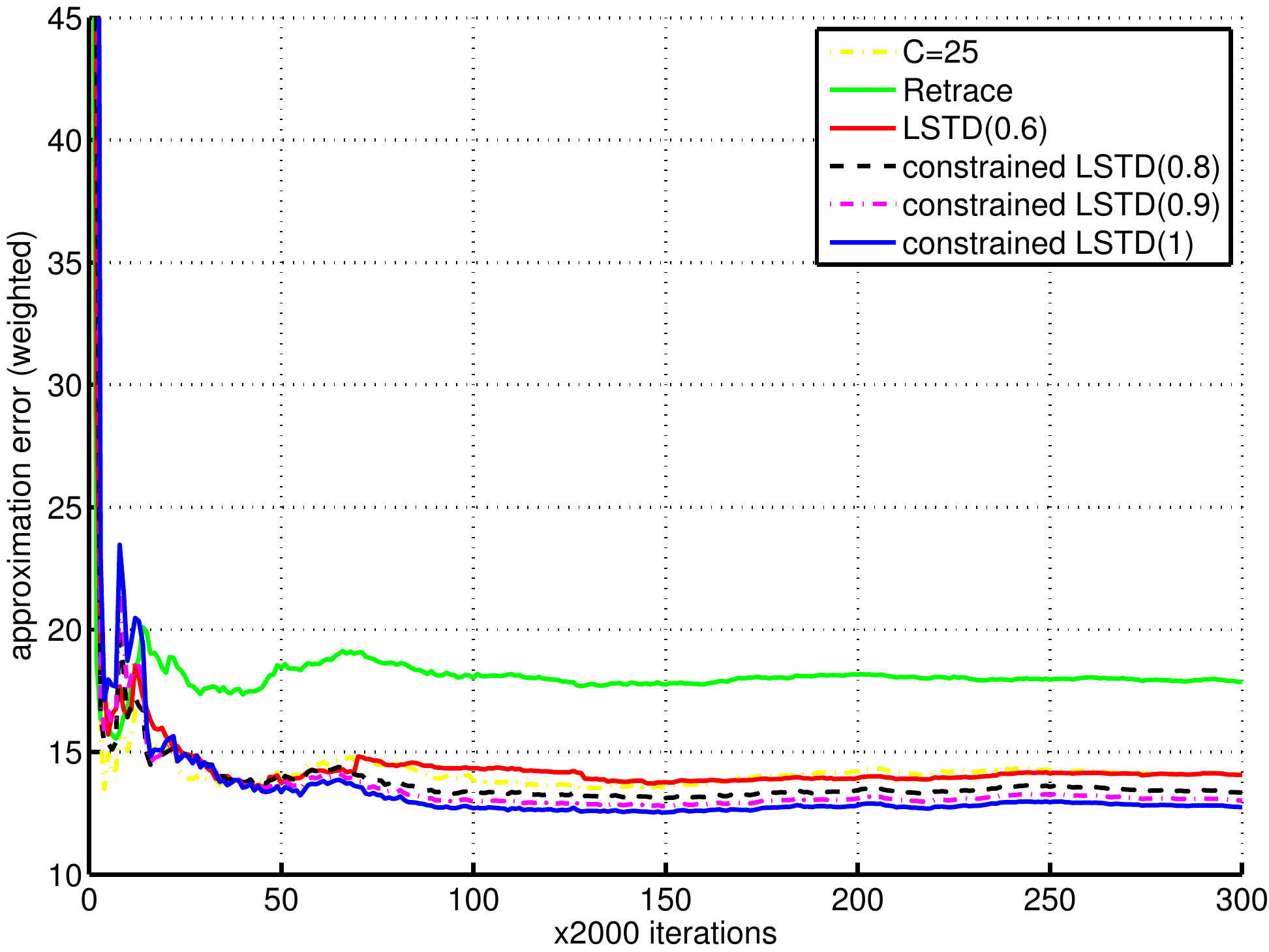} \qquad
   \includegraphics[width=0.46\linewidth]{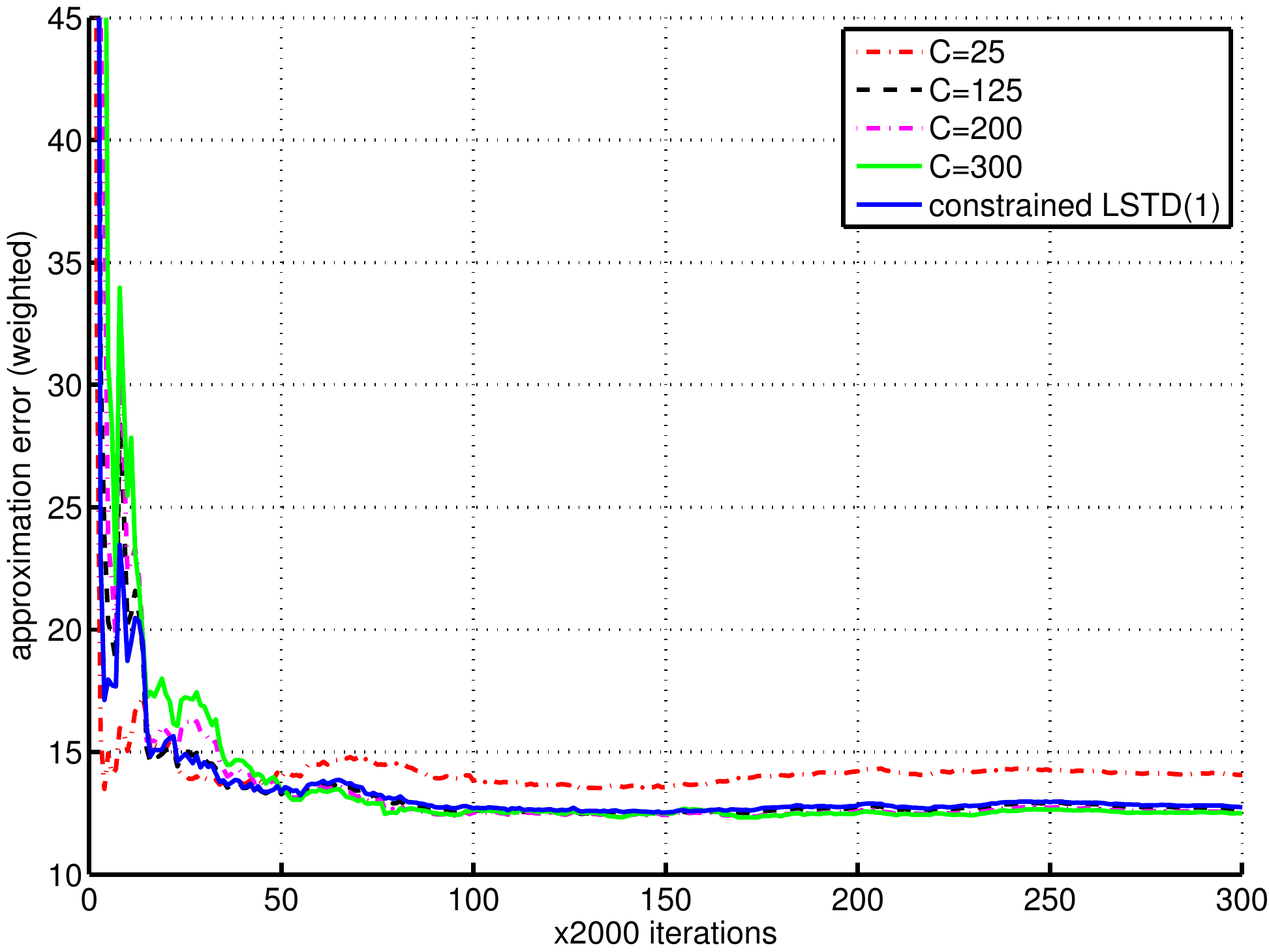} 
   \caption{Compare the temporal behavior of several LSTD algorithms.} \label{fig-mcar5b}
\end{figure}%

Figure~\ref{fig-mcar5} is similar to Figure~\ref{fig-mcar3b} and shows, for each algorithm, the mean and standard deviation of the approximation errors of $10$ approximate value functions obtained at the end of $10$ independent experimental runs (each of which consists of $6 \times 10^5$ effective iterations). The horizontal axis indicates the algorithms and their parameters.
As can be seen from this figure, for constant small $\lambda$, LSTD($\lambda$) performed well in this problem, and LSTD($0.3$) and Retrace are comparable.
For constant $\lambda > 0.7$, LSTD($\lambda$) failed to give sensible results, and LSTD($0.7$) started to show this unreliable behavior. This behavior of LSTD($\lambda$) is related to what we observed in Figure~\ref{fig-cmpsol}(right) in the small toy problem, and it is, we think, due to the high variance issue, which becomes more severe as $\lambda$ gets larger.
Constrained LSTD($\lambda$) is much more reliable, and it did consistently well for all values of $\lambda$ tested. LSTD with evolving $\lambda$ also did well, and with $C > 125$, it achieved a slightly better approximation quality than constrained LSTD($1$).

Figure~\ref{fig-mcar5b} compares the temporal behavior of several algorithms during one experimental run. Plotted are the approximation errors calculated per $2000$ iterations for each algorithm in the comparison. It can be seen that in this problem constrained LSTD($\lambda$) did not suffer from large variances even with large $\lambda$ values, and compared with constrained LSTD($1$), the behavior of LSTD with evolving $\lambda$ was also reasonable for large $C$.

\medskip
\noindent {\bf Third Experiment:} In this experiment we first compare the simple scaling scheme and Retrace, where both schemes now use an additional parameter $\beta \in [0, 1]$, as discussed in Examples~\ref{ex-scaling}-\ref{ex-retrace}. Recall that when $\beta=1$, they reduce to the schemes that we already compared in the previous experiments. For both schemes, as $\beta$ becomes smaller, we expect the approximation quality to drop but the variance to get smaller.

\begin{figure}[!t]
   \centering
   \includegraphics[width=0.3285\linewidth]{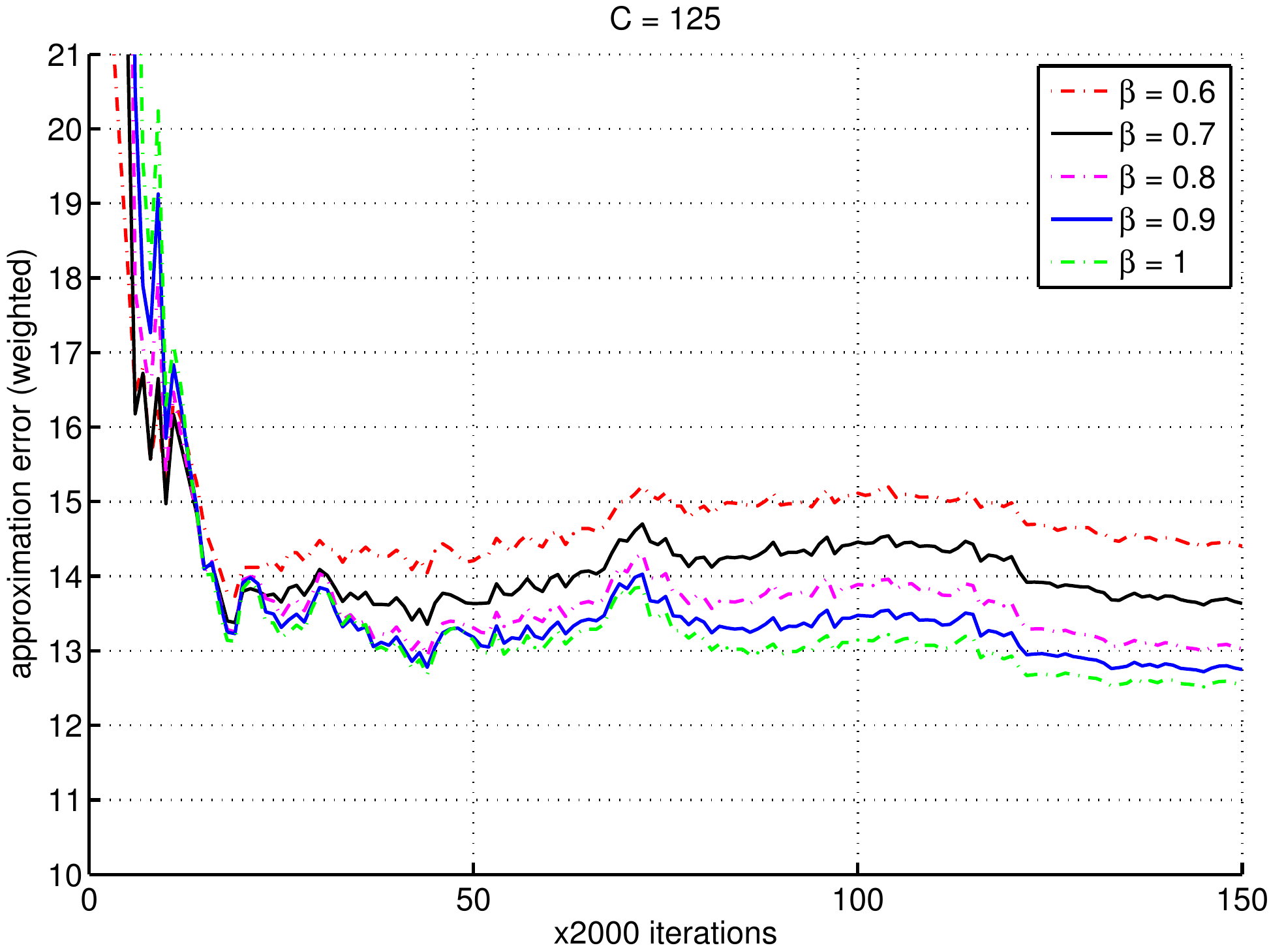} 
   \includegraphics[width=0.3285\linewidth]{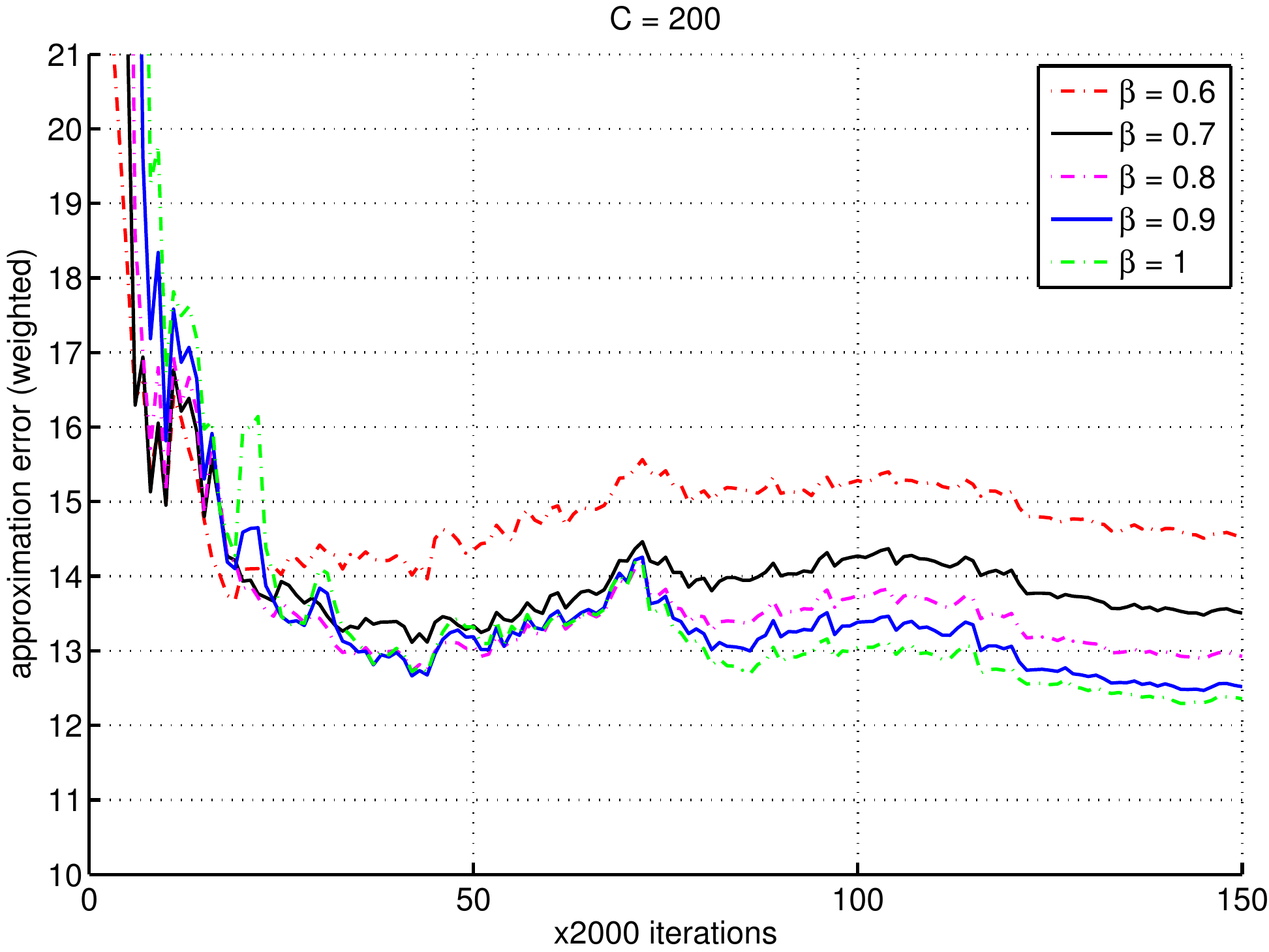}  
   \includegraphics[width=0.3285\linewidth]{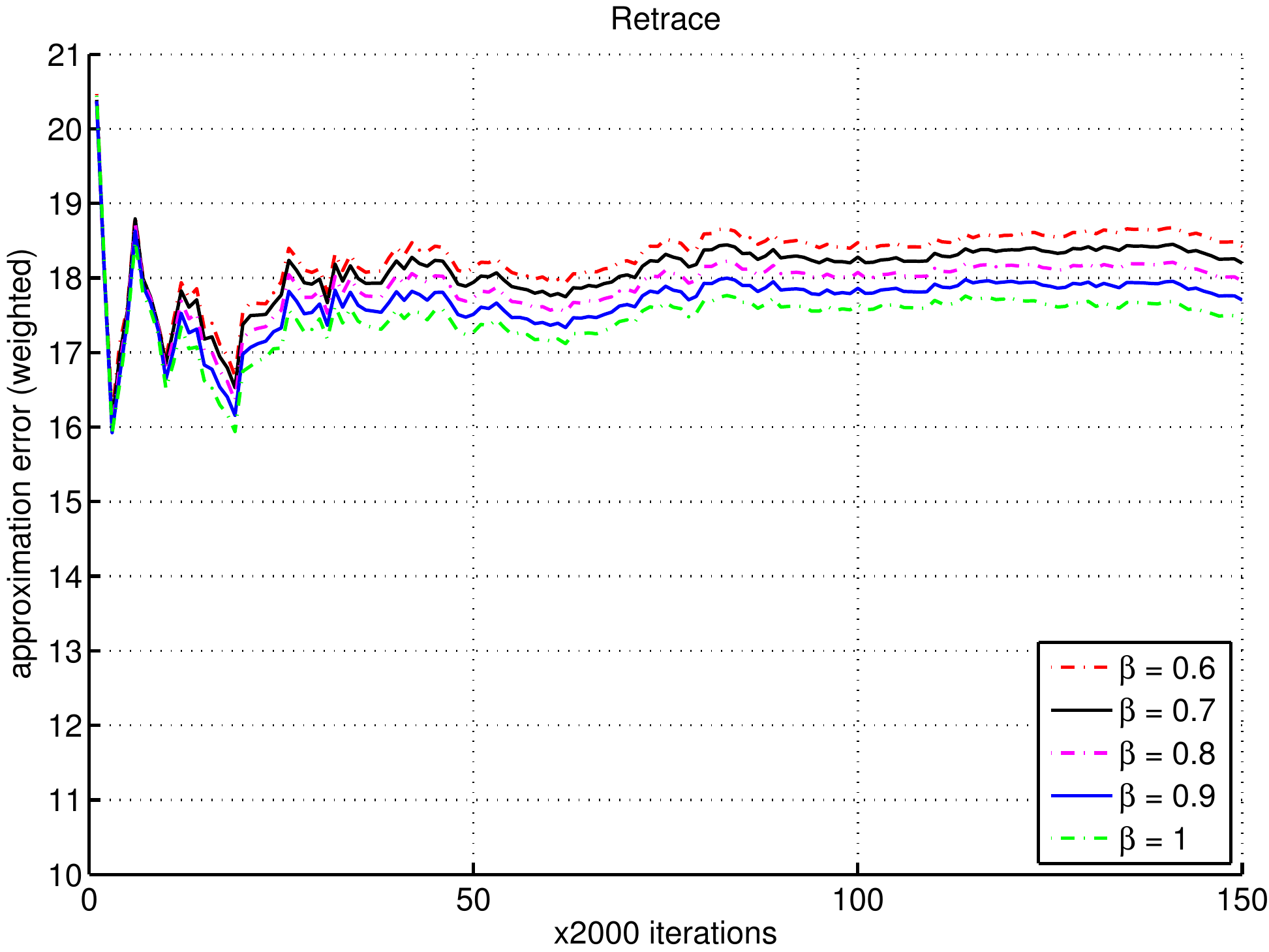}  
   \caption{Compare the approximation error and temporal behavior of LSTD for different schemes of setting $\lambda$. From left to right: $C=125$, $C=200$, Retrace.} \label{fig-mcar6}
\end{figure}%
\begin{figure}[!h]
   \centering
   \includegraphics[width=0.46\linewidth]{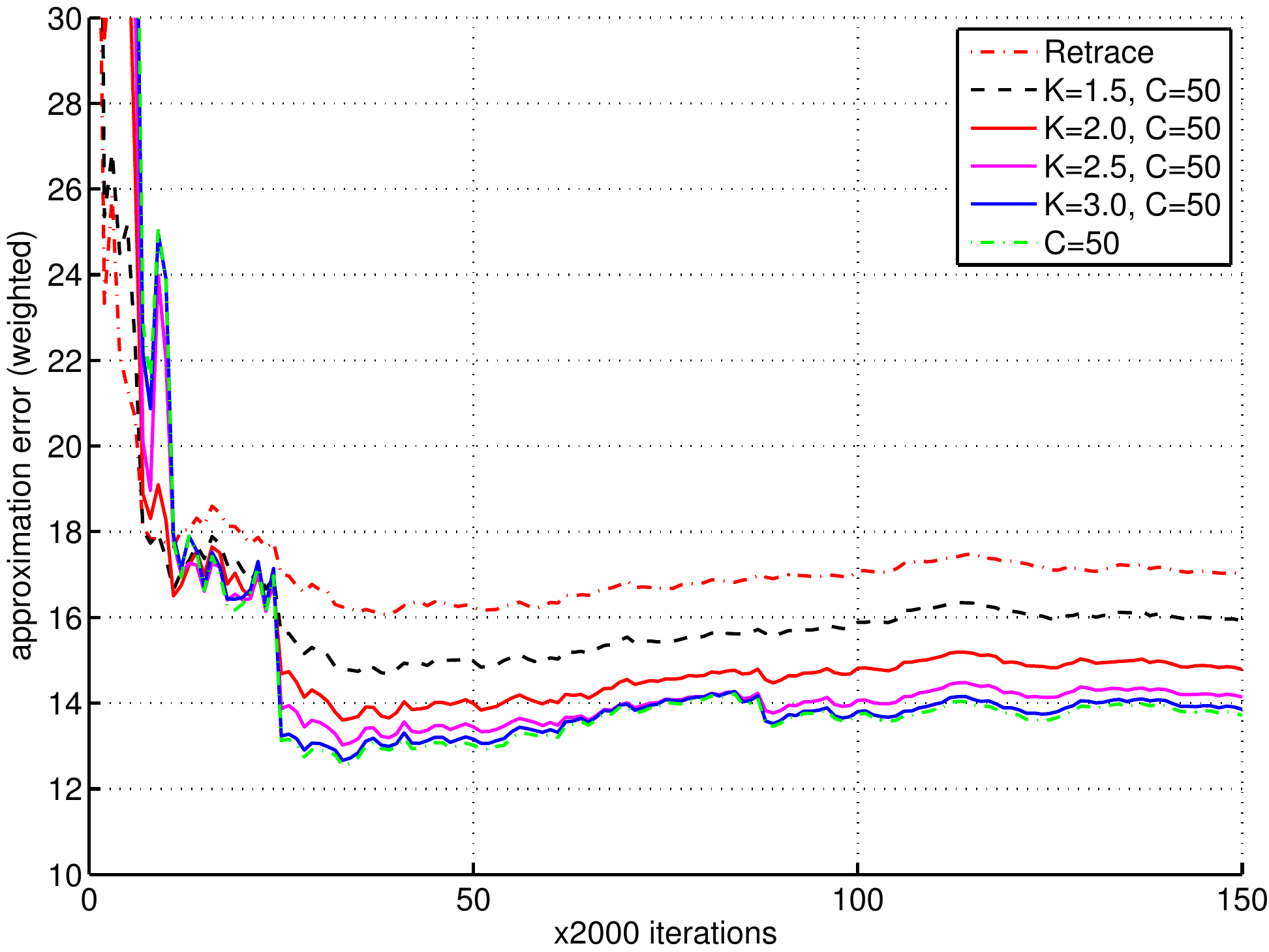} \qquad
   \includegraphics[width=0.46\linewidth]{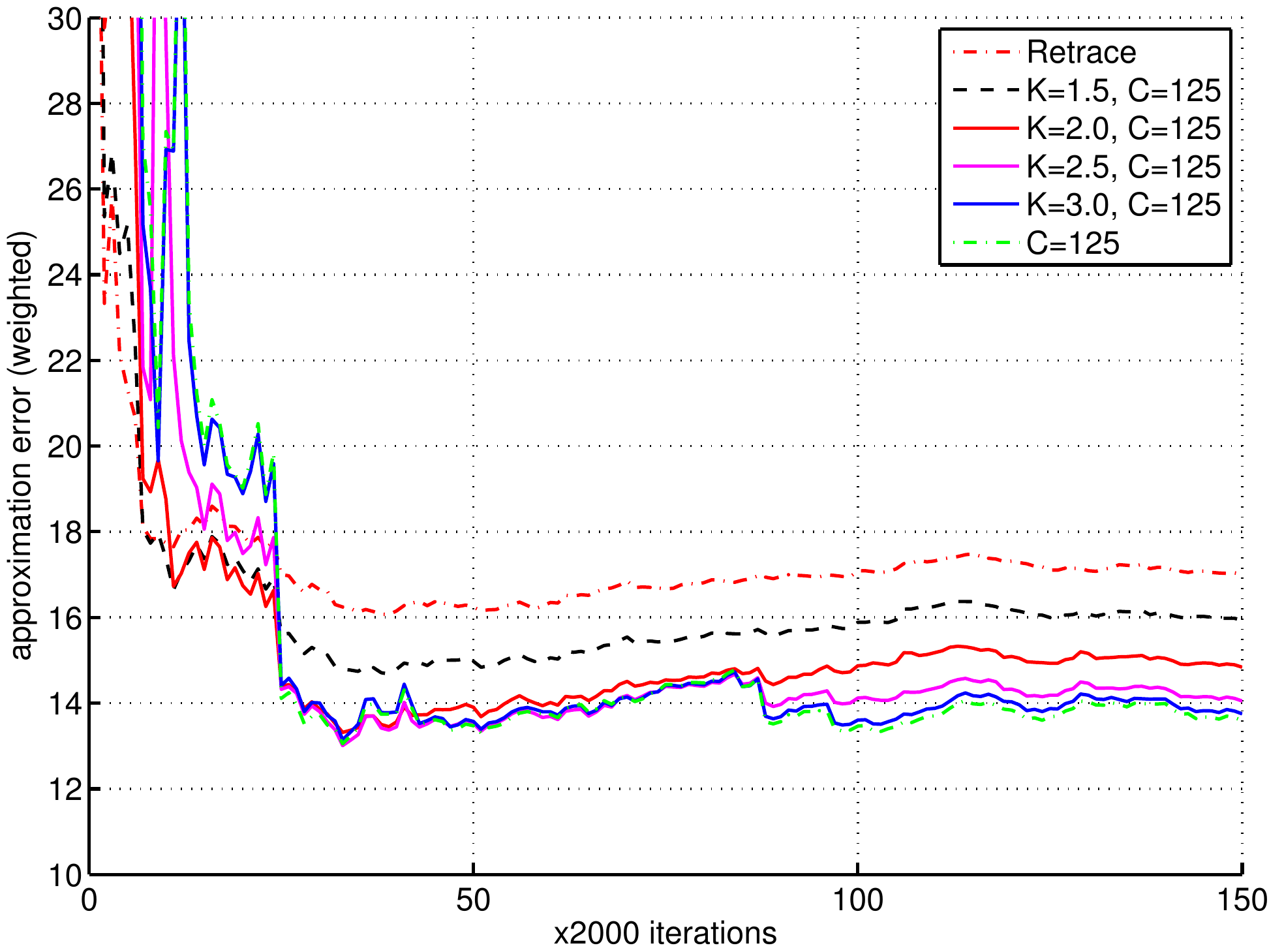} 
   \caption{Compare the approximation error and temporal behavior for some variations on Retrace.} \label{fig-mcar7}
\end{figure}%

Plotted in Figure~\ref{fig-mcar6} for $C=125, C=200$ and Retrace are the results obtained from a single experimental run consisting of $3 \times 10^5$ effective iterations. As before the approximation errors were calculated per $2000$ iterations. The results do show the expected bias-variance trade-off effects of the parameter $\beta$, during the initial period of the experimental run, although the effects on Retrace turned out to be smaller and hard to discern at the scale of the plot.

Next we test some of the variations on Retrace discussed in Example~\ref{ex-retrace}. Specifically, we consider (\ref{eq-var-retrace1})-(\ref{eq-var-retrace2}) with parameters $\beta=0.9$, $K \in \{1.5, 2.0, 2.5, 3.0\}$ and $C \in \{50, 125\}$. Plotted in Figure~\ref{fig-mcar7} are the results from one experimental run of $3 \times 10^5$ effective iterations. For comparison, the plots also show the behavior of Retrace and the simple scaling scheme with the same values of $C$ during that run (these algorithms used the same $\beta=0.9$). As expected and can be seen from the figure, the approximation quality improves with $K$ and $C$. While the variances also tend to increase during the initial part of the run, the variants that truncate the importance sampling ratios by $K=1.5$ performed comparably to Retrace initially, soon overtook Retrace and achieved better approximation quality.

\section{Conclusion} \label{sec-discussion}

We developed in this paper a new scheme of setting the $\lambda$-parameters for off-policy TD learning, using the ideas of randomized stopping times and generalized Bellman equations for MDPs. Like the two recently proposed algorithms Retrace \citep{offpolicytd-mshb} and ABQ \citep{abq}, our scheme keeps the traces bounded to reduce variances, but it is much more general and flexible. To study its theoretical properties, we analyzed the resulting state-trace process and established convergence and solution properties for the associated LSTD algorithm, and these results have prepared the ground for convergence analysis of the gradient-based implementation of our proposed scheme \citep{gtd-conv17}. In addition we did a preliminary numerical study. It showed that with the proposed scheme LSTD can outperform several existing off-policy LSTD algorithms. It also demonstrated that in order to achieve better bias-variance trade-offs in off-policy learning, it is helpful to have more flexibility in choosing the $\lambda$-parameters and to allow for large $\lambda$ values. 
Future research is to conduct a more extensive numerical study of both least-squares based and gradient-based algorithms, with more versatile ways of using the memory states and $\lambda$-parameters, in off-policy learning applications.

\section*{Acknowledgements}
An abridged version of this article appeared first at \emph{The $30$th Canadian Conference on Artificial Intelligence (CAI)}, May 2017, and a reformatted copy is to appear in \emph{Journal on Machine Learning Research (JMLR)}, 2018.
We thank several anonymous reviewers for CAI and JMLR for their helpful feedback. 
We also thank Dr.\ Ajin Joseph and Dr.\ Martha Steenstrup for reading parts of our manuscript and providing comments that helped improve our presentation. 
This research was supported by a grant from Alberta Innovates---Technology Futures. 


\bigskip
\bigskip

{
\addcontentsline{toc}{section}{Appendices}
\appendix
\appendixpage
\renewcommand{\setthesection}{\appendixname \Alph{section}} 

\section{Proof of Theorem \ref{thm-gbo}} \label{appsec-1}

We prove Theorem~\ref{thm-gbo} in this appendix. Recall from Section~\ref{sec-3.1} that the generalized Bellman operator $T$ associated with a randomized stopping time $\tau$ satisfies that $v_{\pi} = T v_{\pi}$.
By (\ref{eq-gbe1}), the substochastic matrix $\tilde P$ in this affine operator $T$ is given by
\begin{equation} \label{gbo-prf1}
   \tilde P_{ss'} = \E^\pi_s \big[  \gamma_1^{\tau} \, \I(S_{\tau} = s') \big], \qquad   s, s' \in \S.
\end{equation}
We can extend $\tilde P$ to a transition matrix $\tilde P^e$ by adding an additional absorbing state $\Delta$ to the system, so that $\tilde P^e_{\Delta\Delta} = 1$ and
\begin{equation}
    \tilde P^e_{s\Delta} = 1 - \textstyle{\sum_{s' \in \S} \tilde P_{ss'} } = 1 - \E^\pi_s \big[  \gamma_1^{\tau} \big], \qquad s \in \S. \notag
\end{equation}    

Both conclusions of Theorem~\ref{thm-gbo} will follow immediately if we show that $I - \tilde P$ is invertible. 
Indeed, if $I - \tilde P$ is invertible, then $v = Tv$ has a unique solution, which must be $v_\pi$ since $v_\pi$ is always a solution of this equation.
In addition, if $I - \tilde P$ is invertible, then since $\tilde P$ is a substochastic matrix, the spectral radius of $\tilde P$ must be less than $1$. 
Consider adding to $\tilde P$ a small enough perturbation $\epsilon M$, where $M$ is the matrix of all ones and $\epsilon$ is a sufficiently small positive number so that the spectral radius of $\tilde P + \epsilon M$ is less than $1$. 
Applying \cite[Theorem 1.1]{Sen06} to the nonnegative primitive matrix $\tilde P + \epsilon M$, we have that $(\tilde P + \epsilon M) w < w$ where $w$ is a positive eigenvector of the matrix $\tilde P + \epsilon M$ corresponding to a positive eigenvalue that is strictly less than $1$.
Consequently $ \tilde P w \leq  (\tilde P + \epsilon M) w < w$, implying that $\tilde P$ is a linear contraction w.r.t.\ a weighted sup-norm (with weights $w$), which is the second conclusion of the theorem.

Hence, to prove Theorem~\ref{thm-gbo}, it suffices to show that the inverse $(I - \tilde P)^{-1}$ exists, which is equivalent to that for the Markov chain on $\S \cup \{\Delta\}$ with transition matrix $\tilde P^e$, all the states in $\S$ are transient (see e.g., \citealt[Appendix A.4]{puterman94}).

We prove this by contradiction. Suppose it is not true, and let $\tilde \S \subset \S$ be a recurrent class of the Markov chain. 
Then for all $s \in \tilde \S$, $\tilde P^\e_{ss'} = 0$ for $s' \not\in \tilde \S$ (i.e., the submatrix of $\tilde \P$ corresponding to $\tilde \S$ is a transition matrix). In particular, $\tilde P^e_{s\Delta} = 0$,
so
\begin{equation} \label{gbo-prfa}
  \E^\pi_s \big[  \gamma_1^{\tau} \big] = 1 - \tilde P^e_{s\Delta} = 1, \qquad \forall \, s \in \tilde \S,
\end{equation}  
implying that given $S_0 \in \tilde \S$, $\gamma_1^{\tau} = 1$ a.s.\ (since $\gamma_1^\tau \in [0,1]$). 
Then by (\ref{gbo-prf1}),
\begin{equation} \label{gbo-prfb}
   \tilde P_{ss'} = \E^\pi_s \big[  \I(S_{\tau} = s') \big], \qquad \forall \, s, s' \in \tilde \S.
\end{equation}  

Observe from (\ref{gbo-prfa}) that given $S_0 \in \tilde \S$, the event $\{\tau = \infty\}$ has zero probability. This is because under Condition~\ref{cond-pol}(i), $\gamma_1^t \asto 0$ (as $t \to \infty$) and consequently $\gamma_1^\infty : = \prod_{t=1}^\infty \gamma_t = 0$ a.s. Indeed, the existence of the inverse $(I - \P \Gamma)^{-1} = \sum_{t = 0}^\infty (\P \Gamma)^t$ under Condition~\ref{cond-pol}(i) implies $(\P \Gamma)^t \to 0$ as $t \to \infty$. Since the $s$th entry of $(\P \Gamma)^t \1$ equals $\E^\pi_s [ \gamma_1^t]$, we have $\E^\pi_s [ \gamma_1^t] \to 0$ as $t \to \infty$. As the nonnegative sequence $\{\gamma_1^t\}_{t \geq 1}$ is nonincreasing (since each $\gamma_t \in [0,1]$), this implies, by Fatou's lemma \cite[Lemma 4.3.3]{Dud02}, that $\gamma_1^t \asto 0$. Thus if $S_0 \in \tilde \S$, $\tau$ is almost surely finite. 

We now consider a Markov chain $\{S_t\}$ with transition matrix $\P$ and $S_0 \in \tilde \S$. We will extract from it a Markov chain $\{\tilde S_k\}_{k \geq  0}$ with transition matrix $\tilde P^e$ on the recurrent class $\tilde \S$, by employing multiple stopping times. We will show $\gamma_1^\infty = 1$ a.s., contrary to the fact $\gamma_1^\infty = 0$ a.s.\ just discussed.

For ease of explanation, let us imagine that there is a device that if we give it a sequence of states $S_0, S_1, \ldots$ generated according to $\P$, it will output the stopping decision at a random time $\tau$ that is exactly the randomized stopping time $\tau$ associated with the operator $T$. (Because $\tau$ is a randomized stopping time for $\{S_t\}$, a device that correctly implements $\tau$ does not affect the evolution of $\{S_t\}$, so we can give $\{S_t\}$ to the device as inputs.) 

Let $\{S_t\}$ start from some state $S_0=s \in \tilde \S$. 
We use the device just mentioned to generate the first randomized stopping time $\tau_1$. 
We then reset the device so that it now ``sees'' the time-shifted process $\{S_{\tau_1+t'} \mid t' \geq 0\}$ with the initial state being $S_{\tau_1}$.
We wait till the device makes another stopping decision, and we designate that time by $\tau_2$. 
We repeat this procedure as soon as the device makes yet another stopping decision. This gives us a nondecreasing sequence 
$0 \leq \tau_1 \leq \tau_2 \leq \cdots$.

Since $S_0 = s \in \tilde \S$, by what we proved earlier, $\tau_1$ is almost surely finite. Let $\tilde S_1 = S_{\tau_1}$ and $\tilde S_0 = S_0$. 
The transition from $\tilde S_0$ to $\tilde S_1$ is according to the transition matrix $\tilde P^e$ by construction (cf.\ (\ref{gbo-prfb})).
Since $\tilde \S$ is a recurrent class for $\tilde P^e$, we must have $\tilde S_1 \in \tilde \S$ almost surely. 
Then, repeating the same argument and using induction, we have that almost surely, for all $k \geq 1$, $\tau_k$ is defined and finite and $\tilde S_k : = S_{\tau_k} \in \tilde \S$.
Thus we obtain an infinite sequence $\{\tilde S_k\}$, which is a recurrent Markov chain on $\tilde \S$ with its transition matrix given by the corresponding submatrix of $\tilde P^e$.

Now in view of (\ref{gbo-prfa}), almost surely, 
\begin{equation} \label{gbo-prfc}
   \gamma_1^{\tau_1} = 1, \quad \gamma_{\tau_1+1}^{\tau_2} = 1, \ \ \cdots, \ \  \gamma_{\tau_k+1}^{\tau_{k+1}} = 1, \ \ \cdots
\end{equation}   
(recall that if $\tau_{k} + 1 > \tau_{k+1}$, $\gamma_{\tau_k+1}^{\tau_{k+1}} = 1$ by definition). 
Consider first a simpler case of the randomized stopping time $\tau$ that defines $T$: for any initial state $S_0$, $\tau \geq 1$ a.s. Then, $\{\tau_k\}$ is strictly increasing, and by multiplying the variables in (\ref{gbo-prfc}) together, we have $\gamma_1^\infty = 1$ a.s. 
For the general case of $\tau$ assumed in the theorem, $\tau = 0$ is possible, but $\Pr^\pi(\tau \geq 1 \mid S_0 = s) > 0$ for all states $s \in \S$.
This means that the event of $\tau_k$ being the same for all $k$ greater than some (random) $\bar k$ has probability zero. So $\{\tau_k\}$ must converge to $+\infty$ almost surely, and we again obtain, by multiplying the variables in (\ref{gbo-prfc}) together, that $\gamma_1^\infty = 1$ a.s. 
This contradicts the fact proved earlier, namely, that for any initial state $S_0$, $\gamma_1^\infty = 0$ a.s. So the assumption of a recurrent class $\tilde \S \subset \S$ for $\tilde P^e$ must be false. This proves Theorem~\ref{thm-gbo}.

\section{Oblique Projection Viewpoint and Error Bound for TD} \label{appsec-oblproj}

In this appendix we first explain Scherrer's interpretation of TD solutions as oblique projections~\citep{bruno-oblproj}, and we then give approximation error bounds for TD similar to those given by \cite{yb-errbd}, which do not rely on contraction properties. We will explain these properties of TD in the context of generalized Bellman operators discussed in this paper. Although this was not the framework used in \citep{bruno-oblproj,yb-errbd} and our setup here is more general than the one discussed in those previous papers, the arguments and reasoning are essentially the same.

\subsection{Solutions of TD as Oblique Projections of the Value Function} \label{sec-B.1}

Let us start with the projected Bellman equation associated with TD/LSTD that we noted in Remark~\ref{rem-1}:
$$T v - v \perp_{\zeta_\S} \L_\phi, \ \ \  v \in \L_\phi,$$
where $T$ is a generalized Bellman operator with $v_\pi$ as its unique fixed point, and $\L_\phi$ is the approximation subspace. 
We can write this equation equivalently as
\begin{equation} \label{eq-projbellman1}
   v = \Pi_{\zeta_\S} T v,
\end{equation}
where $\Pi_{\zeta_\S}$ denotes the projection onto the approximation subspace $\L_\phi$ with respect to the $\zeta_\S$-weighted Euclidean norm.
In the subsequent derivations, we will not use the fact that $\zeta_\S$ is the invariant probability measure induced by the behavior policy on $\S$, so the analyses we give in this appendix apply to any weighted Euclidean norm $\| \cdot\|_{\zeta_\S}$.

Scherrer \citeyearpar{bruno-oblproj} first realized that the solution of the projected Bellman equation (\ref{eq-projbellman1}) can be viewed as an \emph{oblique projection of the value function $v_\pi$} on the approximation subspace $\L_\phi$. This viewpoint provides an intuitive geometric interpretation of the TD solution and explains conceptually the source of its approximation bias. Analytically, this view also gives tight bounds on the approximation bias, as we will elaborate later in Section~\ref{sec-B.2}.

An oblique projection is defined by two nonorthogonal subspaces of equal dimensions: it is the projection onto the first subspace orthogonally to the second \citep{saad}, as illustrated in Figure~\ref{fig-oblproj}. More precisely, for any two $n$-dimensional subspaces $\L_1, \L_2$ of $\re^N$ such that no vector in $\L_2$ is orthogonal to $\L_1$, there is an associated \emph{oblique projection operator} $\Pi_{\L_1\L_2} : \re^N \to \L_1$ defined by
\begin{equation}
   \Pi_{\L_1\L_2} \, x \in \L_1, \qquad x - \Pi_{\L_1\L_2} \, x \perp \L_2, \qquad \forall \, x \in \re^N. 
\end{equation}   
If the two subspaces are the same: $\L_1=\L_2$ or if $x$ lies in $\L_1$, then the oblique projection $\Pi_{\L_1\L_2} x$ is the same as $\Pi_{\L_1} x$, the orthogonal projection of $x$ onto $\L_1$. In general this need not be the case and $\Pi_{\L_1\L_2} \, x \not= \Pi_{\L_1} x$ typically (cf.\ Figure~\ref{fig-oblproj}). A matrix representation of the projection operator $\Pi_{\L_1\L_2}$ is given by 
\begin{equation} \label{rep-oblproj}
  \Pi_{\L_1\L_2} = \Phi_1 \big(\Phi_2^\top \Phi_1 \big)^{-1} \Phi_2^\top,
\end{equation}  
where $\Phi_1$ and $\Phi_2$ are $N \times n$ matrices whose columns form a basis of $\L_1$ and $\L_2$, respectively (see~\citealt[Chap.\ 1.12]{saad}). For comparison, a matrix representation of the orthogonal projection operator $\Pi_{\L_1}$ is $\Pi_{\L_1} = \Phi_1 \big(\Phi_1^\top \Phi_1 \big)^{-1} \Phi_1^\top$. (Below we will use the same notation for a projection operator and its matrix representations.) 

\begin{figure}[!t] 
   \centering
   \includegraphics[width=0.5\linewidth]{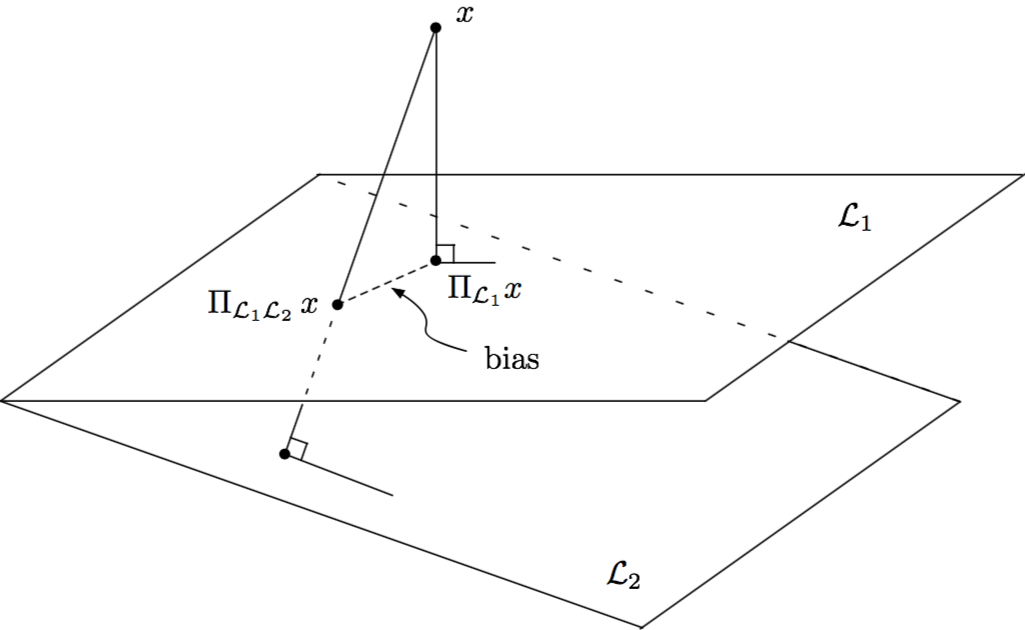} 
   \caption{Oblique projection of $x$ onto the subspace $\L_1$ orthogonally to the subspace $\L_2$.} 
   \label{fig-oblproj}
\end{figure}

Back to the projected Bellman equation~(\ref{eq-projbellman1}), let us assume it has a unique solution $\va$ and express $\va$ in terms of $v_{\pi}$. We have $\va = \Pi_{\zeta_\S} T \va$.
By Theorem~\ref{thm-gbo}, $v_{\pi}$ is the unique solution of $v = Tv$. Recall that the generalized Bellman operator $T$ is affine and can be expressed as $T v = \tilde r_\pi+ \tilde P_\pi v$ for some vector $\tilde r_\pi$ and substochastic matrix $\tilde P_\pi$.
It follows that $\tilde r_\pi = (I - \tilde P_\pi) \,v_\pi$ and
\begin{equation} \label{eq-vtd1}
\big(I - \Pi_{\zeta_\S} \tilde P_\pi \big) \, \va = \Pi_{\zeta_\S} \tilde r_\pi =  \Pi_{\zeta_\S} \big(I - \tilde P_\pi \big) \, v_\pi. 
\end{equation}
Let the columns of $\Phi$ form a basis of the approximation subspace $\L_\phi$, and let $D$ be the diagonal matrix with $\zeta_{\S}$ as its diagonal elements. Then a matrix representation of the projection operator $\Pi_{\zeta_\S}$ is 
$\Pi_{\zeta_\S}  = \Phi \big( \Phi^\top D \, \Phi \big)^{-1} \Phi^\top D.$ 
Using this representation of $\Pi_{\zeta_\S}$ and the fact $\va \in \L_\phi$, we obtain from (\ref{eq-vtd1}) an expression of $\va$ in terms of $v_\pi$:
\begin{equation} \label{eq-vtd2}
  \va = \Phi \big( \Phi^\top D (I - \tilde P_\pi) \, \Phi \big)^{-1} \Phi^\top D (I - \tilde P_\pi) \, v_\pi.
\end{equation}
(Here the invertibility of the matrix $\Phi^\top D (I - \tilde P_\pi) \Phi$ is equivalent to our assumption that $\va$ is the unique solution of (\ref{eq-projbellman1}).)

Let us compare the expression (\ref{eq-vtd2}) with (\ref{rep-oblproj}). We see that if the geometry on $\Re^N$ is determined by the usual Euclidean norm $\| \cdot\|_2$, then $\va$ is an oblique projection of $v_\pi$ for the two subspaces $\L_1 = \L_\phi$ and $\L_2 = \text{column-space}\big((I - \tilde P_\pi)^\top D \Phi\big)$. 

Alternatively, consider the case where the geometry on $\Re^N$ is determined by some weighted Euclidean norm $\| \cdot\|_{\xi}$ with weights $\xi$ (for example, $\xi=\zeta_\S$, which is one of the cases of interest in off-policy learning). In this case we can first scale each coordinate by the square root $\sqrt{\xi(s)}$ of its weight to reduce the case to that of the norm $\|\cdot\|_2$. Specifically, let $\perp_{\xi}$ denote orthogonality with respect to $\|\cdot\|_\xi$, and let $\Xi$ denote the diagonal matrix that has $\xi$ as its diagonal elements.
For the linear mapping $h: v \mapsto \Xi^{1/2} v$, we have
\begin{equation} \label{eq-hscale}
 \| v\|_{\xi} = \| h(v) \|_2, \qquad \text{and} \quad v_1 \perp_{\xi} v_2 \ \ \ \Leftrightarrow \ \ \ h(v_1) \perp h(v_2).
\end{equation}   
The second relation means that $\bar v$ is an oblique projection of $v$ for two subspaces $\L_1, \L_2$ with respect to $\|\cdot\|_{\xi}$ 
(i.e., $\bar v \in \L_1$ and $ v - \bar v \perp_{\xi} \L_2$), 
if and only if $h(\bar v)$ is an oblique projection of $h(v)$ for the two subspaces $h(\L_1), h(\L_2)$ with respect to $\|\cdot\|_2$:
$$ h(\bar v) \in h(\L_1), \qquad h(v) - h(\bar v) \perp h(\L_2).$$

With these facts in mind, we rewrite (\ref{eq-vtd2}) equivalently as follows:
\begin{align}
 \Xi^{1/2}  \va & = \Xi^{1/2} \Phi \cdot \big( \Phi^\top D  (I - \tilde P_\pi) \, \Xi^{-1} \Xi^{1/2}  \cdot \Xi^{1/2} \Phi \big)^{-1} \cdot \Phi^\top D (I - \tilde P_\pi) \, \Xi^{-1} \Xi^{1/2}  \cdot \Xi^{1/2}  v_\pi \notag \\
 \Longrightarrow \quad h(\va) & = h(\Phi) \cdot (h(\Phi_2)^\top \cdot h(\Phi) \big)^{-1} h(\Phi_2)^\top \cdot h(v_\pi),  \label{eq-vtd3a}
\end{align}
where 
\begin{equation}
  \Phi_2 = \Xi^{-1}  (I - \tilde P_\pi)^\top D  \Phi \label{eq-vtd3b}
\end{equation}  
and $h$ applied to a matrix denotes the result of applying $h$ to each column of that matrix. 
Comparing (\ref{eq-vtd3a}) with (\ref{rep-oblproj}), we see that $h(\va)$ is an oblique projection of $h(v_\pi)$ with respect to $\| \cdot \|_2$ for the two subspaces 
$h(\L_1)$, $h(\L_2)$, where
\begin{equation} \label{eq-vtd3c}
 \L_1 = \L_\phi= \text{column-space}(\Phi), \qquad \L_2 = \text{column-space}(\Phi_2).
\end{equation}
So based on the discussion earlier, with respect to the weighted Euclidean norm $\| \cdot\|_{\xi}$ on $\Re^N$, $\va$ is an oblique projection of $v_\pi$ for the two subspaces $\L_1, \L_2$.

Note that by (\ref{eq-vtd3b}), the second subspace $\L_2$ defining the above oblique projection is the image of the approximation subspace $\L_\phi$ under the linear transformation $\Xi^{-1}  (I - \tilde P_\pi)^\top \!D$. Thus $\L_2$ \emph{depends on the dynamics induced by the target policy as well as the generalized Bellman operator $T$ that we choose}. Relating the oblique projection interpretation of $\va$ to Figure~\ref{fig-oblproj}, we can see where the approximation bias of TD, $\va - \Pi_{\xi} v_\pi$, comes from.

\subsection{Approximation Error Bound} \label{sec-B.2}

We now consider the approximation error of $\va$ and use the oblique projection viewpoint to derive a sharp bound on the approximation bias $\va - \Pi_{\xi} v_\pi$.
Before proceeding, however, let us first remind the reader that unless the norm $\|\cdot\|_{\zeta_\S}$ for the projection operator $\Pi_{\zeta_\S}$ is purposefully chosen, the composition of $\Pi_{\zeta_\S}$ with a generalized Bellman operator $T$ is usually not a contraction, and thus error bounds for projected generalized Bellman equations usually cannot be obtained with contraction-based arguments. This is the case even for on-policy learning, as the following example shows.

\begin{myexample}[Non-contractive $\Pi_{\zeta_\S} T$] \rm \label{counter-ex-contraction}
If the target policy $\pi$ induces an irreducible Markov chain with invariant probability measure $\zeta_\S$, and if $T$ is the Bellman operator for TD($\lambda$) with a constant $\lambda$, then, as Tsitsiklis and Van Roy \citeyearpar{tr-disc} showed, $\Pi_{\zeta_\S} T$ is a contraction operator w.r.t.\ the weighted Euclidean norm $\| \cdot \|_{\zeta_\S}$. Consequently, the matrix $\Phi^\top D (\tilde P_\pi - I) \, \Phi$ associated with the TD($\lambda$) algorithm is negative definite \citep{tr-disc}. The derivation of the contraction property of $\Pi_{\zeta_\S} T$ in this case relies critically on the inequality $\zeta_{\S}^\top \tilde P_{\pi}  < \zeta_{\S}^\top$. 
This inequality generally does not hold for the substochastic matrix $\tilde P_{\pi}$ in the generalized Bellman operator $T$, when $\lambda$ is not constant. 
So, for non-constant $\lambda$, we can no longer expect $\Pi_{\zeta_\S} T$ to be a contraction or the matrix $\Phi^\top D (\tilde P_\pi - I) \, \Phi$ to be negative definite. 

As an example, consider a simple two-state problem in which the system under the target policy $\pi$ moves from one state to another in a cycle. 
Let $\pi^o=\pi$, let the discount factor $\gamma$ be a constant, and let $\lambda$ be a function of states with $\lambda(1) = 0$, $\lambda(2) = 1$.
Then $\zeta_{\S}^\top=(0.5, 0.5)$ and
$\tilde P_\pi = \left( \begin{matrix} 
      \gamma^2 & 0 \\
      \gamma & 0 
      \end{matrix} \right)$.
For $\gamma$ near $1$, e.g., $\gamma = 0.95$, and for $\Phi$ as given below, we can calculate the $\zeta_\S$-weighted norm of $\Pi_{\zeta_\S} \tilde P_\pi$ and the matrix associated with TD($\lambda$):
\begin{align*}
& \Phi  = \left( \begin{matrix} 
      3 & 1 \\
      1 & 1 
      \end{matrix} \right), \qquad    \big\| \Pi_{\zeta_\S} \tilde P_\pi \big\|_{\zeta_\S} =  \big\| \tilde P_\pi \big\|_{\zeta_\S} \approx 1.31 > 1,  \\      
&  \Phi^\top D (\tilde P_{\pi} - I) \,\Phi  = \left( \begin{matrix} 
      0.4862  &  -0.1713 \\
   0.7787   &  -0.0738
      \end{matrix} \right). 
\end{align*}      
The latter matrix is not negative definite; in fact, its eigenvalues have positive real parts, so it is not even a Hurwitz matrix (TD($\lambda$) can diverge in this case). In the above, we have $\Pi_{\zeta_\S} \tilde P_\pi = \tilde P_\pi$, and while $\Pi_{\zeta_\S} \tilde P_\pi$ is not a contraction w.r.t.\ $\| \cdot\|_{\zeta_\S}$, it is still a contraction w.r.t.\ some matrix norm. 
If we now let $\Phi = (3, 1)^\top$ instead, then the spectral radius of the matrix $\Pi_{\zeta_\S} \tilde P_\pi$ comes out as $\sigma(\Pi_{\zeta_\S} \tilde P_\pi) \approx  1.10  > 1$, so $\Pi_{\zeta_\S} \tilde P_\pi$ (and hence $\Pi_{\zeta_\S} T$) cannot be a contraction w.r.t.\ any matrix norm. 
\qed \end{myexample}

We now proceed to bound the bias term $\va - \Pi_{\xi} v_\pi$ relative to $\| v_\pi - \Pi_{\xi} v_\pi\|_{\xi}$, the distance between $v_\pi$ and the approximation subspace measured with respect to $\|\cdot\|_\xi$. It is more transparent to derive the bound for the case of a general oblique projection operator $\Pi_{\L_1\L_2}$ with respect to the usual Euclidean norm $\| \cdot\|_2$, so let us do that first and then use the linear transformation $h(\cdot)$ to translate the result to TD, as we did earlier in the preceding subsection.

We bound the bias $\| \Pi_{\L_1\L_2} x - \Pi_{\L_1} x \|_2$ relative to $\| x - \Pi_{\L_1} x\|_2$, by calculating
\begin{equation} \label{eq-k1}
 \k : =  \sup_{x \in \re^N} \frac{\| \Pi_{\L_1\L_2} x - \Pi_{\L_1} x \|_2}{\| x - \Pi_{\L_1} x\|_2} = \sup_{x \in \re^N} \frac{\| \Pi_{\L_1\L_2} (x - \Pi_{\L_1} x ) \|_2}{\| x - \Pi_{\L_1} x\|_2} 
\end{equation} 
(where we treat $0/0=0$). This constant $\k$ depends on the two subspaces $\L_1$, $\L_2$, and reflects the ``angle'' between them. 
It has several equivalent expressions, e.g.,
\begin{equation}  \label{eq-k2}
    \k =  \sup_{x \perp \L_1, \, \| x \|_2 = 1} \| \Pi_{\L_1\L_2} x \|_2 = \| \Pi_{\L_1\L_2} (I - \Pi_{\L_1}) \|_2,
\end{equation}
or with $\sigma(F)$ denoting the spectral radius of a square matrix $F$,
\begin{equation}
\qquad \k = \sqrt{ \sigma \Big( \Pi_{\L_1\L_2} (I - \Pi_{\L_1})  \cdot (I - \Pi_{\L_1})^\top \, \Pi_{\L_1\L_2}^\top \Big)} = \sqrt{ \sigma \big( \Pi_{\L_1\L_2}\Pi_{\L_2\L_1} - \Pi_{\L_1} \big)}.
  \label{eq-k3}
\end{equation}
(After the definition of $\k$, each expression of $\k$ in (\ref{eq-k1})-(\ref{eq-k3}) follows from the preceding one; in particular, for the last expression in (\ref{eq-k3}), we used the fact that $\Pi_{\L_1}^\top  = \Pi_{\L_1}, \Pi_{\L_1\L_2}^\top = \Pi_{\L_2\L_1}$, and $\Pi_{\L_1} \Pi_{\L_2\L_1} = \Pi_{\L_1}$.)

We can express $\k$ in terms of the spectral radius of an $n \times n$ matrix, similarly to what was done in \citep{yb-errbd}. In particular, we take the last expression of $\k$ in (\ref{eq-k3}) and rewrite the symmetric matrix in that expression using the matrix representations of the projection operators as follows:
$$ \Pi_{\L_1\L_2}\Pi_{\L_2\L_1} - \Pi_{\L_1} = \Phi_1 (\Phi_2^\top \Phi_1)^{-1} \Phi_2^\top \cdot \Phi_2 (\Phi_1^\top \Phi_2)^{-1} \Phi_1^\top - \Phi_1 (\Phi_1^\top \Phi_1)^{-1} \Phi_1^\top.  $$
By a result in matrix theory~\cite[Theorem 1.3.20]{hj85-matrix}, for any $N \times n$ matrix $F_1$ and $n \times N$ matrix $F_2$,
$\sigma(F_1 F_2) = \sigma(F_2 F_1)$. Applying this result to the preceding expression with $F_1 = \Phi_1$, we have that $\sigma \big(\Pi_{\L_1\L_2}\Pi_{\L_2\L_1} - \Pi_{\L_1}\big)$ is equal to the spectral radius of the matrix
\begin{align*}
    (\Phi_2^\top \Phi_1)^{-1} (\Phi_2^\top \Phi_2) (\Phi_1^\top \Phi_2)^{-1} (\Phi_1^\top \Phi_1) - I. 
\end{align*}    
Combing this with (\ref{eq-k3}), we obtain that
\begin{equation} \label{eq-k4}
   \k^2 =  \sigma(F) - 1, \qquad \text{where} \ \ \ F= (\Phi_2^\top \Phi_1)^{-1} (\Phi_2^\top \Phi_2) (\Phi_1^\top \Phi_2)^{-1} (\Phi_1^\top \Phi_1).
\end{equation}   
Thus, for the above $\k$, the bound below holds for all $x \in \re^N$ and with equality attained at some $x$:
\begin{equation} \label{eq-k4b}
  \| \Pi_{\L_1\L_2} x - \Pi_{\L_1} x \|_2 \leq \k \, \| x - \Pi_{\L_1} x\|_2.
\end{equation} 

We now translate the result (\ref{eq-k4})-(\ref{eq-k4b}) to our TD context. We want to bound the relative bias $\big\|\va - \Pi_{\xi} v_\pi \big\|_{\xi}/\big\| v_\pi - \Pi_{\xi} v_\pi \big\|_{\xi}$. As discussed earlier in Section~\ref{sec-B.1}, we can replace $\| \cdot\|_\xi$ with $\|\cdot\|_2$ by using the linear transformation $h(\cdot)$ to scale the coordinates. In particular, by the two relations given in (\ref{eq-hscale}),
\begin{equation} \label{eq-errbd0}
  \frac{\big\|\va - \Pi_{\xi} v_\pi \big\|_{\xi}}{\big\| v_\pi - \Pi_{\xi} v_\pi \big\|_{\xi}} = \frac{\big\| h(\va) - \Pi \, h(v_\pi) \big\|_2}{\big\| h(v_\pi) - \Pi \, h(v_\pi) \big\|_2},
\end{equation}  
where $\Pi$ on the r.h.s.\ stands for the orthogonal projection onto $h(\L_\phi)$ with respect to $\|\cdot\|_2$.
As shown by (\ref{eq-vtd3a}), $h(\va)$ is an oblique projection of $h(v_\pi)$ for the two subspaces $h(\L_\phi)$ and $h(\L_2)$, 
where $\L_2$ is given by (\ref{eq-vtd3b})-(\ref{eq-vtd3c}). 
According to (\ref{eq-k4}), the constant $\k$ for this oblique projection is $\sqrt{\sigma(F) -1}$ 
where, if we take $\Phi_1 = \Phi$ and $\Phi_2$ as defined by (\ref{eq-vtd3b}), $F$ is now given by the expression in (\ref{eq-k4}) with $h(\Phi_1), h(\Phi_2)$ in place of $\Phi_1, \Phi_2$, respectively.
Thus, by (\ref{eq-k4b}) and (\ref{eq-errbd0}) we obtain that
\begin{equation} \label{eq-errbd}
  \big\|\va - \Pi_{\xi} v_\pi \big\|_{\xi} \leq \k \, \big\| v_\pi - \Pi_{\xi} v_\pi \big\|_{\xi} \qquad \text{for} \ \ \k =  \sqrt{\sigma(F) -1},
\end{equation}
where $F$ is an $n\ \times n$ matrix given by
\begin{equation}
 F= \big[h(\Phi_2)^\top h(\Phi_1)\big]^{-1} \cdot \big[h(\Phi_2)^\top h(\Phi_2)\big] \cdot \big[h(\Phi_1)^\top h(\Phi_2)\big]^{-1} \cdot \big[h(\Phi_1)^\top h(\Phi_1)\big]
\end{equation}
for
\begin{equation}
   \Phi_1 = \Phi, \qquad \Phi_2 = \Xi^{-1}  (I - \tilde P_\pi)^\top D  \Phi,
\end{equation}  
or more explicitly, after substituting the expressions of $h(\Phi_1)$ and $h(\Phi_2)$ in the formula of $F$ and removing $h$, we have 
$$F = \big(\Psi^\top \Phi\big)^{-1} \big(\Psi^\top \Xi^{-1} \Psi\big) \big(\Phi^\top \Psi\big)^{-1} \big(\Phi^\top \Xi \Phi\big), \quad \text{where} \ \ \ 
 \Psi = (I - \tilde P_\pi)^\top D  \Phi.$$

Note that by the definition of $\k$, the bound (\ref{eq-errbd}) is a worst-case bound that depends only on the two subspaces involved in the oblique projection operator. 
In other words, given the approximation subspace $\L_\phi$, the dynamics described by $\tilde P_\pi$, the projection norm $\|\cdot\|_{\zeta_\S}$ and the norm $\|\cdot\|_\xi$ for measuring the approximation quality, the bound (\ref{eq-errbd}) is attained by a worst-case choice of the rewards $\tilde r_\pi$ for the target policy. In this sense, the bound (\ref{eq-errbd}) is tight.

} 

\addcontentsline{toc}{section}{References}
\bibliographystyle{apa}
\let\oldbibliography\thebibliography
\renewcommand{\thebibliography}[1]{%
  \oldbibliography{#1}%
  \setlength{\itemsep}{0pt}%
}
{\fontsize{9}{11} \selectfont
\bibliography{gbe_final_bib}
}

\end{document}